\documentclass[10pt,twocolumn]{article}

\usepackage[margin=1in]{geometry}
\usepackage{microtype}

\usepackage{amsmath,amssymb,mathtools,amsthm}
\usepackage{bm}
\usepackage{dsfont}
\usepackage{mathrsfs}

\usepackage{graphicx}
\usepackage{subcaption}
\usepackage{booktabs}
\usepackage{float}
\usepackage{authblk}

\usepackage[authoryear]{natbib}

\usepackage{tcolorbox}
\usepackage{enumitem}

\usepackage{algorithm}
\usepackage{algorithmic}

\usepackage[textsize=tiny]{todonotes}

\usepackage[dvipsnames]{xcolor} 
\usepackage[
  colorlinks=true,
  linkcolor=Red, 
  citecolor=Blue,     
  urlcolor=RoyalBlue      
]{hyperref}
\usepackage[capitalize,noabbrev]{cleveref}

\theoremstyle{plain}
\newtheorem{theorem}{Theorem}[section]
\newtheorem{proposition}[theorem]{Proposition}
\newtheorem{lemma}[theorem]{Lemma}

\theoremstyle{definition}

\newtheorem{assumption}[theorem]{Assumption}

\theoremstyle{remark}
\newtheorem{remark}[theorem]{Remark}

\title{Learning to Allocate Resources with Censored Feedback}

\author[1,3]{Giovanni Montanari}
\author[1,3]{Côme Fiegel}
\author[1,2,3]{Corentin Pla}
\author[4]{Aadirupa Saha}
\author[1,2,3]{Vianney Perchet}

\affil[1]{FairPlay Joint Team, Inria, France}
\affil[2]{Criteo AI Lab, Paris, France}
\affil[3]{CREST, ENSAE, Institut Polytechnique de Paris}
\affil[4]{University of Illinois, Chicago (UIC)}

\date{} 

\begin{document}

\maketitle

\begin{abstract}
We study the online resource allocation problem in which at each round, a budget $B$ must be allocated across $K$ arms under censored feedback. An arm yields a reward if and only if two conditions are satisfied: (i) the arm is activated according to an arm-specific Bernoulli random variable with unknown parameter, and (ii) the allocated budget exceeds a random threshold drawn from a parametric distribution with unknown parameter. Over  $T$ rounds, the learner must jointly estimate the unknown parameters and allocate the budget so as to maximize cumulative reward facing the exploration--exploitation trade-off. We prove an information-theoretic regret lower bound $\Omega(T^{1/3})$, demonstrating the intrinsic difficulty of the problem. We then propose \texttt{RA-UCB}, an optimistic algorithm that leverages non-trivial parameter estimation and confidence bounds. When the budget B is known at the beginning of each round, \texttt{RA-UCB} achieves a regret of order $\widetilde{\mathcal{O}}(\sqrt{T})$, and even $\mathcal{O}(\mathrm{poly}\text{-}\log T)$ under stronger assumptions. As for unknown, round dependent budget, we introduce \texttt{MG-UCB}, which allows within-round switching and infinitesimal allocations, and matches the regret guarantees of \texttt{RA-UCB}. We then validate our theoretical results through experiments on real-world datasets. 
\end{abstract}

\section{Introduction}
We consider the problem of allocating a limited resource budget across a finite set of options in order to maximize a given objective function. This problem is central in both theoretical and applied computer science, as it captures a wide range of real-world scenarios. Classically, allocations are computed once, from (possibly partial) system information, without adapting over time. The problem becomes substantially more challenging when allocations must be made online, i.e., sequentially over time, under uncertainty and partial feedback. In such settings, decisions must balance exploration, to learn unknown characteristics of the system, and exploitation, to optimize performance based on current knowledge. This naturally calls for online learning tools that update decisions from feedback.

In online advertising \citep{CTR-fondation,AdWords,click-model}, the resource corresponds to user attention--measured in time, impressions, or display opportunities--which must be allocated sequentially across ads with unknown effectiveness to maximize engagement metrics such as clicks or conversions. The main difficulty is uncertainty: ads appeal differently to users and their performance is learned only through noisy and sparse feedback. Since users arrive over time, decisions must be made online, continually balancing exploration of poorly known ads with exploitation of the most promising ones.
In mobile networks \citep{JULIAN}, bandwidth, power, or time slots must be allocated online to optimize throughput or latency, but feedback is often censored because failures may reflect either insufficient resources or unobserved channel conditions. Similarly, in cloud computing \citep{cloud}, providers allocate computational resources such as GPU time, memory, or storage to competing tasks under uncertain workloads, where incomplete performance feedback makes it difficult to disentangle resource insufficiency from external sources of variability. Beyond these classical domains, analogous challenges appear in education, where limited instructional time must be allocated across questions or exercises of varying difficulty to maximize learning outcomes under uncertainty about student knowledge, \citep{ed-1,ed-2,ed-3} and in monitoring, and protection tasks where time, personnel, or equipment must be assigned across locations to maximize detection, protection, or risk mitigation, based on partial or delayed feedback \citep{HEGAZY,CURTINpoliceplacement, NGUYENpoliceplacement, GHOLAMIpoliceplacement}. Across these domains, resources are limited and feedback is incomplete, motivating adaptive online allocation.

\subsection{Related Works}
Early work on resource allocation focused on offline models with known parameters, computing optimal static allocations under uncertainty in applications such as project management \citep{HEGAZY}, wireless communication \citep{JULIAN}, and multi-resource systems with fairness constraints \citep{JOE-WONG}. Subsequent work considered online resource allocation with sequential arrivals under known parameters \citep{DEVANUR,Mehta-bis}, but these frameworks do not address the exploration--exploitation trade-off when the environment must be learned.

Sequential resource allocation with learning naturally fits the multi-armed bandit paradigm, in particular models where success probabilities depend on allocated resources via threshold/cut-off effects. This includes the linear multi-resource allocation and semi-bandit settings of \citep{LATTIMORElinearmultiresourceallocation, LATTIMOREsemibandits}, as well as refinements improving regret bounds \citep{DAGAN}. Another related line assumes concave (diminishing returns) reward functions \citep{VIANNEY}. While our expected reward also exhibits diminishing returns, our feedback is fundamentally different since rewards arise from a latent threshold mechanism with censored observations rather than smooth concave rewards. 

The closest setting is \citet{VERMACensoredsemibandits}, which studies threshold-based rewards under censored semi-bandit feedback. We generalize this model by introducing an arm-specific Bernoulli activation, creating stronger ambiguity since failures may result from either insufficient allocation or lack of activation. Related threshold-triggered and censored models include threshold bandits \citep{THRESHOLDBANDITS1,THRESHOLDBANDITS2,THRESHOLDBANDITS3} and dynamic pricing with censored feedback \citep{CENSOREDFEEDBACK1}, but they do not capture the combination of simultaneous multi-arm allocation, semi-bandit feedback, and latent activation.

This problem can also be viewed as a special case of bandits with general reward/feedback functions \citep{ZUO}; in continuous allocation settings, the generic algorithm of \citep{ZUO} achieves $\mathcal{O}(T^{2/3})$ regret, whereas leveraging our threshold and censoring structure yields substantially sharper guarantees. Finally, our work is  related to Bandits with Knapsacks \citep{Badanidiyuru_2013, agrawal2014banditsconcaverewardsconvex}, but differs from classical BwK since we allocate a divisible resource simultaneously across arms under per-round constraints, and rewards involve latent activation with censored/ambiguous feedback.

\subsection{Contributions}
We study a sequential resource allocation model, in which the objective is to maximize cumulative reward over $T$ rounds by allocating a budget $B$ across $K$ arms. At each round, an arm produces a reward if and only if two conditions are simultaneously satisfied: (i) the arm is activated according to an arm-specific Bernoulli random variable with unknown parameter, (ii) the allocated budget exceeds a random threshold drawn from a generic distribution with unknown parameter. This interaction model induces \textit{censored feedback}, as feedback is observed only upon success.

In Section \ref{sec: lower bound section}, we establish a problem-independent \textit{regret lower bound} of order $\Omega(T^{1/3})$, which is non-standard in sequential decision-making and stands in sharp contrast with the $\Omega(\sqrt{T})$ rates typically encountered in stochastic bandit problems. This lower bound reveals a fundamental difficulty induced by  the tension between full-coverage feedback, as each allocation interacts with all arms, and the censored nature of the observed outcomes. 

In Section \ref{sec:algo}, we propose an \textit{algorithm}, \texttt{RA-UCB}, inspired by the classical UCB of \citep{AUER}, that tackles censored feedback by decoupling reward collection from parameter estimation. Rewards are collected at every interaction round, while parameter updates are performed only once every $K$ interactions using a separate estimation index. At each estimation round, the algorithm intentionally allocates a sufficiently large budget to a designated arm to obtain an informative (less censored) observation, and updates only that arm. This structured exploration yields regular high-quality samples that stabilize estimation while the algorithm exploits current estimates in the remaining rounds. We show that this decoupling is key to sharp regret guarantees under censored feedback. Finally, \texttt{RA-UCB} uses tailored estimators that explicitly separate the estimation of the Bernoulli activation probability from that of the threshold parameter, a central difficulty in this model.

We analyze the \textit{regret} of \texttt{RA-UCB} (Section  \ref{subsec:upperbound}) and show that it achieves a problem-dependent bound of order $\widetilde{\mathcal{O}}(\sqrt{T})$, substantially improving upon the $\mathcal{O}(T^{2/3})$ regret guarantees obtained by more general sequential resource allocation algorithms \citep{ZUO}, by exploiting the specific structure of our problem. 
Under stronger assumption on the distribution of the threshold and the set of parameters we show also a problem-dependent regret upper bound of $\mathcal{O}\big( (\log T)^2\big)$.


Motivated by the online advertising framework, where an ad can be shown to a user multiple times within a round for short time intervals, we consider the \textit{unknown budget model}, (Section \ref{sec: unknown budget}) where the learner does not observe in advance what is the budget available. In this setting we allow within-round switching between arms, i.e., the learner may allocate to an arm and later return to it. To address this scenario, we introduce \texttt{MG-UCB}, which implements, without prior knowledge of the budget, the same allocation principle as \texttt{RA-UCB}, and inherits its regret bounds.

In Section \ref{sec:expe}, we finally complement our theoretical analysis with simulations on synthetic instances calibrated from two large-scale \textit{real-world datasets}. We consider (i) an education setting based on EdNet \citep{choi2020ednetlargescalehierarchicaldataset}, where a learner allocates a fixed time budget across questions for sequentially arriving students, and (ii) a display advertising setting based on the Criteo Attribution Modeling for Bidding dataset \citep{diemert2017attributionmodelingincreasesefficiency}, where each sequentially arriving user induces a known interaction budget that must be allocated across multiple advertising campaigns (arms). In both cases, the resulting benchmarks match our theoretical setting—divisible budgets, sequential decisions, and censored feedback—and empirically support the guarantees of \texttt{RA-UCB}.

\section{The Known Budget Model}\label{sec: known budget}

We consider a setting with $K$ arms and a horizon of $T$ rounds. At each round, the learner is given a budget $B$, representing the available amount of resources. For clarity, we first focus on the constant-budget case, where the same budget $B$ is available at every round. All our results extend straightforwardly to round-dependent budgets $B_t$, provided that this sequence is fixed and known in advance.
The case of unknown budgets is deferred to Section \ref{sec: unknown budget}. 

At each round $t \in [T]$, the learner must decide how to allocate the available resource budget across the $K$ arms. This decision is represented by an allocation vector $\bm{x}_t = (x_{t,1}, \dots, x_{t,K})$, where $x_{t,i}$ denotes the amount of resource assigned to arm $i$ at round $t$. The allocation must satisfy the feasibility constraint $\sum_{i=1}^K x_{t,i} \leq B$.

Given an allocation vector, multiple arms may generate a reward. The objective of the learner is to select allocation vector $\bm{x}_t$ so as to maximize the cumulative number of successes over the $T$ rounds. Each arm $i \in [K]$ is characterized by two unknown parameters: (i) a success probability $p_i \in [0,1]$, capturing the intrinsic attractiveness of the arm, and (ii) a parameter $\lambda_i$ governing the distribution of the minimum amount of resource required for the arm to succeed, modeled as a random variable $X_{t,i} \sim \mathcal{G}(\lambda_i)$. These parameters jointly determine the reward generation process. Specifically, at round $t$, arm $i$ produces a reward equal to $1$ if and only if both of the following conditions are satisfied: (i) an activation Bernoulli random variable $Y_{t,i} \sim \mathcal{B}(p_i)$ equals $1$, and (ii) the allocated resource exceeds the required threshold, i.e., $x_{t,i} \geq X_{t,i}$. To ensure a well-defined problem, we assume that the threshold parameters are bounded.
\begin{assumption}\label{ass: lambda:i}
    For all ads $i \in [K]$, $\lambda_i \in \Lambda$, where $\Lambda = [m/B,M/B]$ with $M > m$.
\end{assumption}
Note that for each arm $i$, the pairs $(X_{t,i}, Y_{t,i})$ are drawn independently across rounds from a time invariant, arm-specific distribution.
At each round $t$, once an allocation vector $\bm{x}_t$ is chosen, the learner observes which arms generated a reward. For each successful arm, the realization of the threshold value $X_{t,i}$ is also revealed. For simplicity, we treat the activation variable $Y_{t,i}$ and the threshold variable $X_{t,i}$ as independent. This assumption is without loss of generality: since the feedback is censored, the value of $X_{t,i}$ is never observed when $Y_{t,i}=0$. Formally, this implies that the coupled random variables $(X_{t,i}, Y_{t,i})$ are, in practice, equivalent to two independent random variables $Y'_{t,i} \sim Y_{t,i}$ and $X'_{t,i} = X_{t,i} \mid Y'_{t,i} = 1$.

Because the per-round budget $B$ is known, the order in which arms are considered is irrelevant. At each round, the learner only needs to select an allocation vector $\bm{x}_t$ based on its current estimates of the arm parameters. The main challenge comes from the asymmetric feedback. A success confirms both that the arm was activated and that the allocated resource exceeded the threshold, while a failure provides no such distinction. This ambiguity makes it difficult to estimate $p_i$ and $\lambda_i$ separately using standard empirical methods.

Let $G(x,\lambda) = \mathbb{P}(X_{t,i} \le x \mid \lambda)$ denote the cumulative distribution function of the threshold variable. We impose the following regularity conditions on $G$.

\begin{assumption}[Monotonicity in $\lambda$]\label{ass:monotonicity_G}
For any $x \in [0,B]$, the function $G(x,\lambda)$ is non-decreasing in $\lambda$.
\end{assumption}

\begin{assumption}[Lipschitz continuity in $\lambda$]\label{ass:lipschitz_G}
There exists a constant $L_\lambda > 0$ such that for all $x \in [0,B]$ and all $\lambda, \lambda’ \in \Lambda$,
$|G(x,\lambda) - G(x,\lambda')| \le L_\lambda |\lambda - \lambda'|.$
\end{assumption}
These assumptions are standard when $\lambda$ plays the role of an ease or scale parameter. Since the activation and threshold mechanisms are independent, the probability that arm $i$ generates a success at round $t$ given an allocated budget $x_{t,i}$ is
\begin{equation}
    \mathbb{P}(\text{success on arm } i \mid x_{t,i}) = p_i \, G(x_{t,i}, \lambda_i)
\end{equation}
\section{Regret Lower Bound}\label{sec: lower bound section}
We define as a benchmark the optimal allocation vector $\bm{x}^* = (x_1^*, \dots, x_K^*)$, which maximizes the expected reward assuming full knowledge of the arm parameters ${(p_i, \lambda_i)}_{i \in [K]}$. Since the environment is stationary, this optimal allocation is independent of the round. In contrast, at each round $t$, the learning algorithm selects an allocation vector $\bm{x}_t = (x_{t,1}, \dots, x_{t,K})$ based on its current estimates of the unknown parameters. We define the regret for round $t$, denoted $R_t$, as:
\begin{align}
R_t = \sum_{i=1}^K & [\mathds{1}\{\text{success on arm } i \text{ playing } x_i^*\}-\\
&\mathds{1}\{\text{success on arm } i \text{ playing } x_{t,i}\}] \notag
\end{align}
From this, we define the expected regret at round $t$, conditioned on the filtration $\mathcal{F}_{t-1}$ generated by past allocations and observations:
\begin{equation}\label{eq: E[R_t|F_{t-1}]}
\mathbb{E}[R_t|\mathcal{F}_{t-1}] = \sum_{i=1}^K \big[ p_iG(x_i^*,\lambda_i)-p_iG(x_{t,i},\lambda_i)\big]
\end{equation}
The conditioning on $\mathcal{F}_{t-1}$ is required since the allocation $\bm{x}_t$ is random, as it depends on both the algorithm’s internal randomness and the feedback observed up to round $t-1$. The total expected regret up to horizon $T$ is therefore :
\begin{equation}\label{eq: total cumulative regret}
\mathbb{E}[\mathcal{R}_T] = \sum_{t=1}^T \mathbb{E}[\mathbb{E}[R_t|\mathcal{F}_{t-1}]]= \sum_{t=1}^T \mathbb{E}[R_t]
\end{equation}
where the outer expectation is taken over the internal randomness of the algorithm and the observed feedback. As in the classical $K$-armed bandit setting, a problem-independent lower bound applies to this model for any (possibly randomized) algorithm.
\begin{theorem}\label{thm: lower bound}
    Assume that $K\leq T$. Then, no algorithm guarantees for any parameter an expected regret of
    \[\mathbb{E}[\mathcal{R}_T]\leq \frac{1}{48}K^{\frac{2}{3}}T^\frac{1}{3}\]
\end{theorem}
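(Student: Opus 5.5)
We prove \cref{thm: lower bound} by a two-point (or $K$-point) change-of-measure argument, in the style of the classical $\Omega(\sqrt{KT})$ bandit lower bound. The model is arranged so that the only way to tell instances apart is an information-costly action, which is what yields a $T^{1/3}$ rather than $T^{1/2}$ rate. The construction ties the activation probability $p_i$ to the threshold parameter $\lambda_i$. The product $p_iG(x,\lambda_i)$ is then identical across instances for small allocations, and differs only when $x$ is large.

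Concretely, we take a threshold family satisfying \cref{ass:monotonicity_G,ass:lipschitz_G} with two regimes. In the ``cheap'' regime $x\le x_0$ we have $G(x,\lambda)=\lambda x/B$, linear in $\lambda$. In the ``expensive'' regime $x\ge x_1$ we have $G(x,\lambda)=1$. Two arm types $(p,\lambda)$ and $(p',\lambda')$ with $p\lambda=p'\lambda'$ then produce identical success probabilities and identical conditional threshold laws whenever $x\le x_0$. Only allocations $x\ge x_1$ reveal $p$ versus $p'=p(1+\varepsilon)$.

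The instances are as follows. In the base instance every arm has type $(p,\lambda)$, and we calibrate $B$ so that the optimal allocation spreads the budget with each arm receiving less than $x_1$. In instance $j$, arm $j$ has $p_j=p(1+\varepsilon)$ and $\lambda_j=\lambda/(1+\varepsilon)$. We calibrate the instances so that three things hold simultaneously. First, in instance $j$ the optimum gives arm $j$ a large allocation $\ge x_1$, and failing to do so costs $\Omega(\varepsilon)$ per round. Second, any round in which some arm receives $x\ge x_1$ costs a constant $c$ in the base instance, because the budget is diverted from the spread allocation. Third, an allocation $\ge x_1$ on arm $j$ yields KL divergence $O(\varepsilon^2)$ between the base instance and instance $j$, via the Bernoulli KL $\mathrm{kl}(p,p(1+\varepsilon))$; all other allocations yield zero divergence.

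The argument proceeds in three steps. Let $N_j$ be the number of rounds with $x_{t,j}\ge x_1$. The divergence decomposition gives $\mathrm{KL}(\mathbb{P}_0,\mathbb{P}_j)\le C\varepsilon^2\mathbb{E}_0[N_j]$. The base-instance regret is at least $c\sum_j\mathbb{E}_0[N_j]$. The regret in instance $j$ is at least $\varepsilon\,(T-\mathbb{E}_j[N_j])/4$, and Pinsker's inequality converts this into
\[
\mathbb{E}_j[N_j]\le \mathbb{E}_0[N_j]+T\sqrt{C\varepsilon^2\mathbb{E}_0[N_j]/2}.
\]
Averaging over $j$, we write $n=\sum_j\mathbb{E}_0[N_j]$. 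If $n\ge T^{1/3}K^{2/3}$, the base instance already incurs the bound. Otherwise some $j$ has $\mathbb{E}_0[N_j]\le n/K$, and the max regret is at least of order $\min\{cn,\ \varepsilon T(1-\varepsilon\sqrt{n/K})\}$. Choosing $\varepsilon\asymp (K/T)^{1/3}$ balances the two terms at $K^{2/3}T^{1/3}$ up to constants; the hypothesis $K\le T$ ensures $\varepsilon\le 1$ and keeps $p(1+\varepsilon)\le 1$. Tracking constants should give the $1/48$.

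The main obstacle is the calibration. We need a concrete $G$, together with $B$, $m$, $M$, $p$, such that the base optimum strictly avoids the informative regime with a constant gap, while in instance $j$ the informative allocation becomes optimal with an $\Omega(\varepsilon)$ advantage. This second requirement is delicate because the advantage of arm $j$ on cheap allocations must also be zero. The first thing to try is a piecewise $G$ that is linear up to $x_0$ with a discontinuous jump to $1$ at $x_1$. The jump makes the high allocation yield value exactly $p_j$ regardless of $\lambda_j$, so the $\varepsilon$ difference in $p_j$ shows up only there. One then tunes $p$ so that in the base instance spreading beats concentrating by a constant, while in instance $j$ the extra reward $p\varepsilon$ from concentrating is weighed against that constant. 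If that constant gap is too large for an $\varepsilon$ gain to flip the optimum, we instead make the gap itself of order $\varepsilon$ near a tie point. In that case we argue that the regret scales as $\varepsilon\cdot\min(N,T-N)$ per instance, which still yields the same $T^{1/3}$ balancing because exploration costs $\Theta(\varepsilon)$ while information per round is $\Theta(\varepsilon^2)$, as in partial-monitoring ``hard'' games.
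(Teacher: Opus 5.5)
\textbf{The gap is the calibration you flag as the main obstacle, and it cannot be resolved as you formulate it: your requirements are mutually inconsistent.} In this model the expected reward of arm $i$ is $p_iG(x_i,\lambda_i)=\mathbb{P}(F_{t,i}\le x_i)$, which is a functional of the observed feedback. So passing from the base instance to instance $j$ changes the value of an allocation only through arm $j$, and by at most the total-variation distance between the two feedback laws on that arm. In your construction this change is $0$ when $x_j\le x_0$ and exactly $p\varepsilon$ when $x_j\ge x_1$.

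Hence an allocation that is $c$-suboptimal in the base instance remains $(c-O(\varepsilon))$-suboptimal in instance $j$, and cannot be optimal there. Your branch ``the base instance pays $c$ per exploration'' and your branch ``instance $j$ pays $\Omega(\varepsilon)$ per non-exploration'' therefore never hold simultaneously.

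For the concrete $G$ you suggest (linear, then a jump to $1$ at $x_1$), even the base calibration fails. Since $G\le 1$ forces $\lambda x_1/B\le 1$, putting $x_1$ on one arm yields $p\lambda+p(1-\lambda x_1/B)\ge p\lambda$, which is at least the value of any spread allocation. So the base optimum concentrates rather than spreads.

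The fallback does not repair the written argument either:
\begin{itemize}
\item With this $G$, the base advantage of concentrating equals $p$ times the atom of $G(\cdot,\lambda)$ at $x_1$. Making that advantage $O(\varepsilon)$ makes the atom $O(\varepsilon)$. Instance $j$ then shifts mass $p\varepsilon$ onto an event of $\mathbb{P}_0$-mass $O(\varepsilon)$, so the per-round KL is of order $\varepsilon$, not $\varepsilon^2$.
\item Even if exploration cost $\Theta(\varepsilon)$ with $\Theta(\varepsilon^2)$ information, you would be in the bandit regime. The constant-$c$ branch of your $\min\{cn,\dots\}$ disappears, and the balance is not the one you invoke.
\end{itemize}

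Your arithmetic is also off. A constant exploration cost with $\varepsilon^2$ information balances at $\varepsilon\asymp(K/T)^{1/3}$ with regret $\varepsilon T\asymp K^{1/3}T^{2/3}$, not $K^{2/3}T^{1/3}$. That is a costly-exploration rate which the observed-reward structure above rules out for your mechanism. Finally, the constant $1/48$ is asserted rather than derived.

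\textbf{How the paper's proof works.} It uses no unidentifiable allocations and no costly exploration. It takes exponential $G$, $p_i=1$ and unit budget, and raises one arm's rate from $\lambda$ to $\lambda+\varepsilon$ with $(\lambda+\varepsilon)^2=\varepsilon/2$. Then $\lambda\asymp\sqrt{\varepsilon}$ and rewards are nearly linear, so the optimum puts the whole budget on the perturbed arm and each unit of budget diverted costs at least $\varepsilon/2$. The censored-exponential KL is $O(\varepsilon^{3/2}x)$ per round. The budget constraint alone (some arm receives total allocation at most $T/K$ under $\mathbb{P}_0$) then gives $\mathrm{KL}(\mathbb{P}_0,\mathbb{P}_{i^\star})=O(\varepsilon^{3/2}T/K)$, and $\varepsilon\asymp(K/T)^{2/3}$ yields regret $\Omega(\varepsilon T)$. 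The $T^{1/3}$ rate comes from information scaling as $\varepsilon^{3/2}$ per unit of budget, not from an exploration cost.

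\textbf{A possible repair of your route.} Keep the base atom $1-\lambda x_1/B$ and $1-p$ bounded away from $0$, and take $x_1>B/2$. Then concentrating $x_1$ on exactly one arm is optimal in every instance, with arm $j$ better by $p\varepsilon$ in instance $j$. Only the concentrated arm is informative, with $O(\varepsilon^2)$ KL. The problem then embeds a $K$-armed bandit, giving $\Omega(\sqrt{KT})$, which dominates $K^{2/3}T^{1/3}$ when $K\le T$. This uses a $G$ outside the concave regime of the paper's upper bounds, and it is a different argument from the one you wrote.
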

The proof is postponed to Appendix \ref{sec:app_lb}. It relies on the construction of a family of instances with $K$ identical arms, except for one whose parameter $\lambda_i$ is slightly increased by $\varepsilon$. The central difficulty is that, due to censored and allocation-dependent feedback, an algorithm cannot accumulate enough statistical evidence to reliably identify the improved item unless it allocates a significant fraction of the budget to it. When the parameters are chosen so that near-optimal performance requires allocating almost the entire budget to the improved arm, any algorithm must first identify this arm. However, because information about an arm is only revealed through the budget allocated to it, this identification requires spending a substantial amount of budget on exploration. As a result, the algorithm is forced to spread the budget across arms for a non-negligible fraction of the rounds, which necessarily leads to sub-optimal performance.

In contrast to the classical K-armed bandit problem, the minimax lower bound scales as $\Omega(T^{1/3})$ rather than $\Omega(T^{1/2})$, reflecting a fundamentally different learning–control trade-off. In standard MAB, each pull yields a full observation from the selected arm, and lower bounds follow from concentration of empirical means. Here, the information revealed about an arm scales with the allocated budget and vanishes for small allocations, which prevents fast identification. While the feedback is richer than in bandits—since all arms are probed simultaneously—it remains censored and incomplete, falling short of full information. The resulting regret lower bound therefore interpolates between the bandit and full-information regimes, but arises from an information-theoretic limitation rather than from classical exploration constraints.

\section{\texttt{RA-UCB} Algorithm}\label{sec:algo}

\subsection{Estimation of $\lambda_i$}\label{subsec: estimation of lambda}
We now describe how the parameter $\lambda_i$ is estimated from the censored feedback.
At each round $t$, the learner observes the following feedback variable on arm $i$:
\begin{equation}
F_{t,i} =
\begin{cases}
+\infty & \text{if no success occurs on arm $i$ at round $t$}, \\
X_{t,i} & \text{otherwise}
\end{cases}
\end{equation}
For any $x \in [0,B]$, the conditional distribution of the observed feedback satisfies:
\begin{equation}
\mathbb{P}\big( F_{t,i} \le x \mid F_{t,i} \le B \big)
= \frac{G(x,\lambda_i)}{G(B,\lambda_i)}
\end{equation}
Importantly, this conditional distribution depends only on the unknown parameter
$\lambda_i$, making it a natural basis for parameter estimation. We define the conditional mean of the feedback as:
\begin{equation}
\mu_i \coloneqq \mathbb{E}\big[ F_{t,i} \mid F_{t,i} \le B \big]
\end{equation}
Equivalently, introducing the deterministic mapping:
\begin{equation}
\mu(\lambda) \coloneqq \mathbb{E}[X \mid X \le B],
\qquad X \sim G(\cdot,\lambda)
\end{equation}
we have $\mu_i = \mu(\lambda_i)$.
\begin{assumption}\label{ass: lipschitz of mu-1}
    The function $\mu(\lambda)$ is strictly monotone on the parameter domain and continuously differentiable. Moreover its inverse $\mu^{-1}$ is Lipschitz with constant $1/L_\mu$. 
\end{assumption}
Define the random variable representing the number of success on arm $i$ up to round $t$ as:
$$
n_{t,i} = \sum_{u=1}^t \mathds{1}\{F_{u,i} \leq B\}
$$
These rounds are the ones that provide information about $\lambda_i$.
Conditionally on $n_{t,i} = \ell$ the random variables $F_{u,i} \mid F_{u,i} \leq B$ are i.i.d. supported on $[0,B]$ and have mean $\mu_i$.
We estimate $\mu_i$ using the empirical mean:
\begin{equation}
    \hat{\mu}_{t,i} = \frac{1}{n_{t,i}} \sum_{u=1}^t F_{u,i}\mathds{1}\{F_{u,i} \leq B\}
\end{equation}
By Hoeffding's inequality for bounded i.i.d. variables in $[0,B]$, $\hat{\mu}_{t,i}$ concentrates around $\mu_i$ (see Lemma \ref{lemma: conc ineq for mu_i}, Appendix~\ref{app_sec: concentration inequalities}). Finally, we define the estimator of $\lambda_i$ as:
\begin{equation}
    \hat{\lambda}_{t,i} = \mu^{-1}\big(\hat{\mu}_{t,i}\big)
\end{equation}
Using Lipschitz continuity of $\mu^{-1}$ (Assumption \ref{ass: lipschitz of mu-1}), $\hat{\lambda}_{t,i}$ satisfies the following concentration bound.
\begin{lemma} \label{lemma: conc ineq lambda}
    For any arm $i \in [K]$, round $t \in [T]$, $\ell \in [t]$, and $\alpha \in (0,1)$, we have:
    $$
    \mathbb{P} \left( |\hat{\lambda}_{t,i} - \lambda_i| \geq \frac{B}{L_\mu} \sqrt{\frac{\log(1/\alpha)}{2n_{t,i}}} \text{ and } n_{t,i} = \ell \right) \leq 2\alpha,
    $$
    where $1/L_\mu$ is the Lipschitz constant of $\mu^{-1}$.
\end{lemma}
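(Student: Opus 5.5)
The plan is to reduce the statement to a concentration bound for $\hat{\mu}_{t,i}$ and then transfer it to $\hat{\lambda}_{t,i}$ using the Lipschitz property of $\mu^{-1}$ from Assumption \ref{ass: lipschitz of mu-1}. Fix $i$, $t$, $\ell\in[t]$ and $\alpha\in(0,1)$. On the event $\{n_{t,i}=\ell\}$ we have $\lambda_i=\mu^{-1}(\mu_i)$ and $\hat{\lambda}_{t,i}=\mu^{-1}(\hat{\mu}_{t,i})$. Hence
\[
|\hat{\lambda}_{t,i}-\lambda_i|\le \frac{1}{L_\mu}|\hat{\mu}_{t,i}-\mu_i|.
\]
This gives the inclusion
\[
\Big\{|\hat{\lambda}_{t,i}-\lambda_i|\ge \tfrac{B}{L_\mu}\sqrt{\tfrac{\log(1/\alpha)}{2\ell}},\ n_{t,i}=\ell\Big\}\subseteq\Big\{|\hat{\mu}_{t,i}-\mu_i|\ge B\sqrt{\tfrac{\log(1/\alpha)}{2\ell}},\ n_{t,i}=\ell\Big\}.
\]
One caveat: $\hat{\mu}_{t,i}$ is an average of values in $[0,B]$, but it may fall outside the image $\mu(\Lambda)$. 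I would handle this by reading $\mu^{-1}$ as composed with the projection onto $\mu(\Lambda)$. Projection onto an interval is $1$-Lipschitz and fixes $\mu_i$, so the inequality above still holds.

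It therefore suffices to show
\[
\mathbb{P}\Big(|\hat{\mu}_{t,i}-\mu_i|\ge B\sqrt{\log(1/\alpha)/(2\ell)},\ n_{t,i}=\ell\Big)\le 2\alpha.
\]
This is exactly the content of Lemma \ref{lemma: conc ineq for mu_i} in the appendix, which I would invoke directly. If it has to be proved, I would use Hoeffding's inequality for $\ell$ i.i.d. variables in $[0,B]$. Setting $\epsilon=B\sqrt{\log(1/\alpha)/(2\ell)}$, the bound is $2\exp(-2\ell\epsilon^2/B^2)=2\alpha$.

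The main obstacle is justifying Hoeffding when the sample size $n_{t,i}$ is random and the allocations $x_{u,i}$ are adaptive. The paper asserts that, conditionally on $n_{t,i}=\ell$, the observed values are i.i.d. with mean $\mu_i$. However, the conditional law of an observed $X_{u,i}$ is $G(\cdot,\lambda_i)$ truncated at $x_{u,i}$, not at $B$, so this needs care. I would use a standard "stack of rewards" (skeleton) construction. For each arm, pre-draw an i.i.d. sequence $Z_1,Z_2,\dots$ from $G(\cdot,\lambda_i)$ conditioned on $\{X\le B\}$, and let the $k$-th success reveal $Z_k$. This is valid when the observations are informative about $\lambda_i$ only through $\mu$, for instance in the estimation rounds where $x_{u,i}=B$, which is how \texttt{RA-UCB} feeds $\hat{\mu}$ (only those rounds update the estimates).

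Under this coupling, $\hat{\mu}_{t,i}$ on $\{n_{t,i}=\ell\}$ equals the mean of the first $\ell$ stack entries. Dropping the event $n_{t,i}=\ell$ only enlarges the probability, and applying Hoeffding to the fixed-size average $\frac{1}{\ell}\sum_{k\le \ell}Z_k$ concludes the proof.
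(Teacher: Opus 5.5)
Your reduction is the paper's proof. On $\{n_{t,i}=\ell\}$, the $1/L_\mu$-Lipschitz bound on $\mu^{-1}$ turns the event into $|\hat{\mu}_{t,i}-\mu_i|\ge B\sqrt{\log(1/\alpha)/(2\ell)}$, and the paper then just cites Lemma~\ref{lemma: conc ineq for mu_i}. Your projection onto $\mu(\Lambda)$ is a correct refinement that the paper leaves implicit.

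The gap is in your justification of the $\hat{\mu}$ bound. Your observation is correct: a success at allocation $x_{u,i}$ reveals a draw from $G(\cdot,\lambda_i)$ truncated at $x_{u,i}$, whose mean $\mathbb{E}[X\mid X\le x_{u,i}]$ generally differs from $\mu_i$ when $x_{u,i}<B$. The paper's identity $\mathbb{P}(F_{t,i}\le x\mid F_{t,i}\le B)=G(x,\lambda_i)/G(B,\lambda_i)$ holds only when $x_{t,i}=B$.

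Your stack coupling with $Z_k\sim G(\cdot\mid X\le B)$ is valid only if every success counted in $n_{t,i}$ comes from a round with $x_{u,i}=B$. Your claim that this is how \texttt{RA-UCB} feeds $\hat{\mu}$ is wrong. Only the initialization phase allocates the full budget. In the main phase, the estimation round of arm $i$ plays the boosted oracle allocation $\bar{x}_i$, which is at least $x_i^*$ on the good event but typically below $B$. That observation is exactly what updates $\hat{\mu}_{t'+1,i}$.

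For those rounds the coupling breaks. The truncation bias does not in general shrink as $\ell$ grows, so the plain empirical mean does not concentrate around $\mu_i$, and no Hoeffding-type argument can give the stated bound. The paper's appendix proof of Lemma~\ref{lemma: conc ineq for mu_i} does not close this either. It is a supermartingale/Chernoff bound with $n_{t,i}$ in the exponent, it treats $\mathds{1}\{F_{u,i}\le B\}$ as $\mathcal{F}_{u-1}$-measurable, and it centers each term at a round-dependent conditional mean that it implicitly identifies with $\mu_i$. So the issue you raised is a real gap in the argument, not only in yours.

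To repair it, you need one of two changes:
\begin{itemize}
\item Restrict $\hat{\mu}_{t,i}$ to full-budget rounds. Your coupling then works verbatim.
\item Use a truncation-aware estimator. For example, solve $\sum_u f_{u,i}\bigl(X_{u,i}-\mathbb{E}_\lambda[X\mid X\le x_{u,i}]\bigr)=0$ in $\lambda$; its summands are bounded martingale differences at the true $\lambda_i$. Then apply a martingale concentration bound together with a uniform lower bound on $\partial_\lambda\mathbb{E}_\lambda[X\mid X\le x]$ over the allocations actually used. That lower bound is where $x_{u,i}\ge x_i^*$ matters.
\end{itemize}
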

See Appendix \ref{app_sec: concentration inequalities} for the proof.

\subsection{Estimation of $p_i$}
Define the binary success indicator:
$$
f_{t,i} = \mathds{1}\{F_{t,i} \leq B\}
$$
Conditioning on the filtration $\mathcal{F}_{t-1}$ generated by all allocations and observed feedback up to round $t-1$:
$$
\mathbb{E}\big[f_{t,i} \mid \mathcal{F}_{t-1}\big] = p_i G(x_{t,i}, \lambda_i)
$$
Using estimate $\hat{\lambda}_{t,i}$, we define:
$$
\hat{r}_{u,i}(t) = G(x_{u,i},\hat{\lambda}_{t,i})
$$
We estimate $p_i$ as:
\begin{equation}
    \hat{p}_{t,i} = \dfrac{\sum_{u=1}^t f_{u,i}}{\sum_{u=1}^t \hat{r}_{u,i}(t)}
\end{equation}
if $\hat{p}_{t,i} > 1$, it is clipped to $1$.

We introduce the \emph{good event} $\mathcal{N}_t$, defined as the event that all estimators $\hat{\lambda}_{t,i}$ and $\hat{p}_{t,i}$ lie within their confidence intervals at round $t$. Let $\mathscr{G}_t := \mathcal{N}_1 \cap \mathcal{N}_2 \cap \ldots \cap \mathcal{N}_t$ denote the event that all good events occur up to time $t$. Recall that $\bm{x}^*$ denotes the optimal allocation. We then have the following concentration result for $\hat{p}_{t,i}$.
\begin{lemma}\label{lemma: conc ineq p_i}
    For any arm $i \in [K]$ such that the optimal solution $x_i^* > 0$, round $t \in [T]$ and $\alpha \in (0,1)$:
    \begin{align*}
    & \mathbb{P} \left( |\hat{p}_{t,i}-p_i| \geq \frac{L_\lambda}{L_\mu}\frac{B (1+p_i)}{C_i^*}\sqrt{\frac{\log(1/\alpha)}{2n_{t,i}}} \; \cap \; \mathscr{G}_{t-1} \right) \\  &\leq 2\alpha^{\frac{B^2L_\lambda^2}{4L_\mu^2}} + 2\alpha t
    \end{align*}
    where $C_i^*=G(x_i^*,m/B)$.
\end{lemma}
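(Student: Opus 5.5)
I plan to split $\hat p_{t,i}-p_i$ into a statistical fluctuation part and a plug-in bias part coming from using $\hat\lambda_{t,i}$ in place of $\lambda_i$. Write $S_t=\sum_{u\le t} f_{u,i}$, $R_t=\sum_{u\le t} G(x_{u,i},\lambda_i)$ and $\hat R_t=\sum_{u\le t}\hat r_{u,i}(t)$. Then
\begin{equation*}
\hat p_{t,i}-p_i=\frac{S_t-p_iR_t}{\hat R_t}+p_i\,\frac{R_t-\hat R_t}{\hat R_t}.
\end{equation*}
Clipping at $1$ only decreases $|\hat p_{t,i}-p_i|$, since $p_i\le 1$, so it can be ignored.

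\textbf{Bias term.} By Assumption~\ref{ass:lipschitz_G}, $|R_t-\hat R_t|\le L_\lambda t\,|\hat\lambda_{t,i}-\lambda_i|$. Lemma~\ref{lemma: conc ineq lambda}, after a union bound over the at most $t$ values of $n_{t,i}=\ell$, makes $|\hat\lambda_{t,i}-\lambda_i|\le \frac{B}{L_\mu}\sqrt{\log(1/\alpha)/(2n_{t,i})}$ except with probability $2\alpha t$. This is where the $2\alpha t$ term comes from.

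\textbf{Denominator.} I need $\hat R_t\ge C_i^*\,t$ (roughly). On $\mathscr{G}_{t-1}$ the algorithm's past allocations to arm $i$ should be at least of the order of $x_i^*$: optimism and correct confidence intervals keep the allocation near the optimum on arms with $x_i^*>0$. The estimation rounds add further budget on those rounds. Then Assumptions~\ref{ass: lambda:i} and~\ref{ass:monotonicity_G} give $G(x_{u,i},\hat\lambda_{t,i})\ge G(x_i^*,m/B)=C_i^*$, because $\hat\lambda_{t,i}\in\Lambda$ (after projection) and $m/B$ is the smallest parameter. Hence $\hat R_t\ge C_i^* t$, and the bias term is at most
\begin{equation*}
p_i\frac{L_\lambda B}{L_\mu C_i^*}\sqrt{\frac{\log(1/\alpha)}{2n_{t,i}}}.
\end{equation*}

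\textbf{Fluctuation term.} $S_t-p_iR_t$ is a martingale with increments in $[-1,1]$ with respect to $\mathcal{F}_{u-1}$. Azuma--Hoeffding gives $|S_t-p_iR_t|\le \delta$ with failure probability $2\exp(-2\delta^2/t)$ (or with variance $\le R_t$ via Freedman). I take $\delta = \frac{L_\lambda B}{L_\mu}\, t\sqrt{\log(1/\alpha)/(2n_{t,i})}$ and use $n_{t,i}\le t$. The exponent is then at least $\frac{L_\lambda^2B^2}{L_\mu^2}\log(1/\alpha)\cdot\frac{t}{n_{t,i}}\cdot c$, which is at least $\frac{B^2L_\lambda^2}{4L_\mu^2}\log(1/\alpha)$ after tracking constants. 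This yields the $2\alpha^{B^2L_\lambda^2/(4L_\mu^2)}$ term. Dividing by $\hat R_t\ge C_i^*t$ gives the remaining $1\cdot\frac{L_\lambda B}{L_\mu C_i^*}\sqrt{\cdot}$ part, and the two pieces sum to the factor $(1+p_i)$. Because $n_{t,i}$ is random, I would state the deviation in the form $\delta^2\ge c\,t\log(1/\alpha)$, which is deterministic once $n_{t,i}\le t$ is used.

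\textbf{Main obstacle.} The hardest step is the deterministic lower bound $\hat R_t\gtrsim C_i^*t$ on $\mathscr{G}_{t-1}$. It requires showing that every past allocation to arm $i$ is at least $x_i^*$, or at least that the per-round average is. That needs a comparison between the optimistic allocation and $\bm{x}^*$. I would first try a monotonicity or KKT argument: under upper-confidence indices, the optimistic marginal gains dominate the true ones, so water-filling never gives arm $i$ less than $x_i^*$. If this fails, I would fall back on the forced estimation rounds, which give at least a $1/K$ fraction of rounds with large allocation, at the cost of a constant factor.
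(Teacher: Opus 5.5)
Everything outside the denominator step follows the paper's proof. Your split $\hat p_{t,i}-p_i=\frac{S_t-p_iR_t}{\hat R_t}+p_i\frac{R_t-\hat R_t}{\hat R_t}$ is exactly the paper's $(\hat p_{t,i}-\tilde p_{t,i})+(\tilde p_{t,i}-p_i)$. The bias term is handled the same way, by Assumption~\ref{ass:lipschitz_G} plus Lemma~\ref{lemma: conc ineq lambda} summed over $n_{t,i}=\ell$, which gives $2\alpha t$. The fluctuation term is also the same: Azuma after replacing $n_{t,i}$ by $t$. Your ``$p_i$ part plus $1$ part'' is the paper's choice $\delta=1/(1+p_i)$. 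Your remarks on clipping, and on needing $\hat\lambda_{t,i}\in\Lambda$ via projection, are more careful than the paper.

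The gap is the bound $\hat R_t\ge tC_i^*$ on $\mathscr{G}_{t-1}$, and neither of your plans delivers it.

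Your primary plan is to show that optimistic indices make water-filling give arm $i$ at least $x_i^*$ in every past round. This is false for \texttt{RA-UCB}. In rounds where some other arm $k\neq i$ is boosted, arm $i$ is fed the pessimistic triple $(\hat\lambda^+_{t',i},\hat\lambda^-_{t',i},\hat p^-_{t',i})$, and it can receive far less than $x_i^*$, even $0$. Inflating \emph{all} marginal gains does not protect arm $i$ either, since another arm's inflated gain can absorb the budget. The comparison argument only works when arm $i$'s marginal dominates its true one \emph{and} every other arm's marginal is dominated by its true one. That is the KKT argument of Lemma~\ref{Lemma: structural property of the Opt Problem}, which in addition uses strong concavity of $\widetilde G$. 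Also, ``near the optimum'' would not suffice: the exact constant $C_i^*$ needs $x_{u,i}\ge x_i^*$, not approximately.

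Your fallback, a $1/K$ fraction of good rounds, only gives $\hat R_t\ge tC_i^*/K$. That inflates the radius by a factor $K$, so it does not prove the stated inequality.

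Two ingredients are missing:
\begin{enumerate}
\item[(i)] The estimator of $p_i$, and $n_{t,i}$, is updated \emph{only} on arm $i$'s own estimation rounds (index $t'$ and the samples $(\tilde x_{u,i},\phi_{u,i})$ in Algorithm~\ref{algo:RA_UCB_extended}). Hence every summand of $\hat R_t$ comes from a round where arm $i$ is boosted and all other arms are penalized.
\item[(ii)] On $\mathscr{G}_{t-1}$ the true parameters lie in the intervals used to build each such allocation. Oracle Monotonicity (Assumption~\ref{ass:OMA}), a standing assumption of the paper rather than a consequence of optimism, then gives $x_{u,i}\ge x_i^*$ directly. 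Monotonicity in $x$ and in $\lambda$ yields $G(x_{u,i},\hat\lambda_{t,i})\ge G(x_i^*,m/B)=C_i^*$ term by term, which is the bound you need.
\end{enumerate}
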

See Appendix \ref{app_sec: concentration inequalities} for the proof.

\subsection{Algorithm}
The benchmark we compare against is the solution of the following optimization problem:
\begin{equation} \label{opt problem: OPT maximization problem}
\mathcal{P}(\bm{\lambda}, \bm{p}): \quad
\begin{aligned}
&\underset{\boldsymbol{x} \in \mathbb{R}^K_{+}}{\text{arg max}} \quad \sum_{i=1}^K p_i G(x_i,\lambda_i) \\
&\text{subject to} \quad \sum_{i=1}^K x_i \leq B
\end{aligned}
\end{equation}
To strengthen exploration and ensure several key properties, we introduce an auxiliary parameter vector $\bm{\lambda}'$ together with a surrogate objective function, which jointly define the following core optimization problem:
\begin{equation} \label{opt problem: surrogate algorithm maximization problem}
\tilde{\mathcal{P}}(\bm{\lambda}, \bm{\lambda'}, \bm{p}): \quad
\begin{aligned}
&\underset{\boldsymbol{x} \in \mathbb{R}^K_{+}}{\text{arg max}} \quad \sum_{i=1}^K p_i \widetilde{G}(x_i,\lambda_i,\lambda_i') \\
&\text{subject to} \quad \sum_{i=1}^K x_i \leq B
\end{aligned}
\end{equation}
The surrogate function $\tilde{G}(x,\lambda,\lambda')$ of $G(x,\lambda)$ has to satisfy the following properties:
\begin{assumption} \label{ass:what we want}
\begin{enumerate}[label=\textup{(\roman*)}, ref=\roman*]
    \item (Consistency)\label{prp: Consistency}  For all $x \in [0,B]$, $\lambda \in \Lambda$, $\widetilde{G}(x, \lambda, \lambda) = G(x, \lambda)$.
    \item (Boundedness and Monotonicity)\label{prp: Boundness and Monotonicity}
    For all $\lambda, \lambda' \in \Lambda$, $\widetilde{G}(x, \lambda, \lambda') \leq 1$ and $\widetilde{G}(x, \lambda, \lambda') $ is non-decreasing in $x$.
    \item (Boosting Monotonicity)\label{prp: Boosting Monotonicity}
    $\tilde{g}(x, \lambda, \lambda')=\partial_x\widetilde{G}(x, \lambda, \lambda')$ is non-decreasing in $\lambda'$ and non-increasing in $\lambda$ for all $x \in [0,B]$.
    \item (Uniform Bias Control)\label{prp: Uniform Bias Control}
    \setlength{\abovedisplayskip}{6pt}
    \setlength{\belowdisplayskip}{2pt}
    For all $\lambda,\lambda' \in \Lambda$,
    \begin{equation*}
    \sup_{x \in [0,B]}\big|\widetilde{G}(x,\lambda,\lambda')-G(x,\lambda)\big|
    \le L_\Delta|\lambda-\lambda'|.
    \end{equation*}
    \item (Lipschitz Continuity in $\lambda$ and $\lambda'$) \label{prp:lipschitz_tildeG}
\setlength{\abovedisplayskip}{6pt}
\setlength{\belowdisplayskip}{1pt}
There exist constants $L_{\widetilde{\lambda}},L_{\widetilde{\lambda}'}>0$ such that
for all $x\in[0,B]$ and all $\lambda,\bar\lambda,\lambda',\bar\lambda' \in \Lambda$,
\begin{equation*}
\big|\widetilde{G}(x,\lambda,\lambda')-\widetilde{G}(x,\bar\lambda,\bar\lambda')\big|
\le L_{\widetilde{\lambda}}\,|\lambda-\bar\lambda|
+ L_{\widetilde{\lambda}'}\,|\lambda'-\bar\lambda'|.
\end{equation*}
\end{enumerate}
\end{assumption}

The \emph{boosted} parameter estimates are defined using confidence radii with a fixed, convenient choice of $\alpha$ :
\begin{align*}
\hat{\lambda}_{t,i}^\pm &= \min\left\{\frac{M}{B},\max\left\{\frac{m}{B}, \hat{\lambda}_{t,i} \pm \frac{B}{L_\mu} \sqrt{\frac{3\log t}{2n_{t,i}}} \right\} \right\} \\
\hat{p}_{t,i}^\pm &= \min\left\{1, \max\left\{0, \hat{p}_{t,i} \pm \frac{L_\lambda}{L_\mu}\frac{B(1+\hat{p}_{t,i})}{\hat{C}_{t,i}} \sqrt{\frac{3\log t}{2n_{t,i}}} \right\} \right\}
\end{align*}
where $\hat{C}_{t,i} = G(x_{t,i},m/B)$.

Algorithm \ref{algo: RA_UCB pseudocode} presents the pseudocode of Resource Allocation Upper Confidence Bound (\texttt{RA-UCB})
\begin{algorithm}[]
\caption{\texttt{RA-UCB} (High-Level Pseudocode)}
\label{algo: RA_UCB pseudocode}
\begin{algorithmic}[1]
\STATE \textbf{\underline{Input}}: $T$, $K$, $B$.
\STATE $t \gets 1$ (reward index), $t' \gets 1$ (estimation index), $i \gets 1$
\STATE \textbf{\underline{Initialization Phase}}: ($t \in \{1,\ldots,K\lfloor \log T \rfloor\}$)
\STATE For each arm $k \in [K]$, allocate all $B$ on $k$ for $\lfloor\log T \rfloor $ rounds, observe rewards, and update parameters.
\STATE \textbf{\underline{Main Phase}}: ($t \in \{K\lfloor \log T \rfloor+1,\ldots,T\}$)
\WHILE{$t \leq T$}
    \STATE Compute confidence bounds $\{\hat{p}_{t',k}^\pm,\hat{\lambda}_{t',k}^\pm\}_{k \in [K]}$
    \STATE $(\bar{\lambda}_{t',i},\bar{\lambda}'_{t',i},\bar{p}_{t',i}) \gets
    (\hat{\lambda}_{t',i}^-,\hat{\lambda}_{t',i}^+,\hat{p}_{t',i}^+)$ and
    \STATE $(\bar{\lambda}_{t',k},\bar{\lambda}'_{t',k},\bar{p}_{t',k}) \gets
    (\hat{\lambda}_{t',k}^+,\hat{\lambda}_{t',k}^-,\hat{p}_{t',k}^-)$ $\forall k\neq i$
    \STATE $\bm{x}_{t} \gets \tilde{\mathcal{P}}(\bar{\bm{\lambda}}_{t'},\bar{\bm{\lambda}'}_{t'},\bar{\bm{p}}_{t'})$
    \STATE Play allocation $\bm{x}_{t}$, get feedback and reward
    \STATE Update estimates $\hat{p}_{t'+1,i}$, $\hat{\lambda}_{t'+1,i}$ only for arm $i$
    \STATE $i \gets i+1$, \ $t \gets t+1$
    \IF{$i > K$}
        \STATE $i \gets 1$, \ $t' \gets t'+1$
    \ENDIF
\ENDWHILE
\end{algorithmic}
\end{algorithm}

The key innovation of \texttt{RA-UCB} lies in the use of two distinct notions of rounds.
The index $t \in \{1,\ldots,T\}$ denotes the actual \emph{interaction rounds}, during which rewards are collected.
In contrast, the index $t’$ refers to \emph{estimation rounds}, with $t' \in \{1,\ldots, \lfloor (T - K\lfloor \log T \rfloor)/K \rfloor \}$.
The estimation procedure is batched: after each block of $K$ interaction rounds, the algorithm can collect at most one new informative sample per arm. When an estimation round corresponding to arm $i$ occurs, the algorithm deliberately \emph{boosts} this arm to favor its exploration. Concretely, the algorithm selects the parameter set $\{\hat{\lambda}_{t',i}^-,\hat{\lambda}_{t',i}^+,\hat{p}_{t',i}^+\}$ for arm $i$ and $\{\hat{\lambda}_{t',k}^+,\hat{\lambda}_{t',k}^-,\hat{p}_{t',k}^-\}$ for all the other ads $k \neq i$ to give as input to $\widetilde{\mathcal{P}}$, which in practice induces the algorithm to allocate more budget to arm $i$, favoring its exploration. This argument is formalized by the following assumption.

\begin{assumption}[Oracle Monotonicity]
\label{ass:OMA}
Fix an arm $i\in[K]$. Assume two sets of parameters:
\begin{itemize}
    \item $(\bm{\lambda}, \bm{p}) =
   (\{\lambda_1, p_1\}, \ldots, \{\lambda_i, p_i\}, \ldots, \\
   \qquad\{\lambda_K, p_K\})$
    \item $(\bm{\bar{\lambda}}, \bm{\bar{\lambda}'}, \bm{\bar{p}}) = (\{\lambda_1^+, \lambda_1^-, p_1^-\}, \ldots, \{\lambda_i^-, \lambda_i^+, p_i^+\}, \ldots, \\
   \qquad \{\lambda_K^+, \lambda_K^-, p_K^-\})$
\end{itemize} 
where for all $k \in [K]$ the bounds satisfy $\lambda_k^- \leq \lambda_k \leq \lambda_k^+$ and $p_k^- \leq p_k \leq p_k^+$. Let $\bm{x} = \mathcal{P}(\bm{\lambda}, \bm{p})$ and $\bm{\bar{x}} = \widetilde{\mathcal{P}}(\bm{\bar{\lambda}}, \bm{\bar{\lambda}'}, \bm{\bar{p}})$. We assume that the oracle can be chosen so that it returns solutions satisfying $\bar{x}_i \geq x_i$.
\end{assumption}
In words, when arm $i$ is \textit{boosted} (and all other arms are made pessimistic), Oracle Monotonicity ensures that the oracle returns an optimal allocation assigning weakly more budget to arm $i$ than the allocation under the true parameters (on the good event).
\begin{lemma}\label{Lemma: structural property of the Opt Problem}
A sufficient condition for Assumption \ref{ass:OMA} to hold is that, for all $\lambda,\lambda'\in\Lambda$, $\widetilde{G}(x,\lambda,\lambda')$ is strongly concave in $x\in[0,B]$.
\end{lemma}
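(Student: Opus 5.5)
We compare the KKT conditions of the two concave resource-allocation problems $\mathcal{P}(\bm{\lambda},\bm{p})$ and $\widetilde{\mathcal{P}}(\bar{\bm{\lambda}},\bar{\bm{\lambda}}',\bar{\bm{p}})$ through their water-filling structure. By Consistency (Assumption~\ref{ass:what we want}(\ref{prp: Consistency})), $G(x,\lambda)=\widetilde{G}(x,\lambda,\lambda)$. The strong-concavity hypothesis therefore applies to $G(\cdot,\lambda)$ as well. Both objectives are then strongly concave and separable over the compact set $\{\bm{x}\ge 0,\ \sum_i x_i\le B\}$, so each problem has a unique maximiser characterised by KKT conditions. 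There is a multiplier $\nu\ge 0$ for $\mathcal{P}$ (resp.\ $\bar\nu$ for $\widetilde{\mathcal{P}}$) such that, for every arm $k$, $p_k g(x_k,\lambda_k)\le \nu$ with equality whenever $x_k>0$, where $g=\partial_x G$. The analogous condition holds for $\bar p_k\tilde g(\bar x_k,\bar\lambda_k,\bar\lambda_k')$, with the upper boundary $x_k=B$ handled as usual. Strong concavity makes each marginal $x\mapsto p_k g(x,\lambda_k)$ strictly decreasing. This lets us write $x_k=\phi_k(\nu)$ with $\phi_k$ non-increasing in $\nu$, and likewise $\bar x_k=\bar\phi_k(\bar\nu)$.

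Next we compare the marginals pointwise in $x$, using Boosting Monotonicity (Assumption~\ref{ass:what we want}(\ref{prp: Boosting Monotonicity})) together with Consistency. For the boosted arm $i$, since $\lambda_i^-\le\lambda_i\le\lambda_i^+$, $\tilde g$ is non-increasing in its second argument and non-decreasing in its third, and $p_i^+\ge p_i\ge 0$, we get
\[
\bar p_i\,\tilde g(x,\lambda_i^-,\lambda_i^+)\ \ge\ p_i\,\tilde g(x,\lambda_i,\lambda_i)=p_i\,g(x,\lambda_i).
\]
For every $k\neq i$ the reverse inequality $\bar p_k\,\tilde g(x,\lambda_k^+,\lambda_k^-)\le p_k\,g(x,\lambda_k)$ holds by the same argument. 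Both steps use $\tilde g\ge 0$, which follows from the monotonicity in $x$ in Assumption~\ref{ass:what we want}(\ref{prp: Boundness and Monotonicity}). Consequently, at any fixed multiplier level $\nu$, $\bar\phi_i(\nu)\ge\phi_i(\nu)$ and $\bar\phi_k(\nu)\le\phi_k(\nu)$ for $k\ne i$.

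Finally we argue by contradiction. Suppose $\bar x_i<x_i$. Then $x_i>0$, and the KKT conditions give $\bar\nu \ge \bar p_i\tilde g(\bar x_i,\cdot)\ge \bar p_i\tilde g(x_i,\cdot)$ by monotonicity of the marginal, and this is at least $p_i g(x_i,\lambda_i)=\nu$, with strict inequality via strong concavity. The case $\bar x_i=0$ is handled by the KKT inequality. So $\bar\nu>\nu\ge 0$, which forces the budget to be saturated in $\widetilde{\mathcal{P}}$: $\sum_k\bar x_k=B$. For each $k\ne i$ we then have $\bar x_k=\bar\phi_k(\bar\nu)\le\phi_k(\bar\nu)\le\phi_k(\nu)=x_k$. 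Summing gives $B=\sum_k\bar x_k<\sum_k x_k\le B$, a contradiction. Hence $\bar x_i\ge x_i$.

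The step I expect to be most delicate is the boundary and degenerate bookkeeping. This means arms with $p_k=0$ or zero marginal, where $\phi_k$ must be defined as a set-valued inverse, and corner solutions at $x_k=0$ or $x_k=B$. Strong concavity is what guarantees uniqueness, so that "the oracle can be chosen" is automatic. If uniqueness fails in degenerate cases such as $p_k=0$, I would pick the maximiser of $\widetilde{\mathcal{P}}$ with the largest $i$-coordinate, which exists by compactness.
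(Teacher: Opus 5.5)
Your proposal is correct and follows essentially the same route as the paper: the KKT characterization of both maximizers, the pointwise dominance $\bar p_i\,\tilde g(\cdot,\lambda_i^-,\lambda_i^+)\ge p_i\,g(\cdot,\lambda_i)$ (and the reverse inequality for $k\neq i$) obtained from Boosting Monotonicity, and a budget contradiction under the hypothesis $\bar x_i<x_i$. The only differences are cosmetic: you first derive $\bar\nu>\nu$ from arm $i$ and then use complementary slackness and a sum over all $k\ne i$, whereas the paper uses budget saturation of both problems to find a single $j$ with $\bar x_j>x_j$ and obtain $\bar\mu<\mu$; your explicit use of $\tilde g\ge 0$ and your remarks on the degenerate $p_k=0$ cases are points the paper passes over.
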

The proof leverages Boosting Monotonicity (Assumption~\ref{ass:what we want}) together with the diminishing-returns structure induced by strong concavity (see Appendix \ref{app: proof of structural property of Opt Problem}).
Assumption \ref{ass:OMA} is a key ingredient in our analysis: it guarantees that boosting arm $i$ forces the oracle to allocate at least as much budget to $i$ as the clairvoyant optimum, which in turn provides a deterministic lower bound on the expected number of successes collected on arm $i$ up to round $t$ and stabilizes exploration.

Several common families of distributions with cumulative distribution function $G(x,\lambda)$ admit the construction of a surrogate $\widetilde{G}$ that satisfies all the properties stated above and is moreover \emph{strongly concave} in $x$. In Lemma \ref{lemma:boosting_monotone_exp_family} we show that these properties hold for the whole class of distributions admitting an exponential-form representation. This includes, for instance, the Exponential distribution, as well as Weibull and Gamma distributions provided their shape parameters are fixed and satisfy $k\le 1$ (Weibull) and $\alpha\le 1$ (Gamma). More generally, many other distribution families fall within this class (Lomax, Beta and many others).
If strong concavity is not required and one instead relies on an oracle satisfying Assumption~\ref{ass:OMA}, then no restriction on the shape parameter is needed (e.g., for Weibull or Gamma). Appendix~\ref{app: how to build G tilde} explains how $\widetilde{G}$ can be constructed explicitly.

\begin{remark}
    A naive Explore-Then-Commit (ETC) approach, namely, first estimating the arm parameters and then committing to the allocation that is optimal with respect to these estimates, is not competitive in this setting. Indeed suppose we dedicate order $T^{\alpha}$ with $\alpha \in [0,1]$ rounds to exploring each arm, for a total exploration length of $KT^{\alpha}$. The statistical accuracy of the resulting parameter estimates would then be at most $T^{-\alpha/2}$. This estimation error, compounded over the remaining $T-KT^{\alpha}$ rounds, ultimately yields a regret of order $\tilde{\mathcal{O}}(T^\alpha+T^{1-\alpha/2})$.
    The value of $\alpha$ that minimizes the asymptotic behavior is $\alpha = 2/3$. Therefore, the final upper bound is, in the best case, $\tilde{\mathcal{O}}(T^{2/3})$.
\end{remark}

\subsection{Regret Upper Bounds}\label{subsec:upperbound}
We now derive upper bounds on the regret of \texttt{RA-UCB}, as defined in \eqref{eq: total cumulative regret}.
\begin{theorem}\label{thm: regret sqrt(T)}
Assume that the budget $B$ and the Lipschitz constants satisfy $B L_\lambda/L_\mu \geq \sqrt{8/3}$.
Then \texttt{RA-UCB} achieves the regret bound:
$$
\mathbb{E}[\mathcal{R}_T] \leq C_{\text{sqrt}}\,\sqrt{K T \log T} + \mathcal{O}(K^2 \log T),
$$
where $C_{\text{sqrt}}>0$ is a problem-dependent constant.
\end{theorem}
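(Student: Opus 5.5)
We follow the standard optimistic template: split the regret along the good event $\mathscr{G}$, bound the per-round regret on it by the widths of the confidence intervals, and sum those widths using a deterministic lower bound on the number of informative samples per arm.

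\textbf{Initialization and bad event.} The initialization phase costs at most $K\lfloor\log T\rfloor$ rounds, each contributing regret at most $\sum_i p_i\le K$. This gives the $\mathcal{O}(K^2\log T)$ term. It also guarantees $n_{t,i}\ge 1$ with high probability, so the estimators are well defined from the start.

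For the main phase, with $\alpha=t'^{-3}$, Lemma~\ref{lemma: conc ineq lambda} combined with a union bound over $\ell\le t'$ gives failure probability $\mathcal{O}(t'^{-2})$. Lemma~\ref{lemma: conc ineq p_i} gives $2t'^{-3B^2L_\lambda^2/(4L_\mu^2)}+2t'^{-2}$. The hypothesis $BL_\lambda/L_\mu\ge\sqrt{8/3}$ is exactly what makes the first exponent at least $2$. Hence $\sum_{t'}\mathbb{P}(\mathscr{G}_{t'}^c)=\mathcal{O}(K)$ after a union bound over arms. Each estimation round spans $K$ interaction rounds with regret at most $K$ each, so the bad event contributes a lower-order term.

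\textbf{Per-round regret on the good event.} Fix an interaction round $t$ in estimation round $t'$ with boosted arm $i$. Write $\bm{x}_t=\tilde{\mathcal{P}}(\bar{\bm\lambda},\bar{\bm\lambda}',\bar{\bm p})$. We decompose
\[
\sum_k p_kG(x_k^*,\lambda_k)-\sum_kp_kG(x_{t,k},\lambda_k)
\]
into three pieces, comparing both the optimal allocation and $\bm{x}_t$ to the surrogate objective at the algorithm's parameters.

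\begin{itemize}
\item \emph{Between $\bm x^*$ and $\bm x_t$ under the surrogate.} Optimality of $\bm x_t$ for $\tilde{\mathcal P}$ makes this difference nonpositive.
\item \emph{Switching $\bm x^*$ to the surrogate objective.} By Consistency and Lipschitzness (Assumption~\ref{ass:what we want}(\ref{prp: Consistency}),(\ref{prp:lipschitz_tildeG})), this costs $\sum_k\big[|p_k-\bar p_k|+L_{\widetilde\lambda}|\lambda_k-\bar\lambda_k|+L_{\widetilde\lambda'}|\lambda_k-\bar\lambda'_k|\big]$.
\item \emph{Switching back at $\bm x_t$.} This costs the same amount.
\end{itemize}

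On $\mathscr{G}_{t'}$, every such difference is at most a constant times the confidence width $w_{t',k}=\frac{B}{L_\mu}\sqrt{3\log t'/(2n_{t',k})}$. The constant is $1+p_k$ divided by $\hat C$ for the $p$-width, and here one must check that $\hat C_{t,k}\ge C_k^*$ up to constants. Thus the regret at round $t$ is at most $c\sum_k w_{t',k}/C_k^*$.

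\textbf{Counting samples (main obstacle).} The crux is a lower bound $n_{t',k}\gtrsim t'$ for every arm $k$ with $x_k^*>0$. Arms with $x_k^*=0$ have $C_k^*=0$, so they need a separate argument; there I would show their contribution is controlled because pessimistic parameters keep them at zero allocation.

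At each estimation round for arm $k$, Assumption~\ref{ass:OMA} applies on the good event, since the true parameters lie inside the bounds. It gives $x_{t,k}\ge x_k^*$, so the conditional success probability is at least $p_kG(x_k^*,m/B)=p_kC_k^*$ by Assumption~\ref{ass:monotonicity_G}. This also justifies $\hat C_{t,k}\ge C_k^*$. A martingale (Freedman/Azuma) argument then gives $n_{t',k}\ge p_kC_k^*t'/2$ except with probability $\mathcal{O}(t'^{-2})$; this bad event is absorbed like the others.

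Plugging in yields per-round regret $\mathcal{O}\big(\sum_k \sqrt{\log T/t'}\big)$ with problem-dependent constants. Summing over the $K$ interaction rounds in each of the $T/K$ estimation rounds gives
\[
K\sum_{t'\le T/K}\sqrt{\log T/t'}=\mathcal{O}\big(\sqrt{KT\log T}\big)\cdot(\text{constant}).
\]
The extra factor $K$ coming from the sum over arms is folded into $C_{\text{sqrt}}$. If that is not acceptable, I would instead use Cauchy–Schwarz on the $p_k$-weighted sum.

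I expect the delicate points to be two: handling the interaction between the good event at estimation index $t'$ and the updates within a block, and justifying $\hat C_{t,k}\ge C_k^*$ on non-boosted rounds. For the latter I would use the $\hat C$ from the last boosted round of arm $k$.
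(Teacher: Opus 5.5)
\textbf{Verdict: genuine gap.} Your skeleton is the paper's. You have the initialization cost $K^2\log T$, and the add-and-subtract decomposition in which optimality of $\bm{x}_t$ for $\widetilde{\mathcal P}$ kills the middle term. Widths come from the two concentration lemmas, and Oracle Monotonicity at boosted rounds makes the sample counts grow linearly. Two of your choices differ from the paper but are legitimate: you bound the $\bm{x}_t$ term via Lipschitzness of $\widetilde G$ plus Consistency rather than Uniform Bias Control, and you use a high-probability count bound indexed by boosted rounds rather than the paper's $\mathrm{Bin}(t,q_i^*/K)$ domination with a Chernoff bound on $\mathbb E[1/\sqrt{n_{t,i}}]$.

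\textbf{Main gap: the bad-event accounting.} Lemma~\ref{lemma: conc ineq p_i} only controls deviations intersected with $\mathscr G_{t-1}$. This is because its denominator bound $\sum_u\hat r_{u,i}(t)\ge tC_i^*$ uses $x_{u,i}\ge x_i^*$ at \emph{every} past boosted round. Your bound $n_{t',k}\ge p_kC_k^*t'/2$ likewise needs the Oracle Monotonicity drift at essentially all past boosted rounds. So the event you must pay for is the cumulative $\overline{\mathscr G}_{t'}$, not a per-round failure.

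Since $\mathscr G_{t'}\subseteq\mathcal N_1$, we get $\mathbb P(\overline{\mathscr G}_{t'})\ge\mathbb P(\overline{\mathcal N}_1)$, which does not vanish with $T$. With $\alpha=t'^{-3}$ the radius at $t'=1$ is even zero. Hence $\sum_{t'}\mathbb P(\mathscr G_{t'}^c)=\mathcal O(K)$ is false, and the bad-event bound you can derive this way is linear in $T$. Your claim \emph{except with probability $\mathcal O(t'^{-2})$} for the count bound fails for the same reason.

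The paper bounds $\mathbb P(\overline{\mathscr G}_t)\le\sum_{u\le t}\mathbb P(\overline{\mathcal N}_u\cap\mathscr G_{u-1})$, using one level $\alpha=t^{-3}$ for all $u\le t$:
\[
\mathbb P(\overline{\mathscr G}_t)\le 2Kt(t+1)\alpha+2Kt\,\alpha^{\frac{B^2L_\lambda^2}{4L_\mu^2}}\le\frac{2K}{t}+\frac{2K}{t^2}+2K\,t^{1-\frac{3B^2L_\lambda^2}{4L_\mu^2}},
\]
so bad events cost $\mathcal O(K\log T)$. This is where the hypothesis actually bites. The condition $B^2L_\lambda^2/L_\mu^2\ge 8/3$ is exactly what makes the last term $\mathcal O(K/t)$ despite the factor $t$ from the union over past rounds. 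In your per-round reading, $3B^2L_\lambda^2/(4L_\mu^2)>1$ would already suffice, which is a symptom of the missing accumulation. The device is cleanest if the radii are built on $\log T$, so that one fixed family of events serves every round.

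\textbf{Arms with $x_k^*=0$.} Your plan here rests on a false claim. Assumption~\ref{ass:OMA} only lower-bounds the boosted arm's allocation. When all non-boosted arms are pessimistic, the water level of $\widetilde{\mathcal P}$ can drop below such an arm's marginal at $0$, so it can receive budget. Moreover, each such arm is itself boosted once per block. The paper does not try to control their allocation. It notes that they enter the true regret as $-p_kG(x_{t,k},\lambda_k)\le 0$ and restricts all sums to $S=\{k:x_k^*>0\}$. If you follow this, keep the $S^c$ coordinates in the surrogate-optimality step, since $\bm{x}_t$ maximizes the full sum, not its restriction to $S$.

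\textbf{The direction of $\hat C_{t,k}\ge C_k^*$.} This inequality is the right direction for the width bound but the wrong one for coverage. Lemma~\ref{lemma: conc ineq p_i} certifies the wider radius built with $C_k^*$, not the algorithm's radius built with $\hat C_{t,k}$. Its proof really works with $\frac1t\sum_u\hat r_{u,k}(t)\ge\frac1t\sum_u G(x_{u,k},m/B)$. Taking $\hat C$ to be this running average over boosted rounds gives both directions on the good event. The value at the last boosted round, which you proposed, need not.
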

We emphasize that $C_{\text{sqrt}}$ involves sums over $i\in[K]$; hence, when making the dependence on $K$ explicit, the bound scales as $\widetilde{\mathcal{O}}(K^{3/2}\sqrt{T})$.
The proof follows a standard UCB decomposition. Starting from \eqref{eq: E[R_t|F_{t-1}]}, we upper bound the per-round regret by the estimation errors of $(p_i,\lambda_i)$, controlled by the confidence intervals. The main technical step is to lower bound $\mathbb{E}[n_{t,i}]$, which in turn yields an upper bound on $\mathbb{E}\big[1/\sqrt{n_{t,i}}\big]$. This is possible because each arm is boosted once every $K$ rounds, and on the good event the Oracle Monotonicity Assumption \ref{ass:OMA} ensures $x_{t,i} \geq x_i^*$. Hence, at each estimation round for arm $i$, $\mathbb{P}(\text{success with }x_{t,i})\geq \mathbb{P}(\text{success with }x_i^*)$,
and summing over estimation rounds gives, for all $i\in[K]$,
$$
\mathbb{E}[n_{t,i}] \;\ge\; \frac{\mathbb{E}[n_{t,i}^*]}{K} \;=\; \frac{t q_i^*}{K},
$$
where $n_{t,i}^*$ is the number of successes obtained by playing the optimal allocation at every round and $q_i^*=p_iG(x_i^*,\lambda_i)$ is the corresponding success probability at $x_i^*$. See Appendix~\ref{app_sec: regret of sqrt(T)} for the full proof.

Let us now make stronger assumptions on the set of parameters $\{p_i\}_{i \in [K]}$ and on the surrogate objective $\widetilde{G}$.
\begin{assumption}\label{ass:p_i}
    For all arms $i \in [K]$, $p_i \geq p_{\min}>0$.
\end{assumption}
\begin{assumption} \label{ass:uniform_sc_smooth}
    \begin{enumerate}[label=\textup{(\roman*)}, ref=\roman*]
    \item (Strong concavity and Smoothness in $x$) For all $\lambda,\lambda' \in \Lambda$, $\widetilde{G}(x,\lambda,\lambda')$ is twice continuously differentiable in $x$ on $[0,B]$ and there exist constants $\mu_G,L_G>0$ such that for all $x\in[0,B]$,
$$
-\partial_{xx}\widetilde{G}(x,\lambda,\lambda') \in [\mu_G,\,L_G]
$$
    \item (Gradient Lipschitzness in parameters) Let $\widetilde{g}(x,\lambda,\lambda'):=\partial_x\widetilde{G}(x,\lambda,\lambda')$. There exist constants $G_{\max},L_{g,\lambda},L_{g,\lambda'}>0$ such that for all $x\in[0,B]$ and all $\lambda,\lambda',\bar\lambda,\bar\lambda'\in\Lambda$:  $|\widetilde{g}(x,\lambda,\lambda')|\leq G_{\max}$ and
$$
|\widetilde{g}(x,\lambda,\lambda')-\widetilde{g}(x,\bar\lambda,\bar\lambda')| \leq L_{g,\lambda}|\lambda-\bar\lambda|+L_{g,\lambda'}|\lambda'-\bar\lambda'|
$$
    \end{enumerate}
\end{assumption}

\begin{theorem}\label{thm: regret bound of logT}
    Under additional Assumption \ref{ass:p_i} and \ref{ass:uniform_sc_smooth} \texttt{RA-UCB} achieves the improved regret bound:
    $$
    \mathbb{E}[\mathcal{R}_T] \leq C_{\text{log}} (\log T)^2 + \mathcal{O}(K^2\log T)
    $$
    where $C_{\text{log}} >0$ is a problem-dependent constant.
\end{theorem}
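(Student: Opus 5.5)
The plan is to replace the linear-in-error per-round bound used for Theorem~\ref{thm: regret sqrt(T)} with a quadratic one. Strong concavity and smoothness (Assumption~\ref{ass:uniform_sc_smooth}) make the optimum $\bm{x}^*$ of $\mathcal{P}(\bm\lambda,\bm p)$ a point where the objective is locally quadratic. Then the instantaneous regret should scale like $\|\bm{x}_t-\bm{x}^*\|^2$, and hence like the \emph{square} of the confidence radii. If the radii are of order $\sqrt{\log t / n_{t,i}}$ with $n_{t,i}$ growing linearly, summing $\log t/t$ over $t$ gives $(\log T)^2$.

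\textbf{Step 1: curvature of the regret.} Decompose the regret on the good event $\mathscr{G}_{t-1}$ and treat its complement with the concentration bounds of Lemma~\ref{lemma: conc ineq lambda} and Lemma~\ref{lemma: conc ineq p_i}. With $\alpha=t^{-3}$ these failure probabilities are summable, contributing $\mathcal{O}(K^2\log T)$ together with the initialization phase.

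On the good event I would use the KKT conditions of $\mathcal{P}$. There is a multiplier $\nu^*$ with $p_i g(x_i^*,\lambda_i)=\nu^*$ on the support and $\le\nu^*$ off it. Since $\sum_i x_{t,i}\le B$ and, by monotonicity, budget is exhausted, the first-order term $\sum_i p_i g(x_i^*,\lambda_i)(x_i^*-x_{t,i})$ is nonpositive up to complementary slackness. The second-order term is bounded by $\tfrac{L_G}{2}\sum_i p_i (x_{t,i}-x_i^*)^2$. One caveat: Assumption~\ref{ass:uniform_sc_smooth} is stated for $\widetilde G$, and consistency gives $G(\cdot,\lambda)=\widetilde G(\cdot,\lambda,\lambda)$, so the curvature transfers to $G$. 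Hence $\mathbb{E}[R_t\mid\mathcal{F}_{t-1}]\lesssim \|\bm{x}_t-\bm{x}^*\|^2$.

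\textbf{Step 2: stability of the oracle.} I would show $\|\bm{x}_t-\bm{x}^*\|\lesssim \sum_k (|\bar p_{t',k}-p_k| + |\bar\lambda_{t',k}-\lambda_k|+|\bar\lambda'_{t',k}-\lambda_k|)$. This is a standard perturbation argument for strongly concave separable problems with a single linear constraint. Compare the KKT systems of $\mathcal{P}(\bm\lambda,\bm p)$ and $\widetilde{\mathcal{P}}(\bar{\bm\lambda},\bar{\bm\lambda}',\bar{\bm p})$. The gradients differ pointwise by at most $G_{\max}|\bar p-p| + p(L_{g,\lambda}|\bar\lambda-\lambda|+L_{g,\lambda'}|\bar\lambda'-\lambda|)$, using consistency $\widetilde g(x,\lambda,\lambda)=g(x,\lambda)$. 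Strong concavity with modulus $p_{\min}\mu_G$ (this is where Assumption~\ref{ass:p_i} is needed) then converts the gradient perturbation into an allocation perturbation. Both the multiplier and the active set are handled by the usual monotone-operator inequality $\langle \nabla f(\bm x)-\nabla f(\bm y),\bm x-\bm y\rangle\le -p_{\min}\mu_G\|\bm x-\bm y\|^2$ over the feasible simplex.

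On the good event all parameter errors are at most twice the confidence radii, which are $\mathcal{O}(\sqrt{\log t/n_{t',k}})$. For $p$ this uses $\hat C_{t,i}$ bounded below; Assumption~\ref{ass:p_i} together with strong concavity should keep $x_i^*$ away from $0$, or else arms with $x_i^*=0$ are handled separately.

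\textbf{Step 3: linear growth of $n_{t,i}$, the main obstacle.} Combining the previous steps gives $\mathbb{E}[R_t]\lesssim \sum_k \mathbb{E}[\log t/n_{t',k}]$. For $\sum_t$ to be $(\log T)^2$ I need $\mathbb{E}[1/n_{t',k}]=\mathcal{O}(K/(t q_k^*))$, not merely $1/\mathbb{E}[n]$. Here I would strengthen the expectation argument of Theorem~\ref{thm: regret sqrt(T)} to a high-probability one. On $\mathscr{G}$, Assumption~\ref{ass:OMA} gives success probability at least $q_k^*$ at each boosted round of arm $k$. A martingale Freedman/Azuma bound then shows $n_{t',k}\ge t' q_k^*/2$ except with probability $e^{-c t' q_k^{*2}}$. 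The exceptional event contributes $\sum_t t\,e^{-ct}=\mathcal{O}(1)$ times constants, while the initialization ensures $n\ge1$.

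Summing over $t\le T$ gives $\sum_t K\log t/t \cdot \sum_k 1/q_k^*\lesssim C_{\log}(\log T)^2$. Strictly $q_k^*>0$ is needed; arms with $x_k^*=0$ should contribute no first-order regret by the KKT inequality, and I expect to treat them via the strict-complementarity slack. This interplay between degenerate arms and the high-probability counting is where I anticipate most of the technical work.
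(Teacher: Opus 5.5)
Your proposal is correct and follows essentially the same route as the paper: the smoothness-plus-KKT quadratic bound $\mathbb{E}[R_t\mid\mathcal{F}_{t-1}]\le \frac{L_G}{2}\|\bm{x}_t-\bm{x}^*\|_2^2$, then the strong-monotonicity stability argument (modulus $p_{\min}\mu_G$ for the true-parameter objective), which turns the gradient perturbation $G_{\max}|\Delta p|+L_{g,\lambda}|\Delta\lambda|+L_{g,\lambda'}|\Delta\lambda'|$ into an allocation perturbation, and finally an $\mathcal{O}(K/(tq_i^*))$ bound on $\mathbb{E}[1/n_{t,i}]$ summed against $\log t$ to give $(\log T)^2$. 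The differences are minor: you control $1/n_{t,i}$ with a martingale high-probability bound instead of the paper's binomial stochastic-domination and Chernoff step, and you explicitly flag the first-order term $\sum_{i:x_i^*=0}\xi_i x_{t,i}$ from inactive arms (which is nonnegative, not nonpositive, so it genuinely has to be shown to vanish), whereas the paper sidesteps it by restricting to $S$.
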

Here again, making the dependence on $K$ explicit by factoring it out of $C_{\text{log}}$, the regret scales as $\widetilde{\mathcal{O}}\big((K\log T)^2\big)$.
The proof follows the same outline as Theorem \ref{thm: regret sqrt(T)}. The only additional ingredient is that Assumption \ref{ass:p_i}, combined with smoothness and strong concavity of $\widetilde{G}$, makes the objective of $\widetilde{\mathcal{P}}$ smooth and strongly concave, so the regret can be controlled by a sum of squared confidence radii for the parameter estimates. The full proof is in Appendix \ref{app:logT-regret-ub-generic}. In Appendix \ref{app_sec: exponential distribution case}, we provide a complete treatment of the exponential-threshold case: including the construction of the estimators, the surrogate objective, and the resulting derivation of the regret bounds. Theorem \ref{thm:problem-independent bound} establishes a problem-independent regret upper bound of order $\mathcal{O}(T^{4/5})$ for the exponential case.

\section{The Unknown Budget Model}\label{sec: unknown budget}
We now study an unknown budget variant where the available budget $B_t$ is not revealed at the start of round $t\in[T]$. The sequence $\{B_t\}_{t\in[T]}$ may be arbitrary (stochastic or adversarial), and we only assume a uniform upper bound:
\begin{assumption}\label{ass: boundedness of B_t}
For every $t\in[T]$, $0< B_t \leq B_{\max}$
\end{assumption}
All other aspects of the model remain unchanged. Throughout this section we assume that (i) within a round the learner may \textit{switch} between arms and return to previously visited arms, and (ii) allocations can be made in infinitesimal increments. Fix a round $t$ and the (boosted) parameters used by the algorithm in that round,
$\bm{\bar{\theta}}_{t-1}=(\bar{\bm{p}}_{t-1},\bar{\bm{\lambda}}_{t-1},\bar{\bm{\lambda}}'_{t-1})$.
Consider the surrogate objective:
$$
\widetilde{F}(\bm{x},\bm{\bar{\theta}}_{t-1}):=\sum_{i=1}^K \bar p_{t-1,i}\,\widetilde{G}(x_i,\bar\lambda_{t-1,i},\bar\lambda'_{t-1,i})
$$
over $\mathcal{X}(B):=\Big\{\bm{x}\in\mathbb{R}^K_+:\sum_{i=1}^K x_i\le B\Big\}$.
Assume that for all admissible $\lambda,\lambda'\in\Lambda$ the map $\widetilde{G}(x,\lambda,\lambda')$ is differentiable and concave in $x$ on $[0,B_{\max}]$ (in particular, its derivative is non-increasing), and that a deterministic tie-breaking rule is fixed when maximizers are not unique.\footnote{Under strong concavity of $x\mapsto \widetilde{G}(x,\lambda,\lambda')$, the optimizer is unique and tie-breaking is unnecessary.}
Define the marginal gains
\begin{align*}
m_i(x,\bm{\bar{\theta}}_{t-1})&:=\frac{\partial}{\partial x_i}\widetilde{F}(\bm{x},\bm{\bar{\theta}}_{t-1})\\
&=\bar p_{t-1,i}\,\widetilde{g}(x,\bar\lambda_{t-1,i},\bar\lambda'_{t-1,i})
\end{align*}
where $\widetilde{g}:=\partial_x\widetilde{G}$. Since $B_t$ is not known in advance, we cannot compute $\bm{x}_t$ beforehand by solving $\widetilde{\mathcal{P}}$ with budget $B_t$. Instead, we implement the corresponding KKT conditions via a water-filling dynamics.
Let $x_{t,i}(s)$ denote the cumulative allocation to arm $i$ up to intra-round step $s\in[0,B_t]$ (so $\sum_i x_{t,i}(s)=s$). The algorithm allocates an infinitesimal $ds$ to an arm $I(s)\in \underset{i \in [K]}{\text{arg max }}\; m_i\big(x_{t,i}(s),\bm{\bar{\theta}}_{t-1}\big)$ and sets $x_{t,I(s)}(s+ds)=x_{t,I(s)}(s)+ds$ (leaving the other coordinates unchanged), until $s=B_t$.
Under the concavity assumptions above, this water-filling rule produces an optimizer of $\max_{\bm{x}\in\mathcal{X}(B_t)}\widetilde{F}(\bm{x},\bm{\bar{\theta}}_{t-1})$. Hence it is equivalent to solving $\widetilde{\mathcal{P}}$ at budget $B_t$.

We define Marginal Greedy Upper Confidence Bound (\texttt{MG-UCB}) as follows: the estimation and boosting steps (confidence bounds, construction of $\bm{\bar{\theta}}_{t-1}$, and cycling over arms) are identical to \texttt{RA-UCB}, while the within-round allocation is executed through the above water-filling procedure, and therefore does not require prior knowledge of $B_t$ (see Algorithm \ref{algo: MG_UCB pseudocode} in Appendix \ref{app_sec: unknown budget}).
\begin{lemma}\label{lemma: equivalence RA-UCB and MG-UCB}
For any fixed budget B and conditioning on the filtration $\mathcal{F}_{t-1}$ (so that the estimators $\widehat{\bm{\theta}}_{t-1}$ are fixed), the allocation produced by \texttt{MG-UCB} coincides with the one produced by \texttt{RA-UCB} when the latter solves $\widetilde{\mathcal{P}}$ with budget $B$:
$$
\bm{x}_t^{\texttt{MG-UCB}}(B,\widehat{\bm{\theta}}_{t-1})=\bm{x}_t^{\texttt{RA-UCB}}(B,\widehat{\bm{\theta}}_{t-1})
$$
\end{lemma}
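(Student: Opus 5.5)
The plan is to show that the water-filling trajectory, run up to the full budget $B$, ends at a point satisfying the KKT conditions of $\widetilde{\mathcal{P}}$ at budget $B$. By concavity these conditions are sufficient for optimality, and the fixed deterministic tie-breaking rule then forces the two outputs to coincide. Conditioning on $\mathcal{F}_{t-1}$ fixes $\bm{\bar{\theta}}_{t-1}$, and the boosting choices are identical in both algorithms. Hence both algorithms face the same deterministic concave separable program $\max_{\bm{x}\in\mathcal{X}(B)}\widetilde{F}(\bm{x},\bm{\bar{\theta}}_{t-1})$, with marginal gains $m_i(\cdot)=\bar p_{t-1,i}\,\widetilde{g}(\cdot,\bar\lambda_{t-1,i},\bar\lambda'_{t-1,i})$ that are non-increasing in their own coordinate.

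\textbf{Step 1 (invariant along the dynamics).} Define the water level $\nu(s):=\max_i m_i(x_{t,i}(s))$. I will show the following by induction on $s$, in the continuous limit. Every arm with $x_{t,i}(s)>0$ has $m_i(x_{t,i}(s))=\nu(s)$, and every arm with $x_{t,i}(s)=0$ has $m_i(0)\le \nu(s)$. This holds because an arm only receives mass while it attains the argmax, and receiving mass lowers its own marginal gain while leaving the others unchanged. So the active arms are kept equalized, and a new arm joins exactly when $\nu(s)$ descends to its $m_i(0)$. In the same way I will show that $\nu(s)$ is non-increasing in $s$.

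\textbf{Step 2 (KKT at $s=B$).} At termination we have $\sum_i x_{t,i}(B)=B$, and the invariant gives multiplier $\nu(B)$ for the budget constraint: $m_i(x_i)=\nu$ if $x_i>0$ and $m_i(0)\le\nu$ otherwise. If $\nu(B)\ge 0$, these are exactly the KKT conditions for $\max \widetilde F$ subject to $\sum x_i\le B$ and $x\ge0$. Since $\widetilde F$ is concave and the constraints are linear, KKT is sufficient, so $\bm{x}_t^{\texttt{MG-UCB}}$ is a maximizer. Nonnegativity of $\nu$ follows from Assumption~\ref{ass:what we want}(\ref{prp: Boundness and Monotonicity}): $\widetilde G$ is non-decreasing in $x$, so $\widetilde g\ge0$, and with $\bar p\ge 0$ this gives $m_i\ge 0$. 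Therefore using the full budget is consistent with complementary slackness.

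\textbf{Step 3 (uniqueness / tie-breaking).} Under strong concavity, as in Lemma~\ref{Lemma: structural property of the Opt Problem} and the footnote, the maximizer is unique, so the two allocations are equal. In the merely concave case, maximizers can form a set; for example, flat regions where $m_i$ is constant create ties among arms. I would define the deterministic tie-breaking rule of $\widetilde{\mathcal{P}}$ as the selection induced by the water-filling order, or equivalently argue that both algorithms apply the same fixed rule to the same set of maximizers.

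The main obstacle is Step 1 in the continuous-time formulation. When several arms tie at the maximum, allocating ``to one arm'' at a time must be interpreted as a limit of $ds$-steps, which in effect splits mass so as to keep their marginal gains equal. I would handle this by working with a discretization of step $\delta$, showing that the invariant holds up to $O(\delta)$ by continuity of $\widetilde g$, and then letting $\delta\to0$. Compactness of $\mathcal{X}(B)$ and continuity of $m_i$ then pass the approximate KKT conditions to the limit.
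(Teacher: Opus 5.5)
Your proof is correct, but it takes a different route from the paper's. The paper never certifies optimality of the dynamics through KKT conditions. Instead, it proves (Lemma~\ref{lemma: discretized equivalence}) that greedy on the finite-difference marginals $f_i((n+1)\Delta)-f_i(n\Delta)$ is exactly optimal for the $\Delta$-grid problem, using an exchange argument: both greedy and every optimizer select the top $M$ marginals. It then rounds $\bm{x}^*$ down to the grid to get $\mathrm{OPT}(B)-\mathrm{OPT}_\Delta(B)\le KG_{\max}\Delta$. By compactness, every $\Delta\to0$ limit point of the greedy allocations attains $\mathrm{OPT}(B)$, and strong concavity makes that limit the unique maximizer.

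You instead track the water level $\nu(s)$ along the derivative-based dynamics. You show that active arms stay within a vanishing error of that level, and you pass approximate KKT conditions to the limit, with $\nu$ becoming the budget multiplier.

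The paper's route gives a quantitative value bound (the $KG_{\max}\Delta$ term), which it reuses in Lemma~\ref{lemma: discretization_regret}. Your route has three advantages:
\begin{itemize}
\item It analyzes the rule \texttt{MG-UCB} actually implements, which ranks arms by $\bar p\,\widetilde g$ at the current point. The paper identifies this rule with finite-difference greedy, although at a fixed $\Delta$ the two can pick different arms.
\item It needs only continuity of $\widetilde g$, which is automatic for a differentiable concave function.
\item It produces an explicit dual certificate in the spirit of Lemma~\ref{Lemma: structural property of the Opt Problem}.
\end{itemize}

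One caution on Step 3: your second option does not work. Water-filling does not apply a tie-breaking rule to the argmax set; it outputs whichever maximizer the dynamics reaches. Without strong concavity, equality therefore holds only if the oracle's tie-breaking is defined to be the water-filling output. The paper is equally informal here, and under strong concavity the issue disappears.
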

A formal argument is given in Appendix \ref{app_sec: unknown budget}.
The same equivalence clearly holds for the clairvoyant optimum computed with the true parameters $\bm{\theta}=(\bm{p},\bm{\lambda})$: $\bm{x}^{*}_{\texttt{MG-UCB}}(B,\bm{\theta})=\bm{x}^{*}_{\texttt{RA-UCB}}(B,\bm{\theta})$.
Moreover, \texttt{MG-UCB} does not require the budget to be known ex ante since its within-round allocation is computed online via water-filling and can be executed at any realized budget level. Together with Lemma \ref{lemma: equivalence RA-UCB and MG-UCB}, this yields a pointwise equivalence: in each round, conditional on the past, \texttt{MG-UCB} produces the same allocation as \texttt{RA-UCB} would if run with the realized budget as input. Hence, any regret guarantee proved for \texttt{RA-UCB} under a given budget model transfers unchanged to \texttt{MG-UCB} in the unknown-budget, switching setting.
The only subtlety is that the confidence radii in \texttt{RA-UCB} are written in terms of the (known) budget $B$, whereas here the budget $B_t$ is unknown and varies with $t$. Under Assumption \ref{ass: boundedness of B_t}, this causes no additional difficulty: since $B_t \leq B_{\max}$, all concentration bounds (and hence the resulting regret guarantees) remain valid by replacing $B$ with the uniform upper bound $B_{\max}$.
\begin{remark}[Implementability]
\texttt{MG-UCB} is described in continuous time and assumes infinitesimal allocation increments, which is not implementable in practice. We therefore introduce a discretized variant, \texttt{MG-UCB-$\Delta$} (Algorithm~\ref{algo: MG_UCB_discretized pseudocode}), which allocates the budget in steps of size $\Delta>0$. In this discretized case the expected cumulative regret increases by at most an additive $\mathcal{O}(\Delta T)$ term, plus a mismatch term due to using boosted instead of true parameters (see Lemma \ref{lemma: discretization_regret} in Appendix \ref{app_sec: unknown budget}).
\end{remark}

\section{Experiments}
\label{sec:expe}

We evaluate \texttt{RA-UCB} on EdNet-KT3~\citep{choi2020ednetlargescalehierarchicaldataset} in a time-limited quiz design task. EdNet-KT3 is a large-scale dataset of real students solving  multiple-choice questions on a learning platform. It is particularly well-suited to our setting as it provides logs with both correctness and fully observed response times, allowing us to estimate arm-specific time-to-answer distributions offline even when the student answers incorrectly. Using real logs, we select 20 questions with approximately Weibull-distributed response times (see figure \ref{fig:response-time}) that will be our arms. Ground-truth parameters are estimated offline, and regret  is measured against an oracle policy with full knowledge of the parameters (see section \ref{sec:realdata} for the detailed environment construction).

At each round $t$, a new student arrives with budget $B$, and the learner allocates $x_{t,1},\ldots,x_{t,K}$ across the $K=20$ questions under $\sum_i x_{t,i}\le B$ to maximize the number of correct answers. Once the learner allocates $x_{t,i}$, he receives censored feedback: the response time is observed only when the student succeeds within the allocated time; otherwise only a failure indicator is observed. Results can be found figure \ref{fig:image1}.

We also consider an advertising-inspired allocation task calibrated from Criteo logs \citep{diemert2017attributionmodelingincreasesefficiency}.
A user $u$ arrives with budget $B_u$ that is the number of times he will visit the platform. The learner must allocates impressions across campaigns under $\sum_i x_{u,i}\le B_u$. We build a semi-synthetic environment by fitting a discrete Weibull model to per-user impression counts for each campaign.
Again, regret is measured against an oracle policy with full knowledge of the parameters.
More details on the experiments can be found in  the appendix \ref{sec:expe-appendix}.

\begin{figure}[H]
    \centering
    \includegraphics[width=\linewidth]{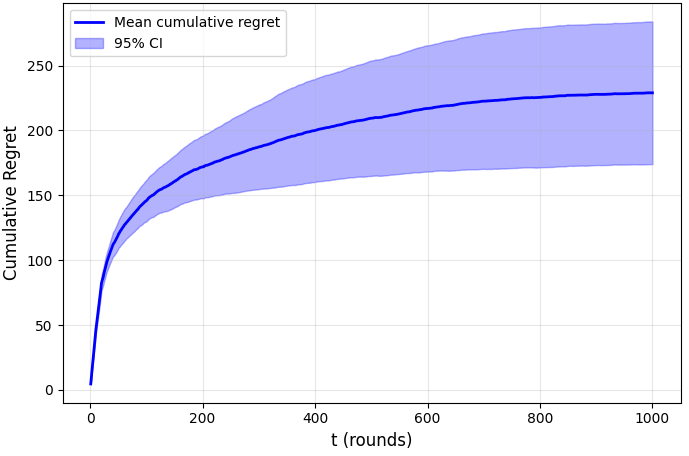}
    \caption{EdNet-KT3 quiz benchmark. Confidence intervals are computed over $5$ independent runs using $5$ batches of $1{,}000$ different users. $B=700s, K=20, T=1000$}
    \label{fig:image1}
\end{figure}

\section{Conclusion and Future Directions}
We study a model of sequential resource allocation with $K$ arms over $T$ rounds, where an arm yields a success only if (i) an arm-specific Bernoulli activation occurs and (ii) the allocated resource exceeds a stochastic threshold. We propose \texttt{RA-UCB}, which decouples reward collection from parameter estimation, and prove regret bounds of $\widetilde{\mathcal{O}}(\sqrt{T})$ and, under stronger assumptions, $\mathcal{O}(\mathrm{poly}\text{-}\log T)$, against a problem-independent lower bound of $\Omega(T^{1/3})$. Motivated by online advertising, we also extend the method and the analysis to the \textit{unknown per-round budget} setting, introducing an algorithm that inherits the same regret guarantees.

A natural next step is to study \textit{unknown and stochastic} budgets without the switching possibility, like in the known budget setting. In this regime, even characterizing the optimal policy is nontrivial, since the allocation order matters and optimal decisions may depend on the full budget distribution. Another interesting extension is a \textit{cascading}-style reward \citep{Kveton}, where the round reward is binary: it equals $1$ if at least one arm succeeds, and $0$ otherwise.

\nocite{*}

\bibliography{bibliography}
\bibliographystyle{plainnat}

\newpage
\appendix
\onecolumn

\section{Experiments} \label{sec:expe-appendix}
\subsection{Simulations and comparison with other algorithms}

\begin{figure}[H]
    \centering
    \begin{minipage}{0.6\linewidth}
        \centering
        \includegraphics[width=\linewidth]{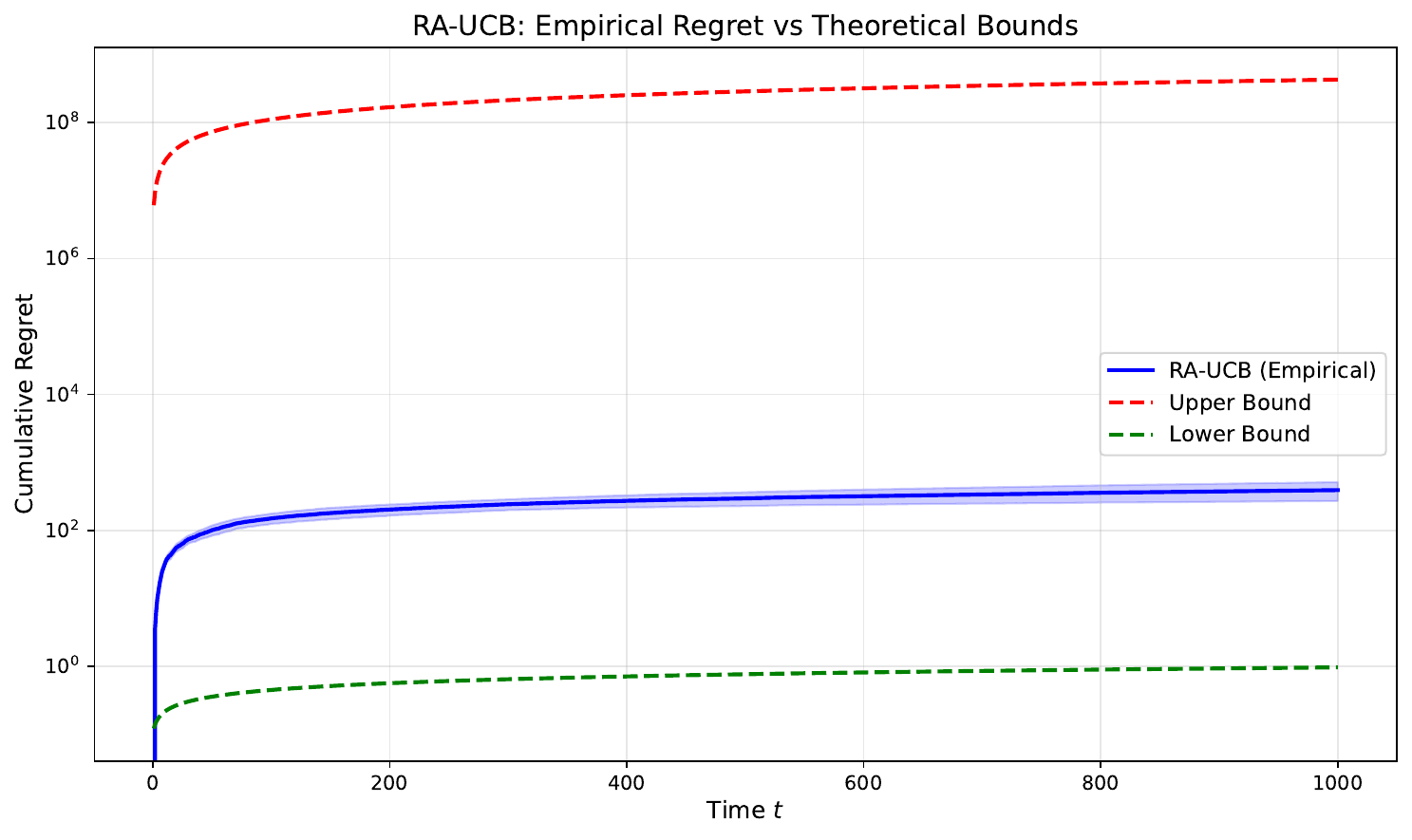}
        \caption{Log-scale comparison of empirical and theoretical regret bounds for \texttt{RA-UCB}.}
        \label{fig:Theory}
    \end{minipage}

    \par\medskip 

    \begin{minipage}{0.48\linewidth}
        \centering
        \includegraphics[width=\linewidth]{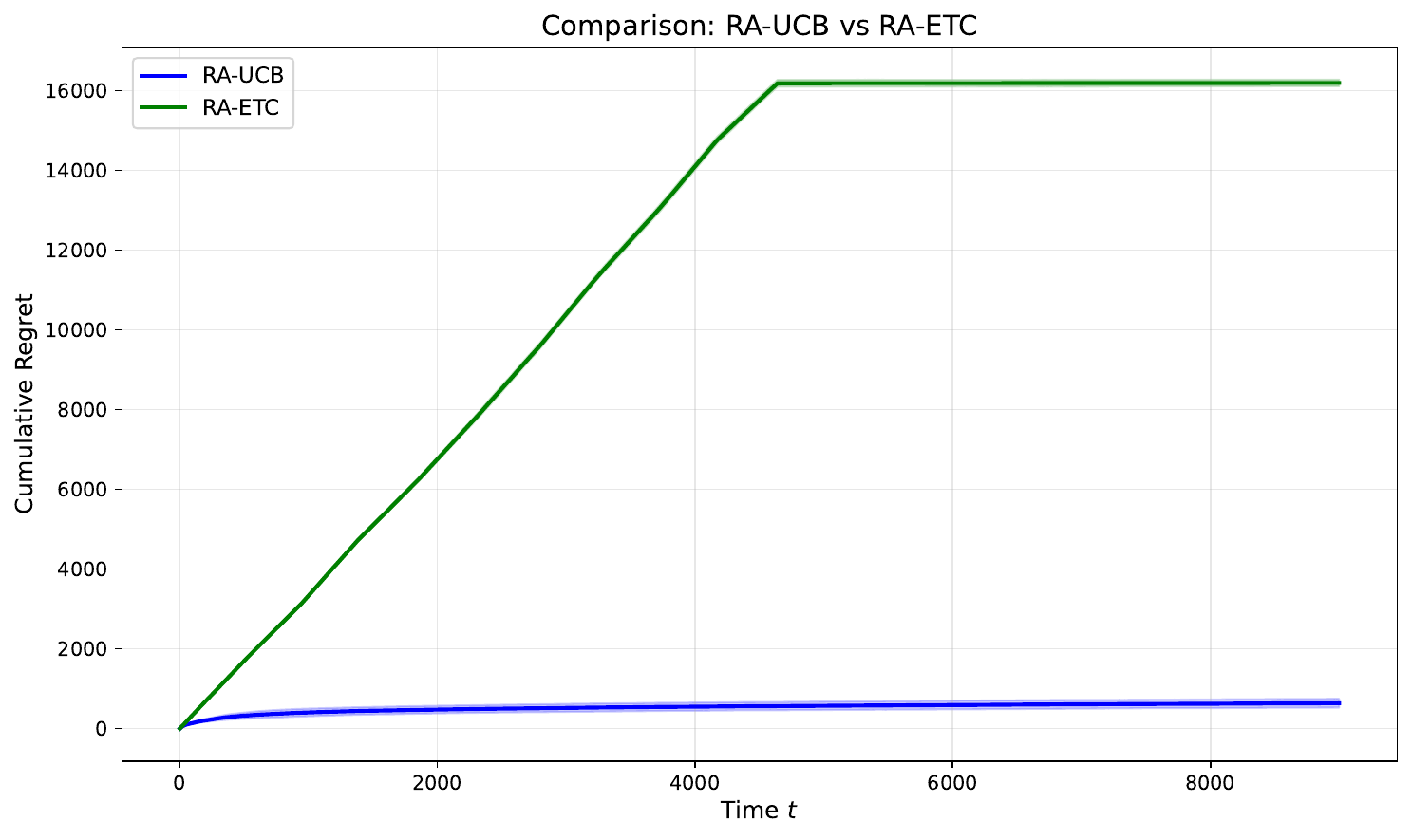}
        \caption{Comparison between \texttt{RA-UCB} (blue) and an Explore-Then-Commit baseline (\texttt{RA-ETC}) for $T=10{,}000$.}
        \label{fig:RA_ETC_simulations}
    \end{minipage}
    \hfill
    \begin{minipage}{0.48\linewidth}
        \centering
        \includegraphics[width=\linewidth]{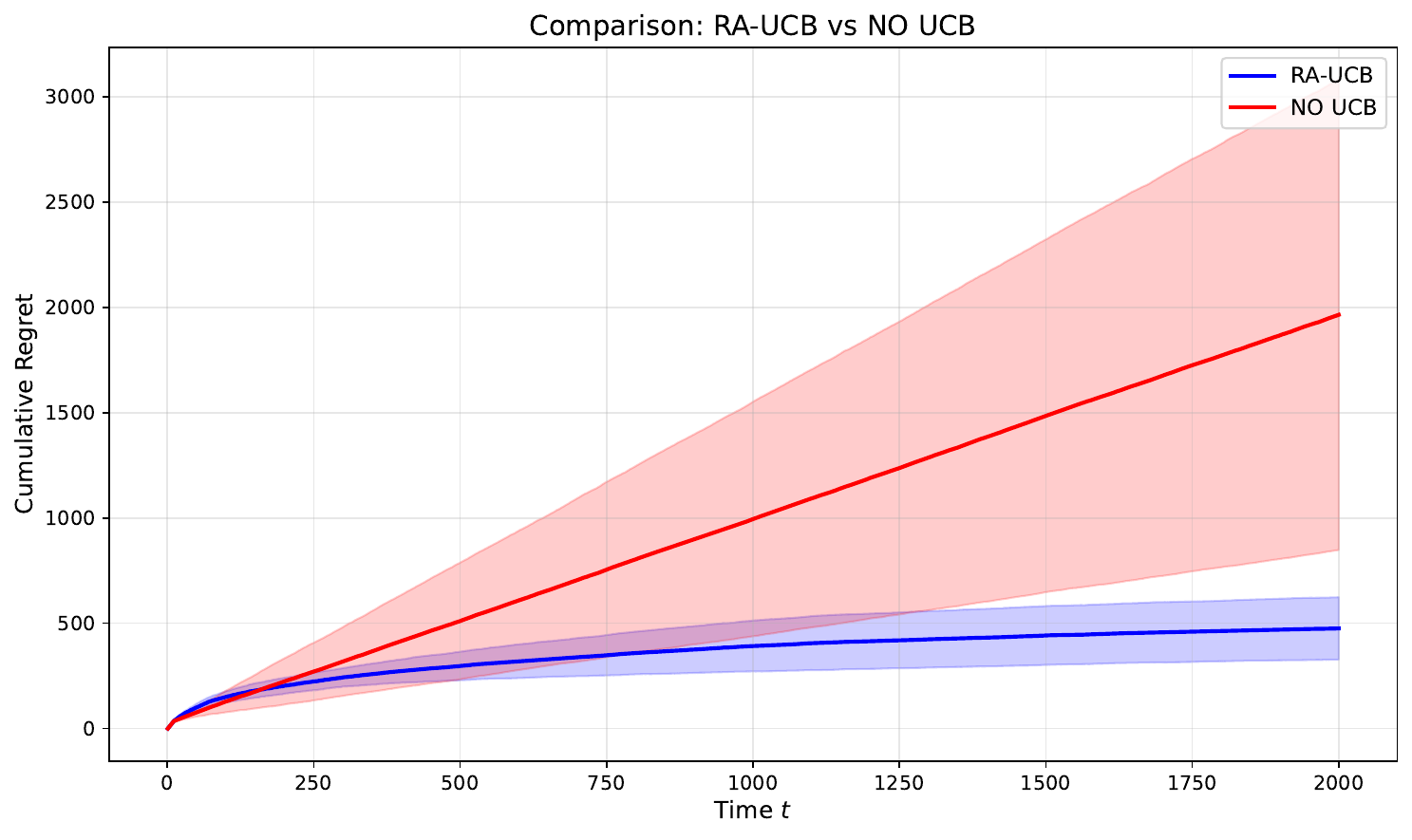}
        \caption{Comparison between \texttt{RA-UCB} (blue) and a naive baseline without confidence bounds (\texttt{NO UCB}).}
        \label{fig:NOUCB}
    \end{minipage}
\end{figure}
In this section, we simulate the behavior of \texttt{RA-UCB} in the exponential case, i.e., when the CDF $G(x,\lambda)$ corresponds to an exponential distribution with rate parameter $\lambda$.

Figure~\ref{fig:Theory} compares the empirical cumulative regret of \texttt{RA-UCB} with the theoretical lower bound from Theorem~\ref{thm: lower bound} (green dashed line) and the $\widetilde{\mathcal{O}}(\sqrt{T})$ upper bound from Theorem~\ref{thm: regret sqrt(T)} (red dashed line), both displayed on a logarithmic scale. The experiments are conducted with $K=10$ arms and budget $B=40$, where the parameters $\lambda_i$ of the exponential distribution are sampled uniformly in $[1/B,80/B]$ and $p_i\in(0,1]$. Results are averaged over 15 independent runs; the shaded region represents one standard deviation. Regret is reported up to $T=100{,}000$.

Figure~\ref{fig:RA_ETC_simulations} presents a comparison between \texttt{RA-UCB} and an Explore-Then-Commit baseline (\texttt{RA-ETC}) under the same experimental setup, with $T=10{,}000$. The exploration length of \texttt{RA-ETC} scales as $T^{2/3}$. Curves correspond to averages over 15 runs, with shaded regions indicating standard deviation. While \texttt{RA-ETC} suffers from large regret during its exploration phase, \texttt{RA-UCB} consistently achieves lower regret by balancing exploration and exploitation throughout learning.

Finally, Figure~\ref{fig:NOUCB} compares \texttt{RA-UCB} to a naive baseline, denoted \texttt{NO UCB}, which relies solely on point estimates $(\hat{\lambda}_{t,i},\hat{p}_{t,i})$ without confidence bounds. Using the same experimental parameters as in Figure~\ref{fig:RA_ETC_simulations}, we observe that ignoring optimism leads to nearly linear regret and substantially higher variance, highlighting the critical role of confidence bounds in achieving efficient learning.

\subsection{Experiments on real world datasets} \label{sec:realdata}
\paragraph{EdNet environment construction (details).}
We build an offline-to-online benchmark from EdNet-KT3 logs.
We first select $K=20$ question types whose response-time distributions are best explained by a Weibull model, using the Kolmogorov--Smirnov (KS) statistic as a goodness-of-fit criterion.
For each selected question $i$, we retain exactly $5{,}000$ interactions $(\tau,y)$ where $\tau$ is the logged response time and $y\in\{0,1\}$ indicates correctness.
To mitigate logging artifacts (e.g., abandoned sessions), we fit a truncated Weibull model with parameters $(k_i,\lambda_i)$ by maximum likelihood estimation, discarding samples outside $[q_{0.025},q_{0.975}]$ (per question), and setting $p_i$ to the empirical correctness rate on the retained samples.
We then create $5{,}000$ pseudo-students by grouping interactions across the $20$ questions: each pseudo-student corresponds to one interaction directly extracted from the dataset per question, and the session budget is set to $B_t=\sum_{i=1}^K \tau_{t,i}$.
At each round $t$, the learner observes $B_t$ and chooses an allocation $x_{t,1:K}$ with $\sum_i x_{t,i}\le B_t$.
The learner receives censored feedback: the latent threshold $\tau_{t,i}$ is observed only upon success (i.e., when the student answers correctly within the allocated time), and otherwise remains unobserved (right-censored). The reward is defined as $r_{t,i}=\mathds{1}\{x_{t,i}\ge \tau_{t,i}\ \wedge\ y_{t,i}=1\}.$
Since the oracle allocation under Weibull rewards has no closed form, we compute the oracle numerically via constrained optimization.

\begin{table}[H]
\centering
\small
\begin{tabular}{llll}
\toprule
\textbf{Column} & \textbf{Type} & \textbf{Example} & \textbf{Description} \\
\midrule
\texttt{pseudo\_user\_id} & string &
\texttt{pseudo\_u1} &
Identifier of a pseudo-student/session (one row per question attempt). \\

\texttt{question\_id} & string &
\texttt{q4135} &
Question identifier (arm). Each pseudo-user answers each selected question once. \\

\texttt{response\_time\_ms} & float &
\texttt{17632.0} &
Logged response time (in milliseconds), interpreted as the latent threshold
$\tau_{u,i}$. \\

\texttt{is\_correct} & binary &
\texttt{0}/\texttt{1} &
Correctness indicator $y_{u,i}\in\{0,1\}$ used to define the reward. \\
\bottomrule
\end{tabular}
\caption{Structure of the EdNet-derived pseudo-user dataset. Each row corresponds
to one question attempt by a pseudo-user.}
\label{tab:pseudo_user_dataset}
\end{table}

\begin{figure}[H]
    \centering
    \includegraphics[width=1\linewidth]{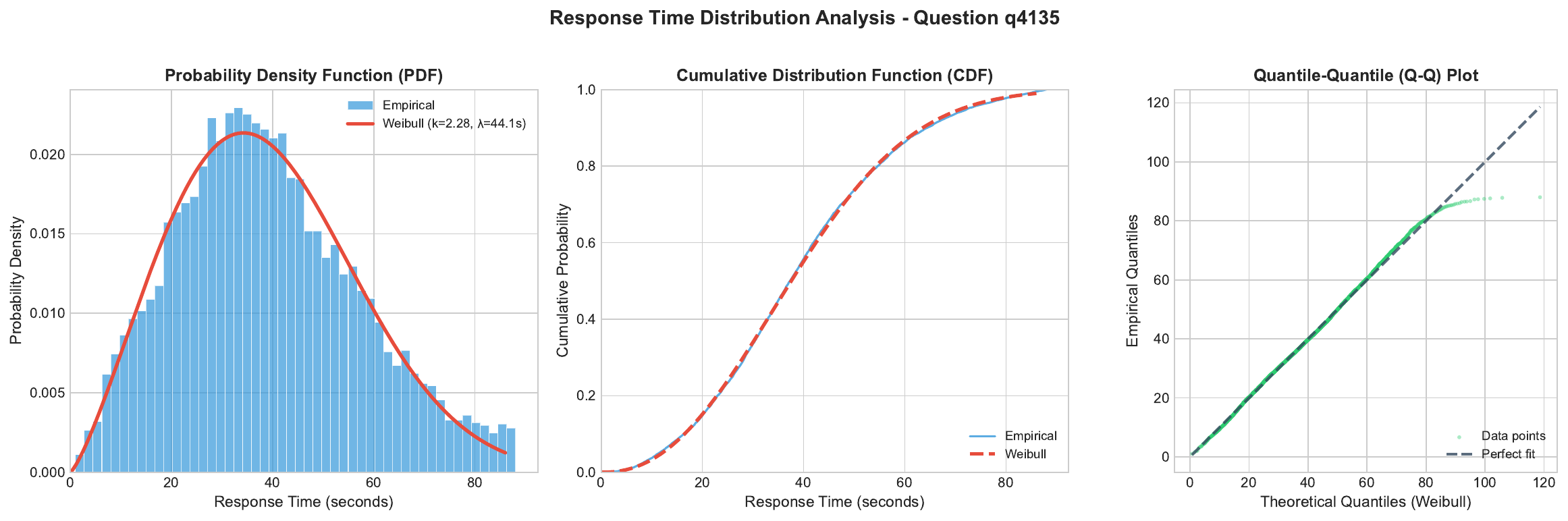}
    \caption{Response-time distribution analysis for a representative EdNet-KT3 question (q4135). Empirical PDF (left), CDF (middle), and Q–Q plot (right) with fitted truncated Weibull model. Response times are fully observed, enabling direct validation of the Weibull threshold assumption.}
    \label{fig:response-time}
\end{figure}

In the EdNet-KT3 dataset, response times provide direct observations of the latent time thresholds associated with each question–student interaction. We assume that students spend the time required to answer a question regardless of whether their answer is correct, so that the response time $\tau_{t,i}$ is fully known offline. This allows us to directly model and validate the distribution of the thresholds without censoring. As shown in Figure \ref{fig:response-time}, the empirical response-time distribution for individual questions is well captured by a truncated Weibull model, as evidenced by the close agreement between empirical and fitted probability density functions, cumulative distribution functions, and quantile–quantile plots. This motivates the use of Weibull threshold models in our offline-to-online benchmark.he experimental results are provided in Figure~\ref{fig:image1}

\paragraph{Semi-synthetic advertising environment calibrated from Criteo logs (details).}

We complement our EdNet experiments with an advertising-inspired benchmark calibrated from Criteo display logs.
The decision problem is an impression allocation task: for each arriving user $u$, the learner observes an opportunity budget $B_u$---defined as the number of ad opportunities generated by the user session (i.e., the number of times the platform observes the user, treated as exogenous)---and allocates impressions $x_{u,1},\dots,x_{u,K}$ across $K$ campaigns under the constraint $\sum_i x_{u,i}\le B_u$.

Figure~\ref{fig:criteo_distrib} shows the empirical distribution of the number of impressions observed before the first click for three representative campaigns in the Criteo logs. Crucially, this distribution can only be computed conditional on a click occurring, since the (latent) exposure threshold is never observed for non-clicked users.  Yet, for both estimation and decision-making, the only relevant and observable quantity is the conditional law $X_{t,i}\mid Y_{t,i}=1$. We find that this conditional distribution is consistently well approximated by a discrete Weibull model across campaigns, with campaign-dependent shape parameters capturing heterogeneous click dynamics. This empirical evidence motivates the use of a Weibull family as a model for display-to-click conditional on activation, without implying any specific generative process for exposure.

We define latent threshold $X_{u,i}$ as the minimum number of impressions required for user $u$ to click on campaign $i$, and model clickability through an independent activation variable $Y_{u,i}$.
We assume campaign-wise independence given the user budget, i.e., $(X_{u,i},Y_{u,i})_{i=1}^K$ are independent across campaigns.
Finally, we evaluate online allocation policies by reporting regret with respect to an oracle that knows all calibrated ground-truth parameters and solves the resulting allocation problem numerically.
\begin{table}[]
\centering
\caption{Example of logged user--campaign interactions. Each row corresponds to a user $u$ exposed to a campaign $i$, with the observed number of exposures before the first click (if any), and a binary click indicator.}
\label{tab:criteo_example}
\begin{tabular}{rrrr}
\toprule
\textbf{uid} & \textbf{campaign} & \textbf{exposures\_before\_click} & \textbf{clicked} \\
\midrule
150 & 10341182 & 1 & 0 \\
150 & 497593   & 3 & 1 \\
309 & 17686799 & 0 & 1 \\
321 & 17686799 & 1 & 0 \\
\bottomrule
\end{tabular}
\end{table}

\begin{figure}[]
    \centering
    \includegraphics[width=1\linewidth]{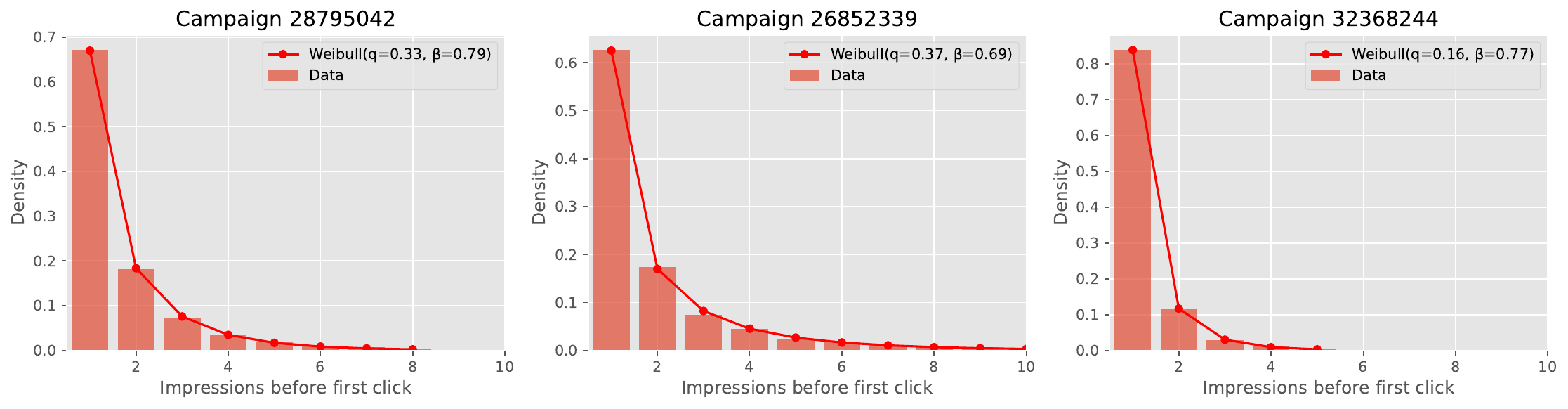}
    \caption{Empirical distribution of the number of impressions observed before the first click for three representative campaigns in the Criteo dataset. Histograms are computed given a click event; solid lines show fitted discrete Weibull models. The discrete Weibull assumption models $X_{t,i}\mid Y_{t,i}=1$ only and does not characterize the underlying exposure policy.}
    \label{fig:criteo_distrib}
\end{figure}

\begin{figure}[H]
    \centering
    \includegraphics[width=0.7\linewidth]{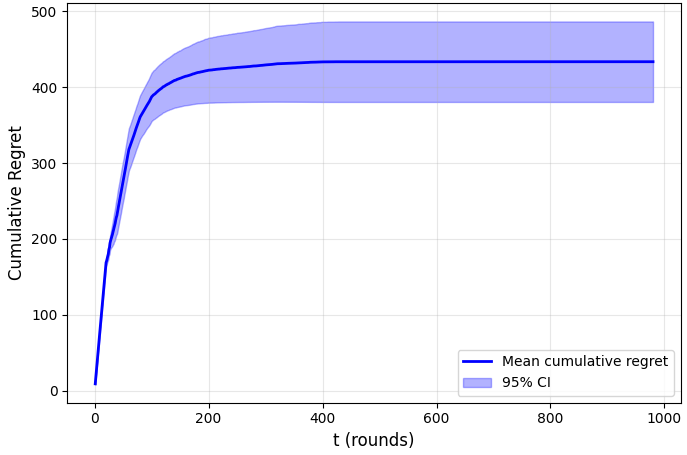}
    \caption{Criteo logs benchmark. Confidence intervals are computed over $20$ independent runs using $5$ batches of $1{,}000$ different users. $B$ is user dependent $, K=15, T=1000$}
    \label{fig:criteo}
\end{figure}

The full experimental code, including the algorithm implementation and the cleaned datasets, is available at the following link: \url{https://anonymous.4open.science/r/RA-UCB-ADF5/README.md}.

\section{Problem-Independent Lower Bound}\label{sec:app_lb}
\newcommand{\explaw}[2]{\mathcal{E}(#1,#2)}
\newcommand{\indic}[1]{\mathbf{1}_{#1}}
\newcommand{\pa}[1]{\left(#1\right)}
\newcommand{\bra}[1]{\left[#1\right]}
\newcommand{\abs}[1]{\left|#1\right|}

For the derivation of the lower bound we take $G$ to be the cumulative distribution function of an exponential law: $G(x,\lambda) = 1-e^{-\lambda x}$. Since we are building a specific instance this choice does not affect the generality of the lower bound.

Let $\explaw{\lambda}{x}$ denote the truncated law associated to $Z=\max (Y,x)$ if $Y$ follows an exponential law of parameter $\lambda$: the exact value of the sample is only observed if it does not cross $x$.
\begin{lemma}\label{lemma:kl_exp}
    Assume $x>0$, $0<\lambda< \lambda'$ and denote $\varepsilon=\lambda'-\lambda$:
    \[\textrm{KL}(\explaw{\lambda}{x},\explaw{\lambda'}{x})\leq \frac{\varepsilon^2}{2\lambda}x+\frac{\varepsilon \lambda}{2}x^2\]    
\end{lemma}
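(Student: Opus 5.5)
The plan is to compute the divergence exactly and then bound it with two elementary inequalities. I read $\explaw{\lambda}{x}$ as the right-censored law, as the parenthetical remark intends: the sample $Y$ is observed exactly when $Y<x$, and otherwise only the event $\{Y\ge x\}$ is recorded. Equivalently this is the law of $\min(Y,x)$, with an atom at $x$. So $\explaw{\lambda}{x}$ has density $\lambda e^{-\lambda y}$ on $[0,x)$ with respect to Lebesgue measure, plus an atom of mass $e^{-\lambda x}$ at the censoring point. $\explaw{\lambda'}{x}$ has the same form with $\lambda$ replaced by $\lambda'$. The two laws are mutually absolutely continuous, so the KL divergence is the $\explaw{\lambda}{x}$-expectation of the log-likelihood ratio.

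\textbf{Step 1: log-likelihood ratio.} On $[0,x)$ the log-ratio is $\log(\lambda/\lambda')+\varepsilon y$. On the atom it is $-\lambda x+\lambda' x=\varepsilon x$. Hence
\[
\mathrm{KL}(\explaw{\lambda}{x},\explaw{\lambda'}{x})=(1-e^{-\lambda x})\log\frac{\lambda}{\lambda'}+\varepsilon\Big(\int_0^x y\lambda e^{-\lambda y}\,dy+x e^{-\lambda x}\Big).
\]

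\textbf{Step 2: simplify.} The bracketed term is exactly $\mathbb{E}[\min(Y,x)]=\int_0^x e^{-\lambda y}\,dy=(1-e^{-\lambda x})/\lambda$. Writing $u=\varepsilon/\lambda$, this gives the closed form
\[
\mathrm{KL}=(1-e^{-\lambda x})\big(u-\log(1+u)\big).
\]

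\textbf{Step 3: bound.} For $u\ge 0$ we have $\log(1+u)\ge u-u^2/2$, so $u-\log(1+u)\le \varepsilon^2/(2\lambda^2)$. We also have $1-e^{-\lambda x}\le \lambda x$. Multiplying the two bounds gives $\mathrm{KL}\le \varepsilon^2 x/(2\lambda)$. Since $\varepsilon\lambda x^2/2\ge 0$, the stated bound follows, with that second term as slack. The slack is presumably there to accommodate a cruder expansion, e.g.\ bounding the atom contribution separately.

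The only delicate point is Step 2: recognizing $\mathbb{E}[\min(Y,x)]$ so that the $\varepsilon$-linear terms cancel against the linear part of $\log(1+u)$. If one expands the log-ratio naively instead, the first-order terms in $\varepsilon$ do not visibly cancel, and one obtains only a bound linear in $\varepsilon$. That bound is useless for the $T^{1/3}$ lower-bound argument, which needs a divergence quadratic in $\varepsilon$.
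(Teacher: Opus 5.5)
Your argument is correct, and it proves the sharper inequality $\mathrm{KL}(\mathcal{E}(\lambda,x),\mathcal{E}(\lambda',x))\le \frac{\varepsilon^2}{2\lambda}x$. The second term of the statement is therefore pure slack.

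The paper starts from the same log-likelihood ratio and uses the same inequality $\log(1+u)\ge u-u^2/2$, but it applies that inequality first:
\begin{itemize}
\item it substitutes $\lambda\log(\lambda/\lambda')\le-\varepsilon+\frac{\varepsilon^2}{2\lambda}$ inside the integral;
\item it writes the atom contribution as $\int_0^x\varepsilon e^{-\lambda x}\,dt$ and discards $\int_0^x\varepsilon\,(e^{-\lambda x}-e^{-\lambda t})\,dt\le 0$ (the displayed integrand there carries $+\varepsilon$ where $-\varepsilon$ is meant);
\item it bounds what remains using $e^{-\lambda t}\le 1$.
\end{itemize}
So only the $-\varepsilon e^{-\lambda t}$ part of the first-order term is cancelled against the atom. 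The piece $\varepsilon\lambda t\,e^{-\lambda t}$ survives and, after $e^{-\lambda t}\le 1$, yields $\frac{\varepsilon\lambda}{2}x^2$. This confirms your guess about where the slack comes from.

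Your exact evaluation $\int_0^x y\lambda e^{-\lambda y}\,dy+xe^{-\lambda x}=\mathbb{E}[\min(Y,x)]=(1-e^{-\lambda x})/\lambda$ costs one integration by parts. In exchange it gives the closed form $(1-e^{-\lambda x})(u-\log(1+u))$, hence full cancellation and a cleaner statement. The paper's cruder route skips that computation and is still sufficient for its application.

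This is also why your closing remark overstates what is needed. The $T^{1/3}$ argument does not require a divergence quadratic in $\varepsilon$. The paper's own bound keeps the first-order term $\frac{\varepsilon\lambda}{2}x^2$, and this term is harmless. The construction imposes $(\lambda+\varepsilon)^2=\varepsilon/2$ with $\varepsilon\le\lambda$, so $\lambda\asymp\sqrt{\varepsilon}$. With allocations $x\le 1$, one gets $\varepsilon\lambda x^2\le\varepsilon\lambda x\asymp\varepsilon^{3/2}x$, the same order as $\frac{\varepsilon^2}{\lambda}x$. What would be fatal is a first-order term lacking the small factor $\lambda x$, such as $\varepsilon x$. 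Substituting your sharper lemma into the lower-bound proof only changes the constant in front of $\varepsilon^{3/2}\sum_t\mathbb{E}_0[x_{t,i^\star}]$.
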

\begin{proof}
    Let $\mu$ and $\mu'$ be the measures associated to respectively $\explaw{\lambda}{x}$ and $\explaw{\lambda}{x}$. By construction, they satisfy
    \[d\mu=\lambda e^{-\lambda t}\indic{t<x}dt+e^{-\lambda x}\delta_x\]
    and
    \[d\mu'=\lambda' e^{-\lambda' t}\indic{t<x}dt+e^{-\lambda' x}\delta_x\]
    hence a Kullbach-Leibler divergence of
    \begin{align*}
        KL(\mu,\mu')&=\int_{0}^x\log\pa{\frac{\lambda e^{-\lambda t}}{\lambda' e^{-\lambda ' t}}}\lambda e^{-\lambda t}dt+\log\pa{\frac{e^{-\lambda x}}{e^{-\lambda' x}}}e^{-\lambda x}\\
        &=\int_0^x \bra{\log\pa{\frac{\lambda}{\lambda'}} + \varepsilon t}\lambda e^{-\lambda t}dt + \varepsilon x e^{-\lambda x}
    \end{align*}
    with $\varepsilon=\lambda'-\lambda$ as defined above. 

    Now, from the inequality, for any $u\geq 0$,
    \[\log(1+u)\geq u-\frac{u^2}{2}\]
    we obtain 
    \[\log\pa{\frac{\lambda}{\lambda'}}=-\log\pa{\frac{\lambda'}{\lambda}}=-\log\pa{1+\frac{\varepsilon}{\lambda}}\leq -\frac{\varepsilon}{\lambda}+\frac{\varepsilon^2}{\lambda^2}\, .\]

    Plugging this into the previous inequality,
    \begin{align*}
        KL(\mu,\mu')&=\int_0^x \bra{\log\pa{\frac{\lambda}{\lambda'}} + \varepsilon t}\lambda e^{-\lambda t}dt + \varepsilon x e^{-\lambda x}\\
        &\leq \int_0^x \bra{\varepsilon+\frac{\varepsilon^2}{2\lambda}+\varepsilon\lambda t}e^{-\lambda t}dt + \int_0^x \varepsilon e^{-\lambda x}dt\\
        &\leq \int_0^x \bra{\frac{\varepsilon^2}{2\lambda}+\varepsilon\lambda t}e^{-\lambda t}dt\\
        &\leq \int_0^x \bra{\frac{\varepsilon^2}{2\lambda}+\varepsilon\lambda t}dt\\
        &=\frac{\varepsilon^2}{2\lambda}x+\frac{\varepsilon \lambda}{2}x^2\, .
    \end{align*}
\end{proof}
This lemma is directly used to prove the lower bound.
\paragraph{Proof of Theorem \ref{thm: lower bound}}
    We start by choosing $\lambda>0$ and $\varepsilon>0$ chosen such that
    \[(\lambda+\varepsilon)^2= \frac{\varepsilon}{2} \quad\textrm{and}\quad \varepsilon\leq \lambda\]
    The exact values will be specified later. Note that for a given value of $\varepsilon$, a value of $\lambda$ that matches these conditions exists as long as $\varepsilon\geq \frac{1}{8}$.

    We now define $K+1$ different models, with the learner not aware of which exactly:
    \begin{itemize}
        \item In the first model, the total budget is $1$, and all $K$ arms have the same probability $p_i=1$ and parameter $\lambda_i=\lambda$. $\mathbb{P}_0$ will denote the distribution of choices and feedback over the $T$ iterations.
        \item For any item $i$, we also define the same model as above, but in which the parameter of this specific arm is $\lambda_i=\lambda +\varepsilon$ rather than $\lambda$. The associated distribution will be $\mathbb{P}_i$
    \end{itemize}
    We will denote by $\mathbb{E}_j$ the expectation and by $\mathbb{P}_j^X$ the distribution of a variable $X$ under any model $j$ above.
    Let
    \[i^\star=\min_{i\in[K]} \mathbb{E}_{0}\bra{\sum_{t=1}^Tx_{t,i}}\, ,\]
    by construction, $i^\star$ satisfies
    \[\mathbb{E}_0\bra{\sum_{t=1}^Tx_{t,i^\star}}\leq \frac{T}{K}\]
    as $\sum_{i=1}^k\sum_{t=1}^T x_{t,i}=T$.

    The proof will be done in two steps: first we show the learner is not able to differentiate between $\mathbb{P}_0$ and $\mathbb{P}_{i^\star}$ with a high enough probability. Then we show that this inability implies a regret of $\Omega(\varepsilon T)$.

    \textbf{KL-divergence}
    From the chain rule applied to the Kullbach-Leibler divergence, if we denote by $S^t=(F_{u,i})_{u\in[t],i\in[K]}$ all the observations up to time $t$,
    \begin{align*}
    KL\pa{\mathbb{P}_0,\mathbb{P}_{i^\star}}&=KL\pa{\mathbb{P}_0^{S^T},\mathbb{P}_{i^\star}^{S^T}}\\
        &=\sum_{t=1}^T\mathbb{E}_0\bra{KL\pa{\mathbb{P}_0^{S^t|S^{t-1}},\mathbb{P}_{i^\star}^{S^t|S^{t-1}}}}\\
        &=\sum_{t=1}^T\mathbb{E}_0\bra{KL\pa{\mathbb{P}_0^{F_{t,i^\star}|S^{t-1}},\mathbb{P}_{i^\star}^{F_{t,i^\star}|S^{t-1}}}}\\
        &=\sum_{t=1}^T\mathbb{E}_0\bra{KL\pa{\explaw{\lambda}{p_{t,i^\star}},\explaw{\lambda+\varepsilon}{p_{t,i^\star}}}}\\
        &\leq \sum_{t=1}^T\mathbb{E}_0\bra{\frac{\varepsilon^2}{2\lambda}p_{t,i^\star}+\frac{\varepsilon \lambda}{2}p_{t,i^\star}^2}\\
        &\leq \sum_{t=1}^T\mathbb{E}_0\bra{\frac{\varepsilon^2}{\lambda+\varepsilon}p_{t,i^\star}+\frac{\varepsilon (\lambda+\varepsilon)}{2}p_{t,i^\star}}\\
        &= \frac{3}{2}\varepsilon^{\frac{3}{2}}\sum_{t=1}^T\mathbb{E}_0\bra{p_{t,i^\star}}\\
        &\leq \frac{3}{2}\varepsilon^{\frac{3}{2}}\frac{T}{K}
        \end{align*}
    where we used that given the previous realizations, whether the model is $\mathbb{P}_0$ or $\mathbb{P}_{i^\star}$ only affects the distribution of $F_{t,i^\star}$, the previous Lemma~\ref{lemma:kl_exp}, and the definitions of $\lambda$ and $\varepsilon$. Then, by taking
    \[\varepsilon=\frac{1}{12}\pa{\frac{K}{T}}^{\frac{2}{3}}\, ,\]
    the Pinsker inequality lets us obtain
    \[\left\|\mathbb{P}_0-\mathbb{P}_{i^\star}\right\|_{\textrm{TV}}\leq \sqrt{\frac{1}{2}KL\pa{\mathbb{P}_0,\mathbb{P}_{i^\star}}}\leq \frac{1}{4}\, .\]

    \textbf{Consequences on the regret}
    We now show that, under $\mathbb{P}_{i^\star}$, the regret is lower bounded by $\Omega\pa{\varepsilon \sum_{t=1}^T (1-x_{i^\star}))}$: with our choice of $\lambda$ and $\varepsilon$, the learner must allocate most of its resources to the improved arm $i^\star$ to approach optimality.
    Let $f_{i^\star}(x)$ denote the average reward given a distribution $x$ of budget. By definition,
    \begin{align*}
        f_{i^\star}(x)&=\sum_{i\neq i^\star} (1-e^{-\lambda x_i})+(1-e^{-(\lambda+\varepsilon) x_{i^\star}})\\
        &=\sum_{i\neq i^\star}\int_0^{x_i}\lambda e^{-\lambda u}du +\int_{0}^{x_{i^\star}}(\lambda+\varepsilon)e^{-(\lambda+\varepsilon)u}du\\
        &\leq \lambda(1-x_{i^\star})+\int_{0}^{x_{i^\star}}(\lambda+\varepsilon)e^{-(\lambda+\varepsilon)u}du
    \end{align*}
    as the sum of budget is $1$. Thus the expected regret satisfies, with $e_{i^\star}$ the distribution with all the budget on $i^\star$ :
    \begin{align*}
        \mathbb{E}_{i^\star}\bra{\mathcal{R}_T}&=\mathbb{E}_{i^\star}\bra{\sum_{t=1}^T f_{i^\star}(e_{i^\star})-f_{i^\star}(x_{t,\cdot})}\\
        &\geq \mathbb{E}_{i^\star}\bra{\sum_{t=1}^T\pa{\int_{x_{t,i^\star}}^1 (\lambda+\varepsilon)e^{-(\lambda+\varepsilon)u}du-\lambda(1-x_{t,i^\star})}}\\
        &\geq \mathbb{E}_{i^\star}\bra{\sum_{t=1}^T\int_{x_{t,i^\star}}^1 \pa{(\lambda+\varepsilon)e^{-(\lambda+\varepsilon)}-\lambda}du}\\
        &\geq \mathbb{E}_{i^\star}\bra{\sum_{t=1}^T\int_{x_{t,i^\star}}^1 \pa{(\lambda+\varepsilon)-(\lambda+\varepsilon)^2-\lambda}du}\\
        &=\mathbb{E}_{i^\star}\bra{\sum_{t=1}^T\int_{x_{t,i^\star}}^1 \frac{\varepsilon}{2}du}\\
        &=\frac{\varepsilon}{2}\mathbb{E}_{i^\star}\bra{\sum_{t=1}^T(1-x_{t,i^\star})}
    \end{align*}
    where we especially used $e^{1-u}\geq 1-u$ for any positive $u$. Meanwhile, the expected sum under $\mathbb{P}_{i^\star}$ can be lower bounded with, using that $0\leq\sum_{t=1}^T(1-x_{t,i^\star})\leq T$, with:
    \begin{align*}
        \mathbb{E}_{i^\star}\bra{\sum_{t=1}^T(1-x_{t,i^\star})}&\geq \mathbb{E}_{0}\bra{\sum_{t=1}^T(1-x_{t,i^\star})}- \abs{\mathbb{E}_{i^\star}\bra{\sum_{t=1}^T(1-x_{t,i^\star})}-\mathbb{E}_{0}\bra{\sum_{t=1}^T(1-x_{t,i^\star})}}\\
        &\geq T-\mathbb{E}_{0}\bra{\sum_{t=1}^Tx_{t,i^\star}}-T\left\|\mathbb{P}_0-\mathbb{P}_{i^\star}\right\|_{\textrm{TV}}\\
        &\geq T-\frac{T}{2}-\frac{T}{4}= \frac{T}{4}
    \end{align*}
    where we used the properties of the previous section for the last inequality.

    This gives a lower bound of the expected regret under $\mathbb{P}_{i^\star}$, as $\varepsilon=\frac{1}{12}\pa{\frac{K}{T}}^{\frac{2}{3}}$, of
    \[\mathbb{E}_{i^\star}\bra{\mathcal{R}_T}\geq \frac{1}{48}K^{\frac{2}{3}}T^{\frac{1}{3}}\]
    which concludes.
\hfill$\square$

\section{Concentration Inequalities for the Estimators with Generic Distribution}\label{app_sec: concentration inequalities}

\subsection{Estimation of $\lambda_i$}\label{app: estimation of lambda}

\begin{lemma} \label{lemma: conc ineq for mu_i}
    For any arm $i \in [K]$, round $t \in [T]$, $\ell \in [t]$, and $\alpha \in (0,1)$, we have:
    $$
    \mathbb{P} \left( |\hat{\mu}_{t,i} - \mu_i| \geq B \sqrt{\frac{\log(1/\alpha)}{2n_{t,i}}} \text{ and } n_{t,i} = \ell \right) \leq 2\alpha.
    $$
\end{lemma}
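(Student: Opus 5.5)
The plan is to reduce the statement to Hoeffding's inequality for a fixed number of i.i.d. bounded samples, conditioning on the event $\{n_{t,i}=\ell\}$. If $\mathbb{P}(n_{t,i}=\ell)=0$ there is nothing to prove, so assume it is positive. On $\{n_{t,i}=\ell\}$ we have $\hat\mu_{t,i}=\frac1\ell\sum_{j=1}^{\ell} Z_j$. Here $Z_j$ is the observed threshold at the $j$-th success on arm $i$. So the event in question equals $\{|\frac1\ell\sum_{j\le \ell}Z_j-\mu_i|\ge B\sqrt{\log(1/\alpha)/(2\ell)}\}\cap\{n_{t,i}=\ell\}$.

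The key structural step is to show that the $Z_j$ are i.i.d. with law $G(\cdot,\lambda_i)/G(B,\lambda_i)$ on $[0,B]$, with mean $\mu(\lambda_i)=\mu_i$. This holds irrespective of the adaptively chosen allocations, as stated in the main text. For a success at allocation $x_{u,i}$, the revealed value is $X_{u,i}$ conditioned on $X_{u,i}\le x_{u,i}$. I would first make this precise using the independence of $Y_{u,i}$ and $X_{u,i}$ and the fact that $(X_{u,i},Y_{u,i})$ are fresh at each round and independent of $\mathcal{F}_{u-1}$. I expect this to be the main obstacle. When $x_{u,i}<B$, the conditional law is truncated at $x_{u,i}$ rather than at $B$, so the i.i.d. claim is really an assumption implicit in the paper's setup. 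I will follow the paper and take the statement ``conditionally on $n_{t,i}=\ell$ the variables $F_{u,i}\mid F_{u,i}\le B$ are i.i.d.\ on $[0,B]$ with mean $\mu_i$'' as given.

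To avoid conditioning on an event that depends on the samples, I would use a standard skeleton construction. Let $(Z'_j)_{j\ge1}$ be an i.i.d. sequence of the above law, and let $Z_j=Z'_j$ be the value revealed at the $j$-th success. Then
\[
\{|\hat\mu_{t,i}-\mu_i|\ge r_\ell,\ n_{t,i}=\ell\}\subseteq\Big\{\Big|\tfrac1\ell\textstyle\sum_{j=1}^\ell Z'_j-\mu_i\Big|\ge r_\ell\Big\},
\]
with $r_\ell=B\sqrt{\log(1/\alpha)/(2\ell)}$. Hoeffding's inequality for $\ell$ i.i.d. variables in $[0,B]$ gives probability at most $2\exp(-2\ell r_\ell^2/B^2)=2\alpha$. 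Alternatively, one can condition directly on $n_{t,i}=\ell$, use the paper's i.i.d. claim, apply Hoeffding conditionally, and multiply by $\mathbb{P}(n_{t,i}=\ell)\le1$; this gives the same bound of $2\alpha$.
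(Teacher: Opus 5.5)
Your main argument is correct under the same modelling premise the paper uses. It is essentially the paper's own informal ``table'' argument: a $1\times T$ table of i.i.d.\ values, with the $j$-th success reading the $j$-th cell. The paper, however, treats that argument as heuristic and gives a different formal proof. It sets $Z_t=\sum_{u\le t}\iota_u\bigl(F_{u,i}-\mathbb{E}[F_{u,i}\mid\mathcal{F}_{u-1}]\bigr)$ and shows, via the conditional Hoeffding lemma, that $M_t=\exp\bigl(xZ_t-x^2B^2n_{t,i}/8\bigr)$ is a supermartingale. It then applies Chernoff to $\{Z_t\ge\epsilon\}\cap\{n_{t,i}=\ell\}$; on this event the random compensator $n_{t,i}$ can be replaced by the constant $\ell$, and optimizing $x$ gives $\alpha$ per tail.

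Your route buys elementarity, since it needs only plain Hoeffding for a fixed sample size. The cost is that you must justify that the skeleton (stack) model has the same law as the original interaction. This holds because each success reveals a fresh draw whose conditional law, given $\mathcal{F}_{u-1}$ and the success event, is fixed by the premise. The martingale route needs no coupling and is more robust. Because it centers each term at its own conditional mean, it would still concentrate $\hat\mu_{t,i}$ around the running average of conditional means if these varied with $x_{u,i}$. That is exactly the truncation-at-$x_{u,i}$ issue you correctly flag; the paper's proof needs the same premise to replace that average by $\mu_i$.

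You should drop your closing alternative, though. Conditioning on $\{n_{t,i}=\ell\}$ does not leave the revealed values i.i.d.\ with mean $\mu_i$. The count $n_{t,i}$ depends on the allocations $x_{u,i}$, which are functions of earlier revealed values. For example, if the round-2 allocation to arm $i$ is large only when $Z_1$ is small, then the event $\{n_{2,i}=2\}$ tilts the law of $Z_1$ downward. Hence $\mathbb{P}(\,\cdot\mid n_{t,i}=\ell)\le 2\alpha$ can fail, and only the joint-event bound holds. That joint bound is exactly what your inclusion into $\bigl\{|\tfrac1\ell\sum_{j\le\ell}Z'_j-\mu_i|\ge r_\ell\bigr\}$ delivers, and it is why the paper says Hoeffding cannot be applied directly with the random $n_{t,i}$. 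For the same reason, the premise you invoke should be the ``fresh draw at each success'' property, not the paper's stronger (and, for adaptive allocations, false) phrasing ``conditionally on $n_{t,i}=\ell$, i.i.d.''
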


\begin{lemma}[Hoeffding's Inequality] \label{thm: Hoeff ineq}
    Consider independent random variables $X_1, X_2,\ldots, X_n$. Let $\Bar{X}_n = \frac{X_1+X_2+\ldots+X_n}{n}$ and $\mu_n = \mathbb{E}[\Bar{X}_n]$. Then:
    $$
    \mathbb{P} \left(|\Bar{X}_n-\mu_n| \leq \sqrt{\frac{\alpha \beta \log(T)}{n}} \right) \geq 1-2 \cdot T^{-2\alpha}
    $$
    where $T$ is the time horizon in the multi-armed bandit setting and, in the case of bounded variables $X_i \in [a_i, b_i]$, $\beta = \frac{1}{n} \sum_{i \in [n]} (b_i-a_i)^2$
\end{lemma}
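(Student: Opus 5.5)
This is the classical Hoeffding bound rewritten in the parametrization used by the bandit analysis. I plan to derive it from the standard two-sided Hoeffding tail bound for independent bounded variables and then substitute the deviation level. Throughout I assume, as the statement indicates, that $X_i\in[a_i,b_i]$ almost surely, with $\beta=\frac1n\sum_{i}(b_i-a_i)^2>0$. If $\beta=0$, every $X_i$ is almost surely constant, so $\bar X_n=\mu_n$ and the claim is trivial.

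\textbf{Step 1: one-sided tail via Chernoff.} For $s>0$ and $\theta>0$, Markov's inequality applied to $e^{\theta\sum_i (X_i-\mathbb{E}X_i)}$ and independence give
\[
\mathbb{P}(\bar X_n-\mu_n\ge s)\le e^{-\theta n s}\prod_{i=1}^n \mathbb{E}\bigl[e^{\theta(X_i-\mathbb{E}X_i)}\bigr].
\]
Each factor is controlled by Hoeffding's lemma, $\mathbb{E}[e^{\theta(X_i-\mathbb{E}X_i)}]\le e^{\theta^2(b_i-a_i)^2/8}$. This lemma follows from convexity of the exponential on $[a_i,b_i]$ together with a second-order Taylor bound on the resulting log-moment function, and I will take it as standard. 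The product is then at most $e^{\theta^2 n\beta/8}$. Optimizing $-\theta n s+\theta^2 n\beta/8$ over $\theta$ gives $\theta=4s/\beta$ and the bound $e^{-2ns^2/\beta}$. The same argument applied to $-X_i$ handles the lower tail, and a union bound yields
\[
\mathbb{P}(|\bar X_n-\mu_n|\ge s)\le 2e^{-2ns^2/\beta}.
\]

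\textbf{Step 2: substitution.} Set $s=\sqrt{\alpha\beta\log T/n}$. Then $2ns^2/\beta=2\alpha\log T$, so the right-hand side equals $2T^{-2\alpha}$. Passing to the complement gives $\mathbb{P}(|\bar X_n-\mu_n|<s)\ge 1-2T^{-2\alpha}$, which is stronger than the claimed non-strict inequality.

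The only nontrivial ingredient is Hoeffding's lemma on the moment generating function, which I expect to be the main obstacle if it must be proved from scratch rather than cited. I would prove it by writing $X=\frac{b-X}{b-a}a+\frac{X-a}{b-a}b$ and using convexity. This bounds the moment generating function by $e^{L(h)}$ with $h=\theta(b-a)$, where $L(0)=L'(0)=0$ and $L''\le 1/4$; Taylor's theorem then gives $L(h)\le h^2/8$.

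When the lemma is applied in the paper with a random count such as $n_{t,i}$, the argument is carried out conditionally on $n_{t,i}=\ell$, under which the samples are i.i.d. This matches how Lemma~\ref{lemma: conc ineq for mu_i} is phrased.
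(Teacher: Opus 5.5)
Your proof is correct and is the standard Chernoff-plus-Hoeffding's-lemma derivation. The paper does not prove this lemma and simply cites it as standard; the same computation drives its supermartingale proof of Lemma~\ref{lemma: conc ineq for mu_i complete}, namely conditional Hoeffding's lemma, Markov's inequality, and the optimizing choice $x=4\epsilon/(\ell B^2)$, which is your $\theta=4s/\beta$ with $\beta=B^2$.

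One caution concerns only your closing remark: under the adaptive allocation rule, conditioning on $n_{t,i}=\ell$ does not make the collected samples i.i.d., because the event $\{n_{t,i}=\ell\}$ depends on earlier observed values through the allocations. This is why the paper bounds the joint probability of the deviation and $\{n_{t,i}=\ell\}$ via that supermartingale instead of conditioning.
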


\paragraph{Proof of Lemma \ref{lemma: conc ineq for mu_i}}

Technically we cannot apply Hoeffding inequality on the random variables $\{F_{u,i}|F_{u,i} \leq B \}_{t \in [T]}$, since $n_{t,i}$ (that is the $n$ in Lemma \ref{thm: Hoeff ineq}) is a random variable.
However it's possible to use an argument to make the Hoeffding's inequality work.

In the case in which is possible to apply it we are going to get:
\begin{equation}
    \mathbb{P} \left(|\hat{\mu}_{t,i}-\mu_i| \leq B \sqrt{\frac{2 \log(T)}{n_{t,i}}} \right) \geq 1-\frac{2}{N^4} 
\end{equation}
Or equivalently:
\begin{equation}
    \mathbb{P} \left(|\hat{\mu}_{t,i}-\mu_i| > B \sqrt{\frac{2 \log(T)}{n_{t,i}}} \right) \leq \frac{2}{T^4} 
\end{equation}
To motivate this we can think about a table $1 \times T$ of $T$ independent values (our $\{ F_{t,i} | F_{t,i} \leq B \}_{t \in [T]}$).
In the problem the $j^{th}$ time we received a feedback $F_{t,i} \leq B$ (we are at user $t$) we take $F_{t,i}$ from the $j^{th}$ cell of our table.
So let $\hat{\mu}_i(j)$ be the average feedback $F_{t,i}$ for the first $j$ times we received it less or equal than $B$.
Now it's possible to apply Hoeffding's lemma according to Lemma \ref{thm: Hoeff ineq}:
$$
\forall j, \mathbb{P} \left( |\hat{\mu}_i(j)-\mu_i| \leq B \sqrt{\frac{2 \log(T)}{j}} \right) \geq 1-\frac{2}{T^4}
$$

But $j=n_{t,i}$ because it is at time $t$ that we see for the $j^{th}$ times $F_{t,i} \leq B$, so:
$$
\forall t, \mathbb{P} \left( |\hat{\mu}_{t,i}-\mu_i| \leq B \sqrt{\frac{2 \log(T)}{n_{t,i}}} \right) \geq 1-\frac{2}{T^4}
$$

To be more formal we need to use some martingale argument, given by the following lemma.

\begin{lemma}\label{lemma: conc ineq for mu_i complete}
    Let $\{ F_{t,i} \}_{t \in [T]}$ be a sequence of random variables adapted to a filtration $\mathcal{F}_t$ such that $|F_{t,i}-\mathbb{E}[F_{t,i}|\mathcal{F}_{t-1}]| \in [0,B]$. Define for $t \in \mathbb{N}_*$ the indicator $\iota_t =  \mathds{1} \{ F_{t,i} \leq B \} $ that is $\mathcal{F}_{t-1}$-measurable. As before $n_{t,i} = \sum_{u \leq t} \iota_u$ and $\hat{\eta}_{t,i} = \frac{\sum_{u \leq t} \iota_u (F_{u,i}-\mathbb{E}[F_{u,i}|\mathcal{F}_{u-1}])}{n_{t,i}}$ if $n_{t,i} \geq 1$. Then for any $t,\ell \in \mathbb{N}_*$ and $\alpha \in (0,1)$:
    $$
    \mathbb{P} \left (|\hat{\eta}_{t,i}| \leq B \sqrt{\frac{\log(1/\alpha)}{2n_{t,i}}} \text{ or } n_{t,i}=0 \right) \geq 1-2t\alpha
    $$
    and
    $$
    \mathbb{P} \left (|\hat{\eta}_{t,i}| \geq B \sqrt{\frac{\log(1/\alpha)}{2n_{t,i}}} \text{ and } n_{t,i}=\ell  \right) \leq 2\alpha
    $$
\end{lemma}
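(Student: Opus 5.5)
The plan is to reduce the random-count average to a fixed-length martingale by re-indexing time along the rounds where $\iota_u=1$, and then apply the Azuma--Hoeffding inequality. The second (fixed $\ell$) statement comes first; the first statement then follows by a union bound over the possible values of $n_{t,i}$. Throughout, write $D_u := F_{u,i}-\mathbb{E}[F_{u,i}\mid\mathcal{F}_{u-1}]$. By hypothesis $\mathbb{E}[D_u\mid\mathcal{F}_{u-1}]=0$, and conditionally on $\mathcal{F}_{u-1}$ the variable $D_u$ ranges in an interval of length $B$.

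\textbf{Step 1 (re-indexing).} Since each $\iota_u$ is $\mathcal{F}_{u-1}$-measurable, define $\tau_j:=\min\{u\le t:\ \sum_{v\le u}\iota_v=j\}$ for the $j$-th selected round, with $\tau_j=+\infty$ if fewer than $j$ selections occur up to $t$. Set $Z_j:=D_{\tau_j}\mathds{1}\{\tau_j\le t\}$ and let $\mathcal{G}_j:=\mathcal{F}_{\tau_j}$ be the stopped $\sigma$-field. The key point is that $\{\tau_j=u\}$ is $\mathcal{F}_{u-1}$-measurable, because it depends only on $\iota_1,\dots,\iota_u$. Hence
\[
\mathbb{E}[Z_j\mid\mathcal{G}_{j-1}]=\sum_{u\le t}\mathbb{E}\big[\mathds{1}\{\tau_j=u\}\,\mathbb{E}[D_u\mid\mathcal{F}_{u-1}]\,\big|\,\mathcal{G}_{j-1}\big]=0 .
\]
The same decomposition shows that, conditionally on $\mathcal{G}_{j-1}$, the variable $Z_j$ lies in an interval of length at most $B$; on $\{\tau_j>t\}$ it is simply the constant $0$. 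So $(Z_j)_{j\ge1}$ is a martingale difference sequence with respect to $(\mathcal{G}_j)$, with conditional range at most $B$.

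\textbf{Step 2 (fixed $\ell$).} On the event $\{n_{t,i}=\ell\}$ we have $\tau_1,\dots,\tau_\ell\le t$, so $\hat\eta_{t,i}=\frac1\ell\sum_{j=1}^{\ell}Z_j$. Azuma--Hoeffding, with the conditional Hoeffding lemma applied to each increment, gives
\[
\mathbb{P}\Big(\Big|\sum_{j\le\ell}Z_j\Big|\ge s\Big)\le 2\exp\!\big(-2s^2/(\ell B^2)\big).
\]
Choosing $s=\ell B\sqrt{\log(1/\alpha)/(2\ell)}$ makes the right-hand side equal to $2\alpha$. Intersecting with $\{n_{t,i}=\ell\}$ only decreases the probability, which yields the second claim.

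\textbf{Step 3 (union bound).} The complement of the event in the first claim is contained in $\bigcup_{\ell=1}^{t}\{|\hat\eta_{t,i}|> B\sqrt{\log(1/\alpha)/(2n_{t,i})},\ n_{t,i}=\ell\}$, since $n_{t,i}\in\{0,\dots,t\}$. Summing the Step 2 bound over $\ell\in[t]$ gives probability at most $2t\alpha$, which proves the first claim.

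The main obstacle is Step 1: verifying rigorously that the stopped increments keep zero conditional mean and bounded conditional range. This is exactly where the predictability of $\iota_u$ is needed, since it makes $\{\tau_j=u\}\in\mathcal{F}_{u-1}$ so that conditioning on it does not bias $D_u$. A second point to handle carefully is the padding by zeros when $\tau_j>t$, so that Azuma can be applied to a sequence of deterministic length $\ell$.
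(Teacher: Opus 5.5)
Your approach is correct in outline but differs from the paper's, and one claim in Step 1 needs fixing; Steps 2 and 3 are fine once it is.

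\textbf{How the paper's route differs.} The paper never re-indexes time. It works directly with $Z_t=\sum_{u\le t}\iota_u D_u$ and shows that $M_t=\exp\bigl(xZ_t-x^2B^2n_{t,i}/8\bigr)$ is a supermartingale for the original filtration. This works because $\iota_t$ (hence $n_{t,i}$) and the centering $\mathbb{E}[F_{t,i}\mid\mathcal{F}_{t-1}]$ are $\mathcal{F}_{t-1}$-measurable, so Hoeffding's lemma applies to $\iota_tD_t$ given $\mathcal{F}_{t-1}$ with range $B$. The random count sits in the compensator. On $\{n_{t,i}=\ell\}$ it is replaced by $\ell$, giving $\mathbb{E}[e^{xZ_t-x^2B^2\ell/8}\mathds{1}\{n_{t,i}=\ell\}]\le \mathbb{E}[M_t]\le 1$, followed by a Chernoff bound. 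The union over $\ell$ is exactly your Step 3. Your stopping-time re-indexing is the rigorous version of the ``table'' heuristic the paper sketches before its formal proof. It gives an honest fixed-length martingale to which standard Azuma--Hoeffding applies, at the cost of handling stopped $\sigma$-fields, which the paper's compensator trick avoids.

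\textbf{The claim in Step 1 that is false.} Conditionally on $\mathcal{G}_{j-1}=\mathcal{F}_{\tau_{j-1}}$, $Z_j$ need not lie in an interval of length $B$. The centering $\mathbb{E}[F_{u,i}\mid\mathcal{F}_{u-1}]$ at $u=\tau_j$ depends on $\tau_j$. But $\tau_j$ is determined by $\iota_{\tau_{j-1}+1},\iota_{\tau_{j-1}+2},\dots$ and is not $\mathcal{G}_{j-1}$-measurable. Suppose the possible values of $\tau_j$ carry centerings near $0$ and near $B$. Then the conditional support of $Z_j$ has length close to $2B$. Bounded-difference Azuma would then give only $2\exp\bigl(-s^2/(2\ell B^2)\bigr)$, i.e.\ $2\alpha^{1/4}$ rather than $2\alpha$ at your choice of $s$.

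\textbf{How to fix it with your own decomposition.} For $A\in\mathcal{G}_{j-1}$ and $u\le t$, you have $A\cap\{\tau_j=u\}\in\mathcal{F}_{u-1}$. This needs both $\{\tau_j=u\}\in\mathcal{F}_{u-1}$ and $\tau_{j-1}\le u-1$ on that event, and the same fact is what justifies your mean-zero computation. Apply Hoeffding's lemma to $D_u$ given $\mathcal{F}_{u-1}$ on each piece $\{\tau_j=u\}$. This yields $\mathbb{E}[e^{xZ_j}\mid\mathcal{G}_{j-1}]\le e^{x^2B^2/8}$, which is all the Azuma--Hoeffding proof actually uses. Alternatively, condition on $\mathcal{F}_{\tau_j-1}$ instead. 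This is a legitimate stopped $\sigma$-field because $\tau_j$ is predictable, it contains $\mathcal{F}_{\tau_{j-1}}$, and both $\tau_j$ and the centering are measurable with respect to it.
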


\begin{proof}
    Define the variables:
    $$
    Z_t = \sum_{u \leq t} \iota_u (F_{u,i}-\mathbb{E}[F_{u,i}|\mathcal{F}_{u-1}])
    $$
    $$
    M_t = \exp \left( xZ_t-\frac{x^2B^2n_{t,i}}{8} \right) \;\; \text{, for } x \in \mathbb{R}
    $$
    We want to prove that $M_t$ is a super martingale.
    Notice that:
    $$
    \mathbb{E}[e^{x \iota_t(F_{t,i}-\mathbb{E}[F_{t,i}|\mathcal{F}_{t-1}])}|\mathcal{F}_{t-1}] = \mathbb{E}[\iota_t e^{x (F_{t,i}-\mathbb{E}[F_{t,i}|\mathcal{F}_{t-1}])}+(1-\iota_t)|\mathcal{F}_{t-1}] \leq \iota_t e^{\frac{x^2 B^2}{8}}+(1-\iota_t) = e^{\frac{x^2B^2\iota_t}{8}}
    $$
    where we used the conditional version of Hoeffding's lemma that says that if $S_i$'s are bounded random variable in $[a_i,b_i]$ then:
    
    \begin{equation}\label{thm: conditional verson of Hoeffding lemma}
        \mathbb{E}[e^{\lambda (S_i - \mathbb{E}[S_i|\mathcal{F}])}|\mathcal{F}] \leq \exp \left( \frac{\lambda^2}{8} (b_i-a_i)^2 \right)
    \end{equation}
    We also used the fact that all the $\{ F_{t,i}|F_{t,i} \leq B \}_{t \in [T]}$ are bounded in $[0,B]$.\\

    Now notice that:
    $$
    M_t = M_{t-1}\exp \left(x\iota_t(F_{t,i}-\mathbb{E}[F_{t,i}|\mathcal{F}_{t-1}])-\frac{x^2B^2 \iota_t}{8} \right)
    $$
    Taking the expectation:
    $$
    \mathbb{E}[M_t|\mathcal{F}_{t-1}] = \mathbb{E}[M_{t-1}|\mathcal{F}_{t-1}] \cdot \mathbb{E}[e^{(x\iota_t(F_{t,i}-\mathbb{E}[F_{t,i}|\mathcal{F}_{t-1}])}|\mathcal{F}_{t-1}] \cdot  e^{-\frac{x^2B^2 \iota_t}{8}}
    $$
    Using the result of above we get:
    $$
    \mathbb{E}[M_t|\mathcal{F}_{t-1}] \leq M_{t-1}
    $$
    So we can conclude that $M_t$ is a super martingale and so $\mathbb{E}[M_t] \leq \mathbb{E}[M_0]=1$.
    Keeping this in mind we can start searching for the concentration inequalities.
    
    Using a Markov-Chernoff argument:

    \begin{equation*}
        \begin{aligned}
            \mathbb{P}(Z_t \geq \epsilon \text{ and } n_{t,i}= \ell) &= \mathbb{P}(\mathds{1} \{ n_{t,i} = \ell \} e^{xZ_t} \geq e^{\epsilon x} ) \\
            & \leq e^{-\epsilon x} \mathbb{E}\left[e^{xZ_t} \mathds{1}\{n_{t,i}=\ell\}\right] \\
            & = e^{-\epsilon x+\frac{x^2B^2 \ell}{8}} \mathbb{E}\left[e^{xZ_t - \frac{x^2B^2 \ell}{8}} \mathds{1}\{n_{t,i}=\ell\}\right]
        \end{aligned}
    \end{equation*}

    Now we can bound the last factor:

    \begin{equation*}
        \begin{aligned}
            \mathbb{E}[e^{xZ_t - \frac{x^2B^2 \ell}{8}} \mathds{1}\{n_{t,i}=\ell\}] &= \mathbb{E}[e^{xZ_t - \frac{x^2B^2 n_{t,i}}{8}} \mathds{1}\{n_{t,i}=\ell\}] \\
            & \leq \mathbb{E}[e^{xZ_t - \frac{x^2B^2 n_{t,i}}{8}}] \\
            & = \mathbb{E}[M_t] \leq \mathbb{E}[M_0] = 1
        \end{aligned}
    \end{equation*}

    So we deduce:
    $$
    \mathbb{P}(Z_t \geq \epsilon \text{ and } n_{t,i}= \ell) \leq e^{-\epsilon x+\frac{x^2B^2}{8}\ell}
    $$
    By choosing $\epsilon = B \sqrt{\frac{\ell \log (1/\alpha)}{2}}$ and $x=\frac{4\epsilon}{\ell B^2}$:
    \begin{equation}\label{eq: P(Z_t >...}
        \mathbb{P}\left(Z_t \geq B \sqrt{\frac{\ell \log(1/\alpha)}{2}} \text{ and } n_{t,i}=\ell\right) \leq \alpha
    \end{equation}
    By a similar argument it's possible to find:
    \begin{equation}\label{eq: P(-Z_t >...}
        \mathbb{P}\left(Z_t \leq -B \sqrt{\frac{\ell \log(1/\alpha)}{2}} \text{ and } n_{t,i}=\ell\right) \leq \alpha
    \end{equation}
    Combining these two we get:
    $$
    \mathbb{P}\left(|Z_t| \geq B \sqrt{\frac{\ell \log(1/\alpha)}{2}} \text{ and } n_{t,i}=\ell\right) \leq 2 \alpha
    $$
    Noticing now that $Z_t = \hat{\eta}_{t,i} \cdot n_{t,i}$, we finally get:
    $$
    \mathbb{P} \left (|\hat{\eta}_{t,i}| \geq B \sqrt{\frac{\log(1/\alpha)}{2n_{t,i}}} \text{ and } n_{t,i}=\ell  \right) \leq 2\alpha
    $$
    and this proves the second part of the lemma.

    For the first part we can sum for $\ell \in \{1,\ldots,t\}$ Equation \eqref{eq: P(Z_t >...} getting:
    $$
    \mathbb{P}\left(Z_t \geq B \sqrt{\frac{n_{t,i} \log(1/\alpha)}{2}} \text{ and } n_{t,i} \geq 1 \right) = \sum_{\ell=1}^t \mathbb{P}\left(Z_t \geq B \sqrt{\frac{\ell \log(1/\alpha)}{2}} \text{ and } n_{t,i}=\ell\right) \leq t\alpha
    $$
    Using the same argument on Equation \eqref{eq: P(-Z_t >...} we get:
    $$
    \mathbb{P}\left(Z_t \leq -B \sqrt{\frac{n_{t,i} \log(1/\alpha)}{2}} \text{ and } n_{t,i} \geq 1 \right) \leq t\alpha
    $$
    Combining these two:
    $$
    \mathbb{P}\left(|Z_t| \geq B \sqrt{\frac{n_{t,i} \log(1/\alpha)}{2}} \text{ and } n_{t,i} \geq 1 \right) \leq 2t\alpha
    $$
    And:
    $$
    \mathbb{P}\left(|Z_t| \leq B \sqrt{\frac{n_{t,i} \log(1/\alpha)}{2}} \text{ and } n_{t,i} = 0 \right) = 1-\mathbb{P}\left(|Z_t| \leq B \sqrt{\frac{n_{t,i} \log(1/\alpha)}{2}} \text{ and } n_{t,i} \geq 1 \right) \geq 1-2t\alpha
    $$
    Again reminding that $Z_t = \hat{\eta}_{t,i} \cdot n_{t,i}$ we finally get:
    $$
    \mathbb{P} \left (|\hat{\eta}_{t,i}| \leq B \sqrt{\frac{\log(1/\alpha)}{2n_{t,i}}} \text{ or } n_{t,i}=0 \right) \geq 1-2t\alpha
    $$
    and this proves also the first part of the lemma.
\end{proof}

Lemma \ref{lemma: conc ineq for mu_i complete} contains Lemma \ref{lemma: conc ineq for mu_i} and then the proof is given.

\hfill$\square$

\paragraph{Proof of Lemma \ref{lemma: conc ineq lambda} (concentration of $\lambda_i$)}

By Assumption \ref{ass: lipschitz of mu-1} we know that $\mu^{-1}$ is Lipschitz with constant $1/L_\mu$:
$$
|\hat{\lambda}_{t,i}-\lambda_i| \leq \frac{1}{L_\mu}|\hat{\mu}_{t,i}-\mu_i|
$$
Applying the concentration inequality from Lemma~\ref{lemma: conc ineq for mu_i}, we conclude that
$$
\mathbb{P}\left(|\hat{\lambda}_{t,i}-\lambda_i| \geq \frac{B}{L_\mu} \sqrt{\frac{\log(1/\alpha)}{2n_{t,i}}} \text{ and } n_{t,i} = \ell \right)
\leq
\mathbb{P}\left(|\hat{\mu}_{t,i}-\mu_i| \geq B\sqrt{\frac{\log(1/\alpha)}{2n_{t,i}}} \text{ and } n_{t,i} = \ell \right)
\leq 2\alpha.
$$
This concludes the proof.

\hfill$\square$

\subsection{Proof of Lemma \ref{lemma: conc ineq p_i} (concentration of $p_i$)}
We restrict our analysis to the ads $i \in [K]$ such that $x_i^* > 0$. 
First of all let us define the quantity $C_i^* = G(x_i^*,m/B)$. When $x_i^* = 0$, $C_i^* = 0$ and when $x_i^* > 0$, $C_i^* > 0$.
We now define the variable:
$$
\tilde{p}_{t,i} = \frac{\sum_{u=1}^t \mathbb{E}[f_{u,i}|\mathcal{F}_{u-1}]}{\sum_{u=1}^t \hat{r}_{u,i}(t)} = p_i \cdot \frac{\sum_{u=1}^t r_{u,i}}{\sum_{u=1}^t \hat{r}_{u,i}(t)}
$$
Note that, for any $\delta \in (0,1)$:
\begin{equation}\label{Eq: splitting proba}
\mathbb{P}(|\hat{p}_{t,i}-p_i| \geq \eta(t) \cap \mathscr{G}_{t-1}) \leq \mathbb{P}(|\hat{p}_{t,i}-\tilde{p}_{t,i}| \geq \delta \eta(t)\cap \mathscr{G}_{t-1}) + \mathbb{P}(|\tilde{p}_{t,i}-p_i| \geq (1-\delta)\eta(t)\cap \mathscr{G}_{t-1})
\end{equation}
Let us work on these two terms separately.

\textit{$2^{nd}$ term:}
For the second term in Equation \eqref{Eq: splitting proba}, let us work on the inequality inside the probability:
$$
|\tilde{p}_{t,i}-p_i| = \left|\frac{p_i(\sum_{u=1}^t r_{u,i}-\sum_{u=1}^t \hat{r}_{u,i}(t))}{\sum_{u=1}^t \hat{r}_{u,i}(t)}\right| \geq (1-\delta)\eta(t)
$$
Now by Assumption \ref{ass: lambda:i} we know that $\hat{\lambda}_{t,i} \geq \frac{m}{B}$ and then:
$$
\sum_{u=1}^t \hat{r}_{u,i}(t) = \sum_{u=1}^t G(x_{u,i};\hat{\lambda}_{t,i})\geq \sum_{u=1}^t G(x_i^*,m/B) = t C_i^*
$$

This is true since $\forall u \in [t]$, $x_{u,i} \geq x_i^*$ in the estimation round of arm $i$, under the fact that for all rounds up to $t-1$ we had good events (Oracle Monotonicity Assumption \ref{ass:OMA}), so under the event $\mathscr{G}_{t-1} = \{\mathcal{N}_1 \cap \ldots \cap \mathcal{N}_{t-1} \}$, and since $G$ is non-decreasing in $x$ and $\lambda$.
$$
\left|\frac{p_i(\sum_{u=1}^t r_{u,i}-\sum_{u=1}^t \hat{r}_{u,i}(t))}{tC_i^*}\right| \geq \left|\frac{p_i(\sum_{u=1}^t r_{u,i}-\sum_{u=1}^t \hat{r}_{u,i}(t)}{\sum_{u=1}^t \hat{r}_{u,i}(t)}\right| \geq (1-\delta)\eta(t)
$$ 
$$
|\sum_{u=1}^t (r_{u,i}-\hat{r}_{u,i}(t))| \geq \frac{(1-\delta)\eta(t) t C_i^*}{p_i}
$$
Since we restricted the analysis to the arms $i$ such that $x_i^* >0$, that implies $C_i^*>0$, we are sure that we are not dividing by 0.
By Assumption~\ref{ass:lipschitz_G},
$$
|r_{u,i} - \hat{r}_{u,i}(t)| \leq L_\lambda|\hat{\lambda}_{t,i} - \lambda_i|
$$

Then we get:
$$
 L_\lambda\cdot t |\hat{\lambda}_{t,i}-\lambda_i| \geq (1-\delta)\eta(t) t C_i^*
$$
$$
|\hat{\lambda}_{t,i}-\lambda_i| \geq \frac{(1-\delta)\eta(t) C_i^*}{L_\lambda p_i}
$$
Hence:
$$
\mathbb{P} \left( |\tilde{p}_{t,i}-p_i| \geq (1-\delta)\eta(t) \cap \mathscr{G}_{t-1} \right) \leq \mathbb{P} \left( |\hat{\lambda}_{t,i}-\lambda_i| \geq \frac{(1-\delta)\eta(t) C_i^*}{L_\lambda p_i} \cap \mathscr{G}_{t-1} \right) \leq \mathbb{P} \left( |\hat{\lambda}_{t,i}-\lambda_i| \geq \frac{(1-\delta)\eta(t) C_i^*}{L_\lambda p_i} \right)
$$
Now we want to use the concentration inequality for $\lambda_i$. Indeed through Lemma \ref{lemma: conc ineq lambda} we are to able to bound the probability of the distance $|\hat{\lambda}_{t,i}-\lambda_i|$.

Choosing $\eta(t)$ such that:
$$
\frac{(1-\delta)\eta(t)C_i^*}{L_\lambda p_i} = \frac{B}{L_\mu} \sqrt{\frac{\log(1/\alpha)}{2n_{t,i}}}
$$
Giving:
\begin{equation}\label{Eq: eta}
    \eta(t) = \frac{BL_\lambda p_i}{(1-\delta)C_i^* L_\mu} \sqrt{\frac{\log(1/\alpha)}{2n_{t,i}}}
\end{equation}
So by using this definition of $\eta(t)$ in Equation \eqref{Eq: eta} and applying Lemma \ref{lemma: conc ineq lambda} we get the following:
\begin{equation}\label{Eq: 2nd term bound}
\begin{aligned}
        \mathbb{P} \left( |\tilde{p}_{t,i}-p_i| \geq (1-\delta)\eta(t) \cap \mathscr{G}_{t-1} \right) & \leq \mathbb{P} \left( |\hat{\lambda}_{t,i}-\lambda_i| \geq \frac{(1-\delta)\eta(t)C_i^*}{L_\lambda p_i} \right)  \\
        &\leq \mathbb{P} \left( |\hat{\lambda}_{t,i}-\lambda_i| \geq \frac{B}{L_\mu}\sqrt{\frac{\log(1/\alpha)}{2n_{t,i}}} \right) \\
        & = \sum_{s=1}^{t} \mathbb{P} \left( |\hat{\lambda}_{t,i}-\lambda_i| \geq \frac{B}{L_\mu}\sqrt{\frac{\log(1/\alpha)}{2n_{t,i}}} \text{ and } n_{t,i}=s \right) \\
        & \leq \sum_{s=1}^{t}2\alpha = 2\alpha t    
\end{aligned}
\end{equation}

\textit{$1^{st}$ term:}
Now let us focus on the $1^{st}$ term in Equation \eqref{Eq: splitting proba}, keeping in mind that $\eta(t)$ is now fixed according to Equation \eqref{Eq: eta}.
\begin{equation}
\begin{aligned}
    \mathbb{P}(|\hat{p}_{t,i}-\tilde{p}_{t,i}| \geq \delta \eta(t) \cap \mathscr{G}_{t-1}) & \leq \mathbb{P}(|\hat{p}_{t,i}-\tilde{p}_{t,i}| \geq \delta \eta(t)) \\
    & =\mathbb{P}\left(\left|\frac{\sum_{u=1}^t f_{u,i}-\sum_{u=1}^t \mathbb{E}[f_{u,i}|\mathcal{F}_{u-1}]}{\sum_{u=1}^t \hat{r}_{u,i}(t)}\right| \geq \delta \eta(t)\right) \\
    & \leq \mathbb{P}\left(|\sum_{u=1}^t f_{u,i}-\sum_{u=1}^t \mathbb{E}[f_{u,i}|\mathcal{F}_{u-1}]| \geq \delta\eta(t) t C_i^*\right)
\end{aligned}
\end{equation}
Where we used again the fact that $\sum_{u=1}^t \hat{r}_{u,i}(t) \geq tC_i^*$.

Using now the definition of $\eta(t)$ in Equation \eqref{Eq: eta}:
$$
\mathbb{P}(|\hat{p}_{t,i}-\tilde{p}_{t,i}| \geq \delta \eta(t) \cap \mathscr{G}_{t-1}) \leq \mathbb{P} \left( |\sum_{u=1}^t f_{u,i}-\sum_{u=1}^t \mathbb{E}[f_{u,i}|\mathcal{F}_{u-1}]| \geq \frac{\delta p_i B L_\lambda t}{(1-\delta)L_\mu } \sqrt{\frac{\log(1/\alpha)}{2n_{t,i}}} \right) 
$$
Let us define $\zeta_t$ as:
$$
\zeta_t = \sum_{u=1}^t f_{u,i}-\sum_{u=1}^t\mathbb{E}[f_{u,i}|\mathcal{F}_{u-1}]
$$
Now since $\frac{1}{\sqrt{t}} \leq \frac{1}{\sqrt{n_{t,i}}}$ (because $\forall t,i, \; t \geq n_{t,i}$) we have:    
$$
\mathbb{P} \left( |\zeta_t| \geq \frac{\delta p_i B L_\lambda t}{(1-\delta)L_\mu} \sqrt{\frac{\log(1/\alpha)}{2t}} \right) \geq \mathbb{P} \left( |\zeta_t| \geq \frac{\delta p_i B L_\lambda t}{(1-\delta)L_\mu} \sqrt{\frac{\log(1/\alpha)}{2n_{t,i}}} \right)
$$
And we can notice that $\zeta_t$ is a martingale. To prove it see that:
$$
\zeta_{t+1} = \sum_{u=1}^t f_{u,i} + f_{t+1,i}-(\sum_{u=1}^t\mathbb{E}[f_{u,i}|\mathcal{F}_{u-1}]+\mathbb{E}[f_{t+1,i}|\mathcal{F}_t]) = \zeta_t + f_{t+1,i} - \mathbb{E}[f_{t+1,i}|\mathcal{F}_t]
$$
Now let us compute the $\mathbb{E}[\zeta_{t+1}|\mathcal{F}_t]$ in order to verify the martingale condition:
$$
\mathbb{E}[\zeta_{t+1}|\mathcal{F}_t] = \mathbb{E}[\zeta_t|\mathcal{F}_t]+\mathbb{E}[f_{t+1,i}|\mathcal{F}_t]-\mathbb{E}[f_{t+1,i}|\mathcal{F}_t] = \zeta_t
$$
So $\zeta_t$ is a martingale.

Then it's possible to apply the Azuma-Hoeffding inequality to bound the above probability.
Since $|\zeta_k-\zeta_{k-1}|=|f_{k,i}-\mathbb{E}[f_{k,i}]| \leq 1$ and $\zeta_0 = 0$:
\begin{equation}\label{Eq: P(zeta_t) azuma-hoeffding}
\mathbb{P}(|\zeta_t| \geq \epsilon) \leq 2 \exp \left(-\frac{\epsilon^2}{2t} \right)
\end{equation}
Now let us choose $\delta$ such that:
\begin{equation}\label{Eq: fix delta}
\frac{\delta p_i}{1-\delta}=1 \;\; \implies \;\; \delta = \frac{1}{1+p_i}
\end{equation}
Note that $\delta \in [\frac{1}{2},1]$. Then by applying Equation \eqref{Eq: P(zeta_t) azuma-hoeffding}:

$$
\mathbb{P} \left( |\zeta_t| \geq \frac{BL_\lambda}{L_\mu} \sqrt{\frac{t\log(1/\alpha)}{2}} \right) \leq 2 \exp \left(-\frac{B^2L_\lambda^2}{2t} \frac{t\log(1/\alpha)}{2L_\mu^2} \right) = 2 \exp \left(- \frac{B^2L_\lambda^2}{4L_\mu^2}\log(1/\alpha) \right) = 2 \alpha^{\frac{B^2L_\lambda^2}{4L_\mu^2}}
$$
Finally we get:
\begin{equation}\label{Eq: 1st term bound}
\mathbb{P} \left( |\hat{p}_{t,i} - \tilde{p}_{t,i}| \geq \delta \eta(t) \cap \mathscr{G}_{t-1} \right) = \mathbb{P} \left( |\hat{p}_{t,i} - \tilde{p}_{t,i}| \geq \frac{B L_\lambda}{C_i^* L_\mu}\sqrt{\frac{\log(1/\alpha)}{2n_{t,i}}} \right) \leq 2 \alpha^{\frac{B^2L_\lambda^2}{4L_\mu^2}}
\end{equation}
\bigskip
    
Unifying the two bounds in Equations \eqref{Eq: 1st term bound} and \eqref{Eq: 2nd term bound} plus the specific choice of $\eta(t)$ in Equation \eqref{Eq: eta} and of $\delta$ in Equation \eqref{Eq: fix delta}, the lemma is proved.

\hfill$\square$

\clearpage

\section{Algorithm \texttt{RA-UCB}}
Below we present the extended implementable version of \texttt{RA-UCB} where all the parameters estimation process is made explicitly.
\begin{algorithm}[H]
\caption{\texttt{RA-UCB} (Extended Pseudocode)}
\label{algo:RA_UCB_extended}
\begin{algorithmic}[1]
\STATE \textbf{\underline{Input}}: $T,K,B$.
\STATE $t\gets 1$, $t'\gets 1$, $i\gets 1$.
\STATE Initialize $\{n_{0,i}\gets 0, \hat{\mu}_{0,i}\gets 0\}_{i\in[K]}$ and $\{\hat{\lambda}_{1,i},\hat{p}_{1,i}\}_{i\in[K]}$.

\STATE \textbf{\underline{Initialization Phase}} ($t \in \{1,\ldots,K\lfloor \log T \rfloor\}$):
\FOR{$i\in[K]$}
    \FOR{$s=1,\ldots,\lfloor \log T\rfloor$}
        \STATE Play $\bm{x}_t$ with $x_{t,i}=B$ and $x_{t,j}=0$ for $j\neq i$.
        \STATE Observe $\{F_{t,k}\}_{k\in[K]}$ and compute $\{f_{t,k}\}_{k\in[K]}$, where $f_{t,k}=\mathds{1}\{F_{t,k}\le B\}$.
        \STATE Update success counter: \quad $n_{t,i}\gets n_{t-1,i}+\mathds{1}\{f_{t,i}=1\}$.
        \STATE Update truncated-mean estimate:
        $$
        \hat{\mu}_{t,i}\gets
        \frac{\hat{\mu}_{t-1,i}\,n_{t-1,i}+F_{t,i}\,\mathds{1}\{F_{t,i}\le B\}}{n_{t,i}}.
        $$
        \STATE Update $\hat{\lambda}_{t,i}$ using $\mu(\lambda)=\mathbb{E}[X\mid X\le B]$: $\hat{\lambda}_{t,i} \gets \mu^{-1}(\hat{\mu}_{t,i})$:
        \STATE Update $\hat{p}_{t,i}$ using $\hat{r}_{u,i}(t)=G(x_{u,i},\hat{\lambda}_{t,i})$:
        $$
        \hat{p}_{t,i}\gets
        \min\left\{1,
        \frac{\sum_{u=1}^{t} f_{u,i}}{\sum_{u=1}^{t} G(x_{u,i},\hat{\lambda}_{t,i})}
        \right\}.
        $$
        \STATE $t\gets t+1$.
    \ENDFOR
\ENDFOR
\STATE Set $t'\gets 1$ and initialize estimation-time copies $\{\hat{\lambda}_{t',i},\hat{p}_{t',i},n_{t',i},\hat{\mu}_{t',i}\}_{i\in[K]}$ from time $t-1$.

\STATE \textbf{\underline{Main Phase}} ($t\in\{K\lfloor \log T\rfloor+1,\ldots,T\}$):
\WHILE{$t\le T$}
    \STATE Compute confidence bounds $\{\hat{p}_{t',k}^{\pm},\hat{\lambda}_{t',k}^{\pm}\}_{k\in[K]}$.
    \STATE Boost arm $i$ and penalize others:
    $(\bar{\lambda}_{t',i},\bar{\lambda}'_{t',i},\bar{p}_{t',i})\gets(\hat{\lambda}_{t',i}^-,\hat{\lambda}_{t',i}^+,\hat{p}_{t',i}^+)$,
    $(\bar{\lambda}_{t',k},\bar{\lambda}'_{t',k},\bar{p}_{t',k})\gets(\hat{\lambda}_{t',k}^+,\hat{\lambda}_{t',k}^-,\hat{p}_{t',k}^-)$ for $k\neq i$.
    \STATE $\bm{x}_t\gets \widetilde{\mathcal{P}}(\bar{\bm{\lambda}}_{t'},\bar{\bm{\lambda}}'_{t'},\bar{\bm{p}}_{t'})$ and play $\bm{x}_t$.
    \STATE Observe $\{F_{t,k}\}_{k\in[K]}$, compute $\{f_{t,k}\}_{k\in[K]}$; set
    $\tilde{x}_{t'+1,i}\gets x_{t,i}$, $\phi_{t'+1,i}\gets f_{t,i}$, $\psi_{t'+1,i}\gets F_{t,i}$.
    \STATE Update only arm $i$:
    $$
    n_{t'+1,i}\gets n_{t',i}+\mathds{1}\{\phi_{t'+1,i}=1\},\qquad
    \hat{\mu}_{t'+1,i}\gets\frac{\hat{\mu}_{t',i}n_{t',i}+\psi_{t'+1,i}\mathds{1}\{\psi_{t'+1,i}\le B\}}{n_{t'+1,i}}.
    $$
    \STATE $\hat{\lambda}_{t'+1,i}\gets \mu^{-1}\big(\hat{\mu}_{t'+1,i}\big)$,\qquad
    $\hat{p}_{t'+1,i}\gets \min\!\left\{1,\;
    \frac{\sum_{u=1}^{t'+1}\phi_{u,i}}{\sum_{u=1}^{t'+1} G(\tilde{x}_{u,i},\hat{\lambda}_{t'+1,i})}
    \right\}$.
    \STATE $r_t\gets\sum_{k=1}^K f_{t,k}$;\quad $i\gets i+1$;\quad $t\gets t+1$.
    \IF{$i>K$} 
        \STATE $i\gets 1$;\ $t'\gets t'+1$. 
    \ENDIF
\ENDWHILE
\STATE \textbf{\underline{Output}}: $\sum_{t=1}^T r_t$.
\end{algorithmic}
\end{algorithm}

\subsection{Proof of Lemma \ref{Lemma: structural property of the Opt Problem}} \label{app: proof of structural property of Opt Problem}

By Consistency (Property~\ref{prp: Consistency}), for every $k\in[K]$ we have $\widetilde{G}(x,\lambda_k,\lambda_k)=G(x,\lambda_k)$. Hence the true optimum $\bm{x}=\mathcal{P}(\bm{\lambda},\bm{p})$ is equivalently a maximizer of $\widetilde{\mathcal{P}}(\bm{\lambda},\bm{\lambda},\bm{p})$.

Define the separable objectives over $\mathcal{X}=\{\bm{x}\in\mathbb{R}^K_+:\sum_{k=1}^K x_k\le B\}$:
$$
J(\bm{x})=\sum_{k=1}^K p_k\,\widetilde{G}(x_k,\lambda_k,\lambda_k), \qquad \bar{J}(\bm{x})=\sum_{k=1}^K \bar{p}_k\,\widetilde{G}(x_k,\bar{\lambda}_k,\bar{\lambda}'_k)
$$

\paragraph{KKT under Strong Concavity}
By the strong concavity assumption of $\widetilde{G}(x,\lambda,\lambda')$ we can state that both $J$ and $\bar J$ are strongly concave on $\mathcal{X}$, hence their maximizers are unique and KKT conditions are necessary and sufficient.

Moreover, since each summand is non-decreasing in $x$ and $p_k,\bar p_k\ge 0$,
any maximizer saturates the budget:
$$
\sum_{k=1}^K x_k=\sum_{k=1}^K \bar x_k=B
$$
Let $\mu \geq 0$ and $\bar\mu\geq 0$ be the multipliers of the budget constraint for $J$ and $\bar J$, respectively. Define the marginals
$$
g_k(x):=\frac{\partial J}{\partial x_k}=p_k\,\widetilde g(x,\lambda_k,\lambda_k), \qquad \bar g_k(x):=\frac{\partial \bar J}{\partial x_k} =\bar p_k\,\widetilde g(x,\bar\lambda_k,\bar\lambda'_k)
$$
with $\widetilde g(x,\lambda,\lambda')=\partial_x\widetilde G(x,\lambda,\lambda')$.
Strong concavity implies $\partial_{xx}\widetilde G<0$, hence $\widetilde g(x,\lambda,\lambda')$ is strictly decreasing in $x$ and so are $g_k(\cdot)$ and $\bar g_k(\cdot)$.

The KKT conditions yield, for all $k$,
$$
g_k(x_k)\le \mu,\qquad x_k(\mu-g_k(x_k))=0
$$
and
$$
\bar g_k(\bar x_k)\le \bar\mu,\qquad \bar x_k(\bar\mu-\bar g_k(\bar x_k))=0
$$
In particular, if $x_k>0$ then $g_k(x_k)=\mu$, and if $\bar x_k>0$ then $\bar g_k(\bar x_k)=\bar\mu$.

\paragraph{Dominance Relations from Boosting Monotonicity}
By construction of the two parameter tuples, for the boosted arm $i$:
$$
\bar\lambda_i=\lambda_i^-\le \lambda_i,\qquad \bar\lambda'_i=\lambda_i^+\ge \lambda_i,\qquad \bar p_i=p_i^+\ge p_i
$$
while for every $j\neq i$:
$$
\bar\lambda_j=\lambda_j^+\ge \lambda_j,\qquad \bar\lambda'_j=\lambda_j^-\le \lambda_j,\qquad \bar p_j=p_j^-\le p_j
$$
Using Boosting Monotonicity (Property~\ref{prp: Boosting Monotonicity}), $\widetilde g$ is non-decreasing in $\lambda'$ and non-increasing in $\lambda$, hence for all $x\in[0,B]$,
$$
\widetilde g(x,\bar\lambda_i,\bar\lambda'_i)\ge \widetilde g(x,\lambda_i,\lambda_i), \qquad \widetilde g(x,\bar\lambda_j,\bar\lambda'_j)\le \widetilde g(x,\lambda_j,\lambda_j)
$$
Multiplying by $\bar p_i\ge p_i$ and $\bar p_j\le p_j$ gives the dominance relations
$$
\bar g_i(x)\ge g_i(x)\quad \forall x\in[0,B], \qquad \bar g_j(x)\le g_j(x)\quad \forall x\in[0,B],\ \forall j\neq i
$$

\paragraph{Contradiction Argument}
If $x_i=0$, then $\bar x_i\ge 0=x_i$ and we are done. Assume $x_i>0$. Let us proceed by contradiction: assume $\bar x_i<x_i$. Since both solutions saturate the budget, there exists $j\neq i$ such that $\bar x_j>x_j$ (and in particular $\bar x_j>0$).

Since $\bar g_j$ is strictly decreasing and $\bar x_j>x_j$, $\bar g_j(\bar x_j)<\bar g_j(x_j)$. Therefore
$$
\bar\mu=\bar g_j(\bar x_j) < \bar g_j(x_j)\le g_j(x_j)\le \mu
$$
where the last inequality comes from KKT for $(J,\bm{x})$ (always $g_j(x_j)\le\mu$).

Since $x_i>0$, KKT for $(J,\bm{x})$ gives $g_i(x_i)=\mu$. By dominance relation, $\bar g_i(x_i)\ge g_i(x_i)=\mu$.
Because $\bar g_i$ is strictly decreasing and $\bar x_i<x_i$, we have $\bar g_i(\bar x_i)>\bar g_i(x_i)\ge \mu$.
If $\bar x_i>0$, then KKT for $(\bar J,\bar{\bm{x}})$ gives $\bar\mu=\bar g_i(\bar x_i)>\mu$, contradicting $\bar\mu < \mu$.
If $\bar x_i=0$, then KKT implies $\bar g_i(0)\le \bar\mu$, but strict decrease yields $\bar g_i(0)>\bar g_i(x_i)\ge\mu$, hence $\bar\mu>\mu$ again, the same contradiction.

Then we conclude $\bar x_i\ge x_i$.

\hfill$\square$

\subsection{How to build $\widetilde{G}$}\label{app: how to build G tilde}

We discuss a simple and broadly applicable way to construct a surrogate $\widetilde{G}(x,\lambda,\lambda')$ from a given CDF family $G(x,\lambda)$.
The goal is to obtain a function that coincides with $G$ when $\lambda'=\lambda$ (Consistency), remains a valid CDF-shaped map in $x$ (Boundedness/Monotonicity), and can be tuned so that its marginal $\partial_x\widetilde{G}$ is separately monotone in $\lambda$ and $\lambda'$ (Boosting Monotonicity), which is the key structural requirement used in the analysis. The Lipschitz continuity conditions (the last two items in Assumption \ref{ass:what we want}) are straightforward to verify.

Let $\bar G(x,\lambda):=1-G(x,\lambda)$ be the survival function. We define:
$$
\widetilde{G}(x,\lambda,\lambda') := 1 - s(\lambda,\lambda')\,\bar G(x,\lambda), \qquad x\in[0,B]
$$
where $s:\Lambda\times\Lambda\to\mathbb{R}_+$ satisfies $s(\lambda,\lambda)=1$. This automatically ensures consistency.
Moreover, if $s(\lambda,\lambda') \leq 1$, then $\widetilde{G} \leq 1$ and non-decreasing in $x$ whenever $x\mapsto \bar G(x,\lambda)$ is non-increasing. Notice also that $\partial_x\widetilde{G}=s(\lambda,\lambda')\,\partial_x G$. 
Then we can state the following general construction lemma for exponential family of distribution.

\begin{lemma}[Boosting monotonicity for exponential-family]
\label{lemma:boosting_monotone_exp_family}
Assume that for all $x\in[0,B]$ and $\lambda\in\Lambda$,
$$
\partial_xG(x,\lambda)=h(x)\,g(\lambda)\,\exp\!\big(\eta(\lambda)\,T(x)\big), \qquad h(x)\geq 0,\quad g(\lambda)>0
$$
Choose
$$
s(\lambda,\lambda'):=\frac{g(\lambda')}{g(\lambda)}.
$$
If $g$ is non-decreasing, $T(x)\geq 0$ on $[0,B]$, and $\eta$ is non-increasing in $\lambda \in \Lambda$, then for every fixed $x$ the map $\partial_x\widetilde{G}(x,\lambda,\lambda')$ is non-increasing in $\lambda$
and non-decreasing in $\lambda'$, satisfying Boosting Monotonicity Property \ref{prp: Boosting Monotonicity}.
\end{lemma}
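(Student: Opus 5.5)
Since $\widetilde{G}(x,\lambda,\lambda')=1-s(\lambda,\lambda')\bar G(x,\lambda)$, differentiating in $x$ gives $\partial_x\widetilde{G}=-s(\lambda,\lambda')\partial_x\bar G=s(\lambda,\lambda')\,\partial_xG(x,\lambda)$. Plugging in the exponential-form representation and the choice $s(\lambda,\lambda')=g(\lambda')/g(\lambda)$, the factor $g(\lambda)$ cancels and we obtain the closed form
\[
\tilde g(x,\lambda,\lambda')=\partial_x\widetilde{G}(x,\lambda,\lambda')=h(x)\,g(\lambda')\,\exp\!\big(\eta(\lambda)\,T(x)\big).
\]
This is the whole point of the construction: the dependence on $\lambda'$ and on $\lambda$ separates into two distinct factors. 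I would first check that $s$ is well defined on $\Lambda\times\Lambda$, which holds because $g>0$, and that $s(\lambda,\lambda)=1$.

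Next I would fix $x\in[0,B]$ and treat the two arguments one at a time. For $\lambda'$, the map $\lambda'\mapsto h(x)\exp(\eta(\lambda)T(x))\,g(\lambda')$ is a nonnegative constant times $g(\lambda')$. Since $h(x)\ge 0$ and $g$ is non-decreasing, this map is non-decreasing.

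For $\lambda$, the factor $h(x)g(\lambda')$ is a nonnegative constant. Since $T(x)\ge 0$ and $\eta$ is non-increasing, $\lambda\mapsto\eta(\lambda)T(x)$ is non-increasing. Composing with the increasing function $\exp$ preserves this, so $\lambda\mapsto\tilde g(x,\lambda,\lambda')$ is non-increasing. Together these two facts are exactly Property~\ref{prp: Boosting Monotonicity} of Assumption~\ref{ass:what we want}.

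There is no real obstacle; the only care needed is with signs. The nonnegativity of $h$ and $T$ makes multiplying by constants preserve the direction of monotonicity, including the degenerate case $h(x)=0$, where $\tilde g$ is constant. The other point is the cancellation of $g(\lambda)$, which removes the factor $1/g(\lambda)$ that would otherwise have competing monotonicity in $\lambda$. I would also remark, without proving it here, that the statement only asserts Boosting Monotonicity. Boundedness additionally requires $s\le 1$, which holds in the boosted configuration but not for arbitrary pairs; this is a separate issue from the claimed property.
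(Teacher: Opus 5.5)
Your proof is correct and matches the paper's: after the cancellation $\partial_x\widetilde{G}=h(x)\,g(\lambda')\,e^{\eta(\lambda)T(x)}$, monotonicity in $\lambda'$ follows from $g$, and monotonicity in $\lambda$ follows from $T\ge 0$ and $\eta$ non-increasing; your explicit use of $h\ge 0$ is useful extra care. Your closing aside is wrong, although it does not affect the lemma: $\widetilde{G}=1-s\,\bar G\le 1$ holds for every pair because $s>0$ and $\bar G\ge 0$, and on the boosted arm $s=g(\lambda^+)/g(\lambda^-)\ge 1$, so $s\le 1$ fails precisely in the boosted configuration (it would only guarantee $\widetilde{G}\ge 0$, which Property~\ref{prp: Boundness and Monotonicity} does not require).
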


\begin{proof}
With $s(\lambda,\lambda')=g(\lambda')/g(\lambda)$,
$$
\partial_x \widetilde{G}=\frac{g(\lambda')}{g(\lambda)}h(x)\,g(\lambda)\,e^{\eta(\lambda)T(x)} =h(x)\,g(\lambda')e^{\eta(\lambda)T(x)}.
$$
Fixing $(x,\lambda)$, monotonicity in $\lambda'$ follows from monotonicity of $g$.
Fixing $(x,\lambda')$, monotonicity in $\lambda$ follows since $T(x)\ge 0$ and $\eta$ is non-increasing, hence $e^{\eta(\lambda)T(x)}$ is non-increasing in $\lambda$.
\end{proof}

Lemma \ref{lemma:boosting_monotone_exp_family} ensures the Boosting Monotonicity property for a broad class of commonly used distributions, including, for instance, the exponential distribution, Weibull models with fixed shape $k$, Gamma models with fixed shape $\alpha$, as well as the half-normal, Maxwell, and Nakagami families, among others.
Moreover, even for distributions that cannot be cast directly into the canonical form required by Lemma \ref{lemma:boosting_monotone_exp_family}, one can often still design a scaling factor $s(\lambda,\lambda')$ that enforces Boosting Monotonicity; examples include the Lomax and Beta families.

If one wants to check concavity of $\widetilde{G}$ there is the following easy test. Assume $G(x,\lambda)$ is twice differentiable on $x \in [0,B]$. Since $\bar G=1-G$, we have $\bar G''(x,\lambda)=-\partial_x^2 G(x,\lambda)$ and therefore
$$
\partial_x^2 \widetilde{G}(x,\lambda,\lambda') =-s(\lambda,\lambda') \partial_x^2 \bar G(x,\lambda) = s(\lambda,\lambda') \partial_x^2 G(x,\lambda)
$$
Hence a simple sufficient condition for concavity of $\widetilde{G}$ in $x$ is:
$$
\partial_x^2 G(x,\lambda)\le 0 \quad \text{for all } x\in[0,B],\ \lambda\in\Lambda
$$
because then $\partial_x^2\widetilde{G}(x,\lambda,\lambda')\leq 0$ for every $\lambda,\lambda'$ as well. Equivalently, for each fixed $\lambda$, the function $\partial_x G(x,\lambda)$ is non-increasing on $x \in [0,B]$.
In contrast, distributions whose slope in $x$ is initially increasing (e.g., unimodal densities with a rising part near $0$) may violate concavity near the origin, and then the above construction does not yield a concave surrogate on the whole interval unless one restricts $[0,B]$ to a region where the slope is already decreasing.

\section{How to get Regret of $\mathcal{O}(\sqrt{T})$}\label{app_sec: regret of sqrt(T)}

\paragraph{1. Regret bound for the initialization phase:}
This initialization phase lasts for reward rounds $t \in \{1, \ldots, K\lfloor \log T \rfloor \}$. During this phase, we simply allocate the entire budget to a single arm for $\lfloor \log T \rfloor$ rounds and perform preliminary estimations of the parameters. Since the regret at each round is upper bounded by $K$, the total regret accumulated in this initialization phase is bounded by:
$$
\mathbb{E}[\mathcal{R}_{\{1,\ldots, K \lfloor \log T \rfloor\}}] \leq K \cdot K\log T = K^2 \log T
$$
\paragraph{2. Regret bound for the main phase:}
We now focus on an upper bound for the regret incurred during the main phase of \texttt{RA-UCB}. This phase corresponds to reward round $t \in \{ K\lfloor \log T \rfloor + 1, \ldots, T \}$.

First, we define formally the good event at time $t-1$, $\mathcal{N}_{t-1}$, as the event in which the true parameters $\lambda_i, p_i$ for all $i \in [K]$ lie within the confidence intervals defined by the estimators, i.e.:
$$
\forall i \in [K], \;\;\; |\hat{\lambda}_{t-1,i}-\lambda_i| \leq \frac{B}{L_\mu} \sqrt{\frac{\log(1/\alpha)}{2n_{t-1,i}}} \; \text{ and } \; |\hat{p}_{t-1,i}-p_i| \leq \frac{L_\lambda}{L_\mu}\frac{B(1+p_i)}{C_i^*} \sqrt{\frac{\log(1/\alpha)}{2n_{t-1,i}}}
$$
In order to bound the total expected regret, we can use the following decomposition:
\begin{equation}\label{eq: regret decomposition}
\begin{aligned}
    \mathbb{E}[\mathcal{R}_T] & = \sum_{t=\lfloor \beta T \rfloor+1}^T [\mathbb{E}[R_t|\mathcal{N}_{t-1}]\mathbb{P}(\mathcal{N}_{t-1})+\mathbb{E}[R_t|\overline{\mathcal{N}}_{t-1}]\mathbb{P}(\overline{\mathcal{N}}_{t-1})] \\
    & \leq \sum_{t=\lfloor \beta T \rfloor+1}^T \mathbb{E}[R_t|\mathcal{N}_{t-1}]+K\sum_{t=\lfloor \beta T \rfloor+1}^T\mathbb{P}(\overline{\mathcal{N}}_{t-1})
\end{aligned}
\end{equation}
where $\overline{\mathcal{N}}_{t-1}$ is the bad event. This is true since the regret for each round can be at most $K$. 

Now we proceed by separating the regret bound in good and bad events.

Remind that the regret at round $t$ given the history up to time $t-1$, $\mathcal{F}_{t-1}$, is:
\begin{equation} \label{eq: reg_t_history_generic}
    \mathbb{E}[R_t|\mathcal{F}_{t-1}] = \sum_{i=1}^K \left[ p_iG(x_i^*,\lambda_i)-p_iG(x_{t,i},\lambda_i)\right]
\end{equation}
where $\bm{x^*} =(x_1^*,...,x_K^*)$ is the solution to $\mathcal{P}(\bm{\lambda}, \bm{p})$ and $\bm{x}_t = (x_{t,1},...,x_{t,K})$ is the solution to $\tilde{\mathcal{P}}(\bm{\bar{\lambda}_{t-1}}, \bm{\bar{\lambda}_{t-1}'}, \bar{\bm{p}}_{t-1})$ (the solution for round $t$ is determined by the estimators at time $t-1$).

Let us first remark that we can safely restrict our analysis to the set of arms for which $x_i^* > 0$. This is justified by the fact that whenever $x_i^* = 0$, the arm $i$ contributes negatively to the regret. Therefore, excluding such arms from the regret analysis still leads to a valid upper bound. From now on, whenever we refer to an arm $i$, it is implicitly assumed that is $i \in [K]$ such that $x_i^* > 0$. Denote this set as $S := \{ i \in [K] : x_i^* > 0 \}$ and, in the following, $\sum_i$ denotes $\sum_{i\in S}$.

\textbf{Good Event:}

We start by assuming that our estimations are good enough and so we are in a good event $\mathcal{N}_{t-1}$ at round $t-1$.
The algorithm \texttt{RA-UCB} at round $t$ solves the optimization problem $\tilde{\mathcal{P}}(\bm{\bar{\lambda}_{t-1}}, \bm{\bar{\lambda}'_{t-1}},\bar{\bm{p}}_{t-1})$ in \eqref{opt problem: surrogate algorithm maximization problem} and plays allocation $\bm{x}_t$.
The algorithm, every $K$ rounds, picks an arm $i$ and plays the following choice of parameters: $(\bm{\bar{\lambda}_{t-1}}, \bm{\bar{\lambda}'_{t-1}},\bar{\bm{p}}_{t-1}) = ( \{\hat{\lambda}_{t-1,1}^+,\hat{\lambda}_{t-1,1}^-,\hat{p}_{t-1,1}^-\},...,\{\hat{\lambda}_{t-1,i}^-,\hat{\lambda}_{t-1,i}^+,\hat{p}_{t,i}^+\},...,\{\hat{\lambda}_{t-1,K}^+,\hat{\lambda}_{t-1,K}^-,\hat{p}_{t-1,K}^-\})$. This specific configuration corresponds to playing the best parameters for arm $i$ and the worst ones for all the other arms $j \neq i$. In this way we ensures that, if we play this configuration at round $t_i$, then, by Oracle Monotonicity Assumption \ref{ass:OMA} and being under the good event $\mathcal{N}_{t_i-1}$, at that round, $x_{t_i,i} \geq x_i^*$.

This implies: 
$$
\mathbb{P}(\text{success on arm } i|x_{t_i,i}) \geq \mathbb{P}(\text{success on arm } i|x_i^*)
$$

Since the algorithm in $K$ rounds plays the specific configuration of parameters above for every arm $i \in [K]$. Then we can conclude that:
\begin{equation} \label{eq: lower bound for n_{t,i}}
    \forall i \in [K]: \;\; \mathbb{E}[n_{t,i}] \geq \frac{\mathbb{E}[n_{t,i}^*]}{K}=\frac{tq_i^*}{K}
\end{equation}
 where $n_{t,i}^*$ is the number of successes on arm $i$ up to round $t$ and $q_i^*=p_i(1-e^{-\lambda_i x_i^*})$, that represents the probability that arm $i$ yields a success playing the optimal budget.

Define the true objective (expected reward) as
$$
F(\bm{x},\bm{p},\bm{\lambda}) := \sum_{i\in S} p_i\,G(x_i,\lambda_i),
$$
and the surrogate objective optimized by the oracle as
$$
\widetilde{F}(\bm{x},\bm{p},\bm{\lambda},\bm{\lambda}') := \sum_{i\in S} p_i\,\widetilde{G}(x_i,\lambda_i,\lambda_i').
$$
By design, $\bm{x}^*$ maximizes $F(\cdot,\bm{p},\bm{\lambda})$ and $\bm{x}_t$ maximizes $\widetilde{F}(\cdot,\bar{\bm{p}}_{t-1},\bar{\bm{\lambda}}_{t-1},\bar{\bm{\lambda}}'_{t-1})$.

We know that the surrogate $\widetilde{G}$ satisfies Property \ref{prp: Uniform Bias Control} of uniform bias control, meaning for all $x\in[0,B]$,
\begin{equation}\label{eq:Delta_def}
\big|\widetilde{G}(x;\lambda,\lambda') - G(x;\lambda)\big| \leq L_\Delta |\lambda-\lambda'|
\end{equation}

From \eqref{eq: reg_t_history_generic}, using $F$ we write
$$
\mathbb{E}[R_t\mid \mathcal{F}_{t-1}] = F(\bm{x}^*,\bm{p},\bm{\lambda}) - F(\bm{x}_t,\bm{p},\bm{\lambda})
$$
Add and subtract the following terms:
\begin{equation}\label{eq:generic_add_subtract}
\begin{aligned}
F(\bm{x}^*,\bm{p},\bm{\lambda}) - F(\bm{x}_t,\bm{p},\bm{\lambda}) &\leq
\underbrace{\Big(F(\bm{x}^*,\bm{p},\bm{\lambda})-\widetilde{F}(\bm{x}^*,\bm{p},\bm{\lambda},\bm{\lambda})\Big)}_{=0} \\
&\quad+\Big(\widetilde{F}(\bm{x}^*,\bm{p},\bm{\lambda},\bm{\lambda}) - \widetilde{F}(\bm{x}^*,\bar{\bm{p}}_{t-1},\bar{\bm{\lambda}}_{t-1},\bar{\bm{\lambda}}'_{t-1})\Big) \\
&\quad+ \underbrace{\Big(\widetilde{F}(\bm{x}^*,\bar{\bm{p}}_{t-1},\bar{\bm{\lambda}}_{t-1},\bar{\bm{\lambda}}'_{t-1}) -
\widetilde{F}(\bm{x}_t,\bar{\bm{p}}_{t-1},\bar{\bm{\lambda}}_{t-1},\bar{\bm{\lambda}}'_{t-1})\Big)}_{\le 0} \\
&\quad+ \Big(\widetilde{F}(\bm{x}_t,\bar{\bm{p}}_{t-1},\bar{\bm{\lambda}}_{t-1},\bar{\bm{\lambda}}'_{t-1}) - F(\bm{x}_t,\bm{p},\bm{\lambda})\Big)
\end{aligned}
\end{equation}
The first term is zero by consistency Property \ref{prp: Consistency} of the surrogate, $\widetilde{G}(x;\lambda,\lambda)=G(x;\lambda)$. The third term is non-positive because $\bm{x}_t$ maximizes the surrogate objective.

Thus
\begin{equation}\label{eq:generic_regret_bound_step}
\mathbb{E}[R_t\mid \mathcal{F}_{t-1}] \leq A_t + B_t,
\end{equation}
where
$$
A_t := \Big|\widetilde{F}(\bm{x}^*,\bm{p},\bm{\lambda},\bm{\lambda})-\widetilde{F}(\bm{x}^*,\bar{\bm{p}}_{t-1},\bar{\bm{\lambda}}_{t-1},\bar{\bm{\lambda}}'_{t-1})\Big|
$$
and
$$
B_t := \Big| \widetilde{F}(\bm{x}_t,\bar{\bm{p}}_{t-1},\bar{\bm{\lambda}}_{t-1},\bar{\bm{\lambda}}'_{t-1}) - F(\bm{x}_t,\bm{p},\bm{\lambda})\Big|
$$

Let us start by bounding $A_t$.

\begin{equation}\label{eq: add substract and using Lipschitz}
\begin{aligned}
    \Big|\widetilde{F}(\bm{x}^*,\bm{p},\bm{\lambda},\bm{\lambda})-\widetilde{F}(\bm{x}^*,\bar{\bm{p}}_{t-1},\bar{\bm{\lambda}}_{t-1},\bar{\bm{\lambda}}'_{t-1})\Big| & = \big| \sum_{i \in S} \big[ p_i\widetilde{G}(x_i^*,\lambda_i,\lambda_i) - \bar{p}_{t-1,i}\widetilde{G}(x_i^*,\bar{\lambda}_{t-1,i}, \bar{\lambda}_{t-1,i}') \big] \big| \\
    &\leq \sum_{i \in S} \big| p_i\widetilde{G}(x_i^*,\lambda_i,\lambda_i) - \bar{p}_{t-1,i}\widetilde{G}(x_i^*,\bar{\lambda}_{t-1,i}, \bar{\lambda}_{t-1,i}') \big| \\
    & = \sum_{i \in S} \big| p_i\widetilde{G}(x_i^*,\lambda_i,\lambda_i) + p_i\widetilde{G}(x_i^*,\bar{\lambda}_{t-1,i}, \bar{\lambda}_{t-1,i}') \\
    & \quad- p_i\widetilde{G}(x_i^*,\bar{\lambda}_{t-1,i}, \bar{\lambda}_{t-1,i}') - \bar{p}_{t-1,i}\widetilde{G}(x_i^*,\bar{\lambda}_{t-1,i}, \bar{\lambda}_{t-1,i}') \big| \\
    & \leq \sum_{i \in S} p_i \big| \widetilde{G}(x_i^*, \lambda_i, \lambda_i) - \widetilde{G}(x_i^*,\bar{\lambda}_{t-1,i}, \bar{\lambda}_{t-1,i}') \big| \\
    & \quad \sum_{i \in S} \widetilde{G}(x_i^*,\bar{\lambda}_{t-1,i}, \bar{\lambda}_{t-1,i}')\big| p_i - \bar{p}_{t-1,i} \big|
\end{aligned}
\end{equation}

So by Property \ref{prp: Boundness and Monotonicity} we know that $\widetilde{G} \leq 1$ and the fact that $\forall i \in [K]$, $p_i \leq 1$, we have
\begin{equation}
\begin{aligned}
A_t &\leq \sum_{i\in S} |p_i-\bar{p}_{t-1,i}| + \sum_{i\in S} \sup_{x\in[0,B]} \big| \widetilde{G}(x,\lambda_i,\lambda_i) - \widetilde{G}(x,\bar{\lambda}_{t-1,i},\bar{\lambda}'_{t-1,i}) \big|
\end{aligned}
\end{equation}
By Property \ref{prp:lipschitz_tildeG} of Lipschitz continuity of $\widetilde{G}$ in $\lambda$ and $\lambda'$ uniformly over $x\in[0,B]$
\begin{equation}\label{eq:tildeG_Lipschitz}
\big|\widetilde{G}(x,\lambda,\lambda)-\widetilde{G}(x,\bar{\lambda},\bar{\lambda}')\big| \leq L_{\widetilde{\lambda}}|\lambda-\bar{\lambda}| + L_{\widetilde{\lambda}'}|\lambda-\bar{\lambda}'|.
\end{equation}
Then
\begin{equation}\label{eq:At_bound}
A_t \leq \sum_{i\in S}|p_i-\bar{p}_{t-1,i}| + \sum_{i\in S} \Big(L_{\widetilde{\lambda}}|\lambda_i-\bar{\lambda}_{t-1,i}| + L_{\widetilde{\lambda}'}|\lambda_i-\bar{\lambda}'_{t-1,i}| \Big)
\end{equation}

Now let us focus on $B_t$.

Decompose
$$
B_t \leq \Big| \widetilde{F}(\bm{x}_t,\bar{\bm{p}}_{t-1},\bar{\bm{\lambda}}_{t-1},\bar{\bm{\lambda}}'_{t-1}) - F(\bm{x}_t,\bar{\bm{p}}_{t-1},\bar{\bm{\lambda}}_{t-1}) \Big| + \Big| F(\bm{x}_t,\bar{\bm{p}}_{t-1},\bar{\bm{\lambda}}_{t-1}) - F(\bm{x}_t,\bm{p},\bm{\lambda}) \Big|
$$
The first term is controlled by \eqref{eq:Delta_def}:
$$
\Big| \widetilde{F}(\bm{x}_t,\bar{\bm{p}}_{t-1},\bar{\bm{\lambda}}_{t-1},\bar{\bm{\lambda}}_{t-1}') - F(\bm{x}_t,\bar{\bm{p}},\bar{\bm{\lambda}}) \Big| \leq
L_\Delta\sum_{i\in S} \bar{p}_{t-1,i}\,|\bar{\lambda}_{t-1,i}-\bar{\lambda}'_{t-1,i}| \leq L_\Delta \sum_{i\in S} |\bar{\lambda}_{t-1,i}-\bar{\lambda}'_{t-1,i}| 
$$
since $\bar{p}_{t-1,i}\le 1$.
For the second term, applying the same procedure of Equation \eqref{eq: add substract and using Lipschitz} and using Assumption \ref{ass:lipschitz_G} of Lipschitz continuity of $G(x, \lambda)$,
$$
\Big| F(\bm{x}_t,\bar{\bm{p}},\bar{\bm{\lambda}}) - F(\bm{x}_t,\bm{p},\bm{\lambda}) \Big| \leq \sum_{i\in S}|p_i-\bar{p}_{t-1,i}| + L_\lambda \sum_{i\in S}|\lambda_i-\bar{\lambda}_{t-1,i}|
$$
Hence
\begin{equation}\label{eq:Bt_bound}
B_t \leq L_\Delta\sum_{i\in S}|\bar{\lambda}_{t-1,i}-\bar{\lambda}'_{t-1,i}| + \sum_{i\in S}|p_i-\bar{p}_{t-1,i}| + L_\lambda \sum_{i\in S}|\lambda_i-\bar{\lambda}_{t-1,i}|
\end{equation}

Now, combining \eqref{eq:generic_regret_bound_step}, \eqref{eq:At_bound}, and \eqref{eq:Bt_bound},
there exist constants $c_1,c_2,c_3>0$ (depending only on Lipschitz constants) such that
\begin{equation}\label{eq:Rt_good_generic}
\mathbb{E}[R_t\mid \mathcal{F}_{t-1}] \leq 2\sum_{i\in S}|p_i-\bar{p}_{t-1,i}| + (L_{\tilde{\lambda}}+L_{\lambda})\sum_{i\in S}|\lambda_i-\bar{\lambda}_{t-1,i}| + L_{\tilde{\lambda}'}\sum_{i\in S}|\lambda_i-\bar{\lambda}'_{t-1,i}| + L_\Delta \sum_{i\in S}|\bar{\lambda}_{t-1,i}-\bar{\lambda}'_{t-1,i}|
\end{equation}

Under the good event $\mathcal{N}_{t-1}$, the boosted parameters satisfy
$$
|\bar{\lambda}_{t-1,i}-\lambda_i| \leq 2\cdot \frac{B}{L_\mu}\sqrt{\frac{\log(1/\alpha)}{2n_{t-1,i}}},
\qquad |\bar{p}_{t-1,i}-p_i| \leq 2\cdot \frac{L_\lambda}{L_\mu}\frac{B(1+p_i)}{C_i^*}\sqrt{\frac{\log(1/\alpha)}{2n_{t-1,i}}}
$$
and similarly
$$
|\bar{\lambda}'_{t-1,i}-\lambda_i| \leq 2\cdot \frac{B}{L_\mu}\sqrt{\frac{\log(1/\alpha)}{2n_{t-1,i}}}.
$$
Then \eqref{eq:Rt_good_generic} implies
\begin{equation}\label{eq:Rt_good_sqrt}
\mathbb{E}[R_t\mid \mathcal{N}_{t-1}] \leq \widetilde{C}_1 \sqrt{\log(1/\alpha)}\sum_{i\in S}\Bigg(\frac{1+p_i}{C_i^*}\Bigg)\mathbb{E}\!\left[\frac{1}{\sqrt{n_{t-1,i}}}\right] + \widetilde{C}_2 \sqrt{\log(1/\alpha)}\sum_{i\in S}\mathbb{E}\!\left[\frac{1}{\sqrt{n_{t-1,i}}}\right]
\end{equation}
for a constant $\widetilde{C}_1, \widetilde{C}_2>0$ absorbing all problem-dependent factors. Precisely:
$$
\widetilde{C}_1 = \frac{2\sqrt{2}BL_\lambda}{L_\mu} \quad ; \quad \widetilde{C}_2 = \frac{\sqrt{2}B}{L_\mu}\Bigg(L_{\tilde{\lambda}}+L_\lambda+L_{\tilde{\lambda}'} +2L_\Delta \Bigg)
$$

Now since we know that $\mathbb{E}[n_{t,i}] \geq \frac{tq_i^*}{K}$, we have to find a way to upper bound $\mathbb{E}\left[ \frac{1}{\sqrt{n_{t,i}}}\right]$.

In order to do that we can consider the following sequence of Bernoulli random variables $\{Y_i(u)\}_{u \in [t]} \sim \mathcal{B}(\frac{q_i^*}{K})$. Let's call $S_Y = \sum_{u=1}^t Y_i(u)$ their sum.
Then we can apply multiplicative Chernoff bound to say that:
\begin{equation}
    \begin{aligned}
        \mathbb{P}\left( S_Y \leq \frac{\mathbb{E}[S_Y]}{2}\right) & \leq \left(\sqrt{2}e^{-\frac{1}{2}}\right)^{\mathbb{E}[S_Y]} \leq \left(\sqrt{2}e^{-\frac{1}{2}}\right)^{\frac{t q_i^*}{K}}
    \end{aligned}
\end{equation}
So now:
\begin{equation}
    \begin{aligned}
        \mathbb{E}\left[ \frac{1}{\sqrt{S_Y}}\right] & = \mathbb{E}\left[\frac{1}{\sqrt{S_Y}} \bigg| S_Y \leq \frac{tq_i^*}{2K}\right] \cdot \mathbb{P}\left(S_Y \leq \frac{tq_i^*}{2K}\right)+\mathbb{E}\left[\frac{1}{\sqrt{S_Y}} \bigg| S_Y > \frac{tq_i^*}{2K}\right] \cdot \mathbb{P}\left(S_Y > \frac{tq_i^*}{2K}\right)\\
        & \leq \left(\sqrt{2}e^{-\frac{1}{2}}\right)^{\frac{t q_i^*}{K}} + \sqrt{\frac{2K}{tq_i^*}} \\
        & \leq 2\sqrt{\frac{2K}{tq_i^*}} = \sqrt{\frac{8K}{tq_i^*}}
    \end{aligned}
\end{equation}
where the last step holds since $\left(\sqrt{2}e^{-\frac{1}{2}}\right)^x \leq \sqrt{\frac{2}{x}} \;\; \forall x$.

Now we want to show that $\mathbb{E}\left[ \frac{1}{\sqrt{n_{t,i}}}\right] \leq \mathbb{E}\left[ \frac{1}{\sqrt{S_Y}}\right]$.
To do that we can notice that, for how we have designed our algorithm, it holds that:
$$
\mathbb{P}(\text{success on arm } i \text{ at time } u \text{ given }x_{u,i}) \geq \frac{q_i^*}{K}
$$
We remind that $q_i^*=G(x_i^*,\lambda_i)$. So since $n_{t,i}=\sum_{u=1}^t \mathds{1}\{\text{success on arm } i \text{ at time } u \text{ given }x_{u,i}\}$, then we can say that $n_{t,i}$ stochastically dominates $S_Y$. Hence, since $\frac{1}{\sqrt{x}}$ is a decreasing function in $x$, it holds that:
$$
\mathbb{E}\left[ \frac{1}{\sqrt{n_{t,i}}}\right] \leq \mathbb{E}\left[ \frac{1}{\sqrt{S_Y}}\right]
$$
Therefore, setting $\alpha = t^{-3}$ and summing \eqref{eq:Rt_good_sqrt} over $t$ yields
$$
\sum_{t=K\lfloor \log T \rfloor+1}^T \mathbb{E}[R_t\mid \mathcal{N}_{t-1}] \leq C_{\text{sqrt}}\,\sqrt{KT\log T}
$$
for a problem-dependent constant $C_{\text{sqrt}}>0$:
$$
C_{\text{sqrt}} = \sqrt{96}\Bigg(\widetilde{C}_1\sum_{i \in S}\Big(\frac{1+p_i}{C_i^*\sqrt{q_i^*}}\Big)+\widetilde{C}_2\sum_{i \in S} \Big(\frac{1}{q_i^*}\Big) \Bigg)
$$
where we used the inequality $\sum_{t=1}^T 1/\sqrt{t} \leq 2\sqrt{T}$.

\textbf{Bad event:}

Now we want to bound the part of the regret depending on the probability of being in a bad event at round $t$ in Equation \eqref{eq: regret decomposition}.

Remind that $\mathscr{G}_t = \{ \mathcal{N}_1 \cap \ldots \cap \mathcal{N}_t \}$ and then $\overline{\mathscr{G}}_t$ represents the event for which up to round $t$ there was at least one bad event.

So it's true that:
\begin{equation}\label{eq: majoration of P(bar(N)_t)}
\mathbb{P}(\overline{\mathcal{N}}_t) \leq \mathbb{P}(\overline{\mathscr{G}}_t) \leq \sum_{u=1}^t \mathbb{P}(\overline{\mathcal{N}}_u \cap \mathscr{G}_{u-1})
\end{equation}

Now it's possible to compute the term $\mathbb{P}(\overline{\mathcal{N}}_t \cap \mathscr{G}_{t-1})$. Let us remind that $S := \{i \in [K]: x_i^* > 0 \}$. Notice that $|S| \leq K$. Therefore:
\begin{equation}
    \begin{aligned}
        \mathbb{P}(\bar{\mathcal{N}}_t \cap \mathscr{G}_{t-1}) & = \mathbb{P} \left( \exists i \in S: \;\;\left \{ |\hat{\lambda}_{t,i} -\lambda_i| \geq \frac{B}{L_\mu} \sqrt{\frac{\log(1/\alpha)}{2n_{t,i}}} \; \text{ or } |\hat{p}_{t,i}-p_i| \geq \frac{L_\lambda}{L_\mu}\frac{B(1+p_i)}{C_i^*} \sqrt{\frac{\log(1/\alpha)}{2n_{t,i}}} \right \} \cap \mathscr{G}_{t-1}\right) \\
        & \leq \sum_{i \in S} \mathbb{P} \left( \left \{|\hat{\lambda}_{t,i} -\lambda_i| \geq \frac{B}{L_\mu} \sqrt{\frac{\log(1/\alpha)}{2n_{t,i}}} \; \text{ or } |\hat{p}_{t,i}-p_i| \geq \frac{L_\lambda}{L_\mu}\frac{B(1+p_i)}{C_i^*} \sqrt{\frac{\log(1/\alpha)}{2n_{t,i}}} \right \}\cap \mathscr{G}_{t-1} \right) \\
        & \leq \sum_{i \in S} \left[ \mathbb{P} \left(|\hat{\lambda}_{t,i} -\lambda_i| \geq \frac{B}{L_\mu} \sqrt{\frac{\log(1/\alpha)}{2n_{t,i}}} \cap \mathscr{G}_{t-1} \right) + \mathbb{P}\left(|\hat{p}_{t,i}-p_i| \geq \frac{L_\lambda}{L_\mu}\frac{B(1+p_i)}{C_i^*} \sqrt{\frac{\log(1/\alpha)}{2n_{t,i}}} \cap \mathscr{G}_{t-1} \right) \right] \\
        & \leq \sum_{i \in S} \left[ \mathbb{P} \left(|\hat{\lambda}_{t,i} -\lambda_i| \geq \frac{B}{L_\mu} \sqrt{\frac{\log(1/\alpha)}{2n_{t,i}}} \right) + \mathbb{P}\left(|\hat{p}_{t,i}-p_i| \geq \frac{L_\lambda}{L_\mu}\frac{B(1+p_i)}{C_i^*} \sqrt{\frac{\log(1/\alpha)}{2n_{t,i}}} \cap \mathscr{G}_{t-1} \right) \right] \\
        & \leq \sum_{i \in S} \sum_{s=1}^{t} \left(|\hat{\lambda}_{t,i} -\lambda_i| \geq \frac{B}{L_\mu} \sqrt{\frac{\log(1/\alpha)}{2n_{t,i}}} \text{ and } n_{t,i}=s \right) + \sum_{i=1}^K \mathbb{P}\left(|\hat{p}_{t,i}-p_i| \geq \frac{L_\lambda}{L_\mu}\frac{B(1+p_i)}{C_i^*} \sqrt{\frac{\log(1/\alpha)}{2n_{t,i}}} \right) \\
        & \leq \sum_{i \in S} \sum_{s=1}^{t} 2\alpha + \sum_{i=1}^K \Big(2\alpha^{\frac{B^2L_\lambda^2}{4L_\mu^2}} + 2\alpha t\Big) \\
        & \leq 2|S|\Big(\alpha t + \alpha^{\frac{B^2L_\lambda^2}{4L_\mu^2}}+\alpha t \Big) \leq 2K\Big(2\alpha t + \alpha^{\frac{B^2L_\lambda^2}{4L_\mu^2}}\Big)
    \end{aligned}
\end{equation}
So using Equation \eqref{eq: majoration of P(bar(N)_t)}:
\begin{equation}
\begin{aligned}
    \mathbb{P}(\overline{\mathcal{N}}_t) & \leq \sum_{u=1}^t \mathbb{P}(\overline{\mathcal{N}}_u \cap \mathscr{G}_{u-1})\\
    & \leq K\sum_{u=1}^t \Big(4\alpha u+2\alpha^{\frac{B^2L_\lambda^2}{4L_\mu^2}}\Big) \\
    & \leq 2Kt(t+1) \alpha + 2Kt \alpha^{\frac{B^2L_\lambda^2}{4L_\mu^2}}
    \end{aligned}
\end{equation}

By picking $\alpha = t^{-3}$:
$$
\mathbb{P}(\overline{\mathcal{N}}_t) \leq \frac{2K}{t} + \frac{2K}{t^2}+2Kt^{1-\frac{3B^2L_\lambda^2}{4L_\mu^2}}
$$

According to Equation \eqref{eq: regret decomposition} we need to upper bound $\sum_{t=2}^T \mathbb{P}(\overline{\mathcal{N}}_{t-1})$. Hence:
\begin{equation} \label{eq: regret bound for bad event}
    \begin{aligned}
        \sum_{t=\lfloor \beta T \rfloor+1}^T \mathbb{P}(\overline{\mathcal{N}}_{t-1}) &= \sum_{s=\lfloor \beta T \rfloor}^{T-1} \mathbb{P}(\overline{\mathcal{N}}_s)  \\
        & \leq \sum_{s=1}^{T} \mathbb{P}(\overline{\mathcal{N}}_s)\\
        &\leq  K\left[\sum_{s=1}^{T-1} \frac{2}{s} + \sum_{s=1}^{T-1}\frac{2}{s^2}+\sum_{s=1}^{T-1}s^{1-\frac{3B^2L_\lambda^2}{4L_\mu^2}}\right] \\
        & \leq  K\left[\sum_{s=1}^{T} \frac{2}{s} + \sum_{s=1}^{T}\frac{2}{s^2}+\sum_{s=1}^{T}s^{1-\frac{3B^2L_\lambda^2}{4L_\mu^2}}\right] \\
        & \leq K\left[ 2(1+\log T)+ 2\cdot\frac{\pi^2}{6} + \int_1^T t^{1-\frac{3B^2L_\lambda^2}{4L_\mu^2}}dt \right]\\ 
    \end{aligned}
\end{equation}

If $\frac{3B^2L_\lambda^2}{4L_\mu^2} \geq 2$ then the integral gives a contribute of at most $\mathcal{O}(\log T)$. But this is satisfied since we know, by assumption, that:  
$$
\frac{B^2L_\lambda^2}{L_\mu^2} \geq \frac{8}{3}
$$
So the regret contribution of bad event is $\mathcal{O}(K\log T)$.

\paragraph{Conclusion}

Combining the regret bounds for initialization, good events in the main phase, and bad events in the main phase, we conclude that
$$
\mathbb{E}[\mathcal{R}_T]\leq C_{\text{sqrt}}\,\sqrt{KT\log T}+K^2 \log T +O(K\log T)
$$
which is $\widetilde{O}(\sqrt{KT})$.
\hfill$\square$

\section{How to get Regret of $\mathcal{O}(\mathrm{poly}\log(T))$}\label{app:logT-regret-ub-generic}

The parameters are in a compact set such that for every arm $i \in [K]$:
$$
\lambda_i \in \Big[\frac{m}{B},\frac{M}{B}\Big], \qquad p_i \in [p_{\min} ,1]
$$
that comes from Assumption \ref{ass: lambda:i} and \ref{ass:p_i}. In addition we know that for all arms $i$, $x_i \in [0,B]$.
We work with the surrogate objective used by the algorithm
$$
\widetilde{F}(\bm{x},\bm{\theta}) := \sum_{i=1}^K p_i\,\widetilde{G}(x_i,\lambda_i,\lambda_i'), \qquad \bm{\theta} =(\bm{p},\bm{\lambda},\bm{\lambda}'), \qquad \bm{x}\in\mathcal{X}:=\Big\{\bm{x}\in\mathbb{R}^K_+:\sum_{i=1}^K x_i\le B\Big\}.
$$
By Consistency Property \ref{prp: Consistency}, $\widetilde{F}(\bm{x},\bm{p},\bm{\lambda},\bm{\lambda})=\sum_i p_i G(x_i,\lambda_i)$ is the true expected reward.

\begin{remark}
Assumption \ref{ass:uniform_sc_smooth} implies that $\widetilde{F}(\cdot\,,\bm{\theta})$ is $\mu$-strongly concave and $L$-smooth on $\mathcal{X}$ with
$$
\mu:=p_{\min}\mu_G, \qquad L:=L_G,
$$
since $-\nabla^2_{\bm{x}}\widetilde{F}$ is diagonal and each diagonal entry is in $[p_{\min}\mu_G,\,L_G]$.
\end{remark}

\begin{lemma}[Stability of the argmax under gradient perturbations]
\label{lemma:stability_argmax}
Let $\mathcal{X}\subseteq\mathbb{R}^K$ be nonempty, closed, and convex.
Let $\widetilde{F}(\cdot,\theta)$ and $\widetilde{F}(\cdot,\bar{\theta})$ be differentiable and $\mu$-strongly concave on $\mathcal{X}$ for some $\mu>0$.

Define
$$
\bm{x}(\theta)\in\arg\max_{\bm{x}\in\mathcal{X}} \widetilde{F}(\bm{x},\theta), \qquad \bm{x}(\bar\theta)\in\arg\max_{\bm{x}\in\mathcal{X}} \widetilde{F}(\bm{x},\bar\theta)
$$
Then
$$
\|\bm{x}(\bar\theta)-\bm{x}(\theta)\|_2 \leq \frac{1}{\mu}\sup_{\bm{x}\in\mathcal{X}} \|\nabla_{\bm{x}} \widetilde{F}(\bm{x},\bar\theta)  -\nabla_{\bm{x}} \widetilde{F}(\bm{x},\theta)\|_2
$$
\end{lemma}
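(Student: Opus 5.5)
We will use the first-order optimality conditions of the two constrained maximization problems together with the strong monotonicity of the negated gradient. Write $\bm{x}:=\bm{x}(\theta)$, $\bar{\bm{x}}:=\bm{x}(\bar\theta)$ and $\nabla:=\nabla_{\bm{x}}$. Since $\mathcal{X}$ is closed and convex and $\widetilde{F}(\cdot,\theta)$ is differentiable and concave, the maximizer $\bm{x}$ satisfies the variational inequality
\[
\langle \nabla\widetilde{F}(\bm{x},\theta),\, \bm{y}-\bm{x}\rangle \le 0 \qquad \forall \bm{y}\in\mathcal{X}.
\]
The same holds for $\bar{\bm{x}}$ with $\bar\theta$. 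We apply the first with $\bm{y}=\bar{\bm{x}}$ and the second with $\bm{y}=\bm{x}$, then add them. This gives
\[
\langle \nabla\widetilde{F}(\bm{x},\theta)-\nabla\widetilde{F}(\bar{\bm{x}},\bar\theta),\, \bar{\bm{x}}-\bm{x}\rangle \le 0 .
\]

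Next, we insert and subtract $\nabla\widetilde{F}(\bar{\bm{x}},\theta)$ inside the first argument, which splits the inner product into two terms. The first term is $\langle \nabla\widetilde{F}(\bm{x},\theta)-\nabla\widetilde{F}(\bar{\bm{x}},\theta),\bar{\bm{x}}-\bm{x}\rangle$. By $\mu$-strong concavity of $\widetilde{F}(\cdot,\theta)$ (equivalently, $\mu$-strong monotonicity of $-\nabla\widetilde{F}(\cdot,\theta)$), this term is at least $\mu\|\bar{\bm{x}}-\bm{x}\|_2^2$. The second term is $\langle \nabla\widetilde{F}(\bar{\bm{x}},\theta)-\nabla\widetilde{F}(\bar{\bm{x}},\bar\theta),\bar{\bm{x}}-\bm{x}\rangle$. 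By Cauchy--Schwarz it is at least
\[
-\|\nabla\widetilde{F}(\bar{\bm{x}},\bar\theta)-\nabla\widetilde{F}(\bar{\bm{x}},\theta)\|_2\,\|\bar{\bm{x}}-\bm{x}\|_2 .
\]
Since $\bar{\bm{x}}\in\mathcal{X}$, this gradient gap is at most the supremum appearing in the statement. Combining the two terms yields $\mu\|\bar{\bm{x}}-\bm{x}\|_2^2\le \sup_{\bm{x}'\in\mathcal{X}}\|\cdot\|_2\,\|\bar{\bm{x}}-\bm{x}\|_2$. If $\bar{\bm{x}}=\bm{x}$ there is nothing to prove; otherwise we divide by $\|\bar{\bm{x}}-\bm{x}\|_2$ to conclude.

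There is no real obstacle here. The only points to state carefully are the following.
\begin{itemize}
\item The variational-inequality characterization is valid for differentiable concave maximization over a closed convex set. Because of the sign convention, the inequality reads $\le 0$ for maximization.
\item Strong concavity is used in its gradient form, $\langle\nabla f(\bm{u})-\nabla f(\bm{v}),\bm{u}-\bm{v}\rangle\le-\mu\|\bm{u}-\bm{v}\|^2$.
\item Only strong concavity of $\widetilde{F}(\cdot,\theta)$ is actually needed; the hypothesis on $\widetilde{F}(\cdot,\bar\theta)$ serves only to guarantee existence and uniqueness of its maximizer.
\end{itemize}
In the application, the box $[0,B]$ in Assumption~\ref{ass:uniform_sc_smooth} is respected because $\mathcal{X}\subseteq[0,B]^K$.
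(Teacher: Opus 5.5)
Your proof is correct and takes essentially the same route as the paper. You add the two first-order variational inequalities, insert $\nabla_{\bm{x}}\widetilde{F}(\bm{x}(\bar\theta),\theta)$, bound one term by strong monotonicity and the other by Cauchy--Schwarz. The only difference is cosmetic: you work directly with $\widetilde{F}$ and the $\le 0$ form of the optimality condition, whereas the paper first passes to $f=-\widetilde{F}$ and uses the minimization form.
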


\begin{proof}
Consider $f(\bm{x},\theta):=-\widetilde{F}(\bm{x},\theta)$. Then $f(\cdot;\theta)$ and $f(\cdot,\bar\theta)$ are $\mu$-strongly convex on $\mathcal{X}$ and
$$
\arg\max_{\bm{x}\in\mathcal{X}}\widetilde{F}(\bm{x},\theta)=\arg\min_{\bm{x}\in\mathcal{X}}f(\bm{x},\theta)
$$
Since $\mathcal{X}$ is convex and $f(\cdot,\theta)$ is differentiable and convex, the first-order optimality condition in variational inequality form yields
$$
\langle \nabla f(\bm{x}(\theta),\theta), \bm{z}-\bm{x}(\theta)\rangle \geq 0 \qquad \forall \bm{z}\in\mathcal{X}
$$
Applying the same condition to $f(\cdot,\bar{\theta})$ gives
$$
\langle \nabla f(\bm{x}(\bar{\theta}),\bar{\theta}), \bm{z}-\bm{x}(\bar{\theta})\rangle \geq 0 \qquad \forall \bm{z}\in\mathcal{X}
$$

Choosing $\bm{z}=\bm{x}(\bar{\theta})$ in the first inequality and $\bm{z}=\bm{x}(\theta)$ in the second, and rearranging terms, we obtain
$$
\langle \nabla f(\bm{x}(\theta),\theta) - \nabla f(\bm{x}(\bar{\theta}),\bar{\theta}), \bm{x} (\bar{\theta})-\bm{x}(\theta)\rangle \geq 0
$$

Let $\bm{d} := \bm{x}(\bar{\theta})-\bm{x}(\theta)$. Adding and subtracting $\nabla f(\bm{x}(\bar{\theta}),\theta)$ yields
$$
\langle \nabla f(\bm{x}(\theta),\theta) - \nabla f(\bm{x}(\bar{\theta}),\theta), \bm{d}\rangle \geq -\langle \nabla f(\bm{x}(\bar{\theta}),\theta)- \nabla f(\bm{x}(\bar{\theta}),\bar{\theta}), \bm{d}\rangle.
$$

Since $f(\cdot,\theta)$ is $\mu$-strongly convex on $\mathcal{X}$, its gradient is $\mu$-strongly monotone, and therefore
$$
\langle \nabla f(\bm{x}(\theta),\theta) - \nabla f(\bm{x}(\bar{\theta}),\theta), \bm{x}(\theta)-\bm{x}(\bar{\theta})\rangle \geq \mu\|\bm{x}(\theta)-\bm{x}(\bar{\theta})\|_2^2
$$
Recalling that $\bm{x}(\theta)-\bm{x}(\bar{\theta})=-\bm{d}$, this implies
$$
\langle \nabla f(\bm{x}(\theta),\theta) - \nabla f(\bm{x}(\bar{\theta}),\theta), \bm{d}\rangle \leq -\mu\|\bm{d}\|_2^2
$$

Combining the previous inequalities yields
$$
\mu\|\bm{d}\|_2^2 \leq \langle \nabla f(\bm{x}(\bar{\theta}),\theta) - \nabla f(\bm{x}(\bar{\theta}),\bar{\theta}), \bm{d}\rangle
$$

Applying the Cauchy-Schwarz inequality, we obtain
$$
\langle \nabla f(\bm{x}(\bar{\theta}),\theta) - \nabla f(\bm{x}(\bar{\theta}),\bar{\theta}), \bm{d}\rangle \leq \|\nabla f(\bm{x}(\bar{\theta}),\theta) - \nabla f(\bm{x}(\bar{\theta}),\bar{\theta})\|_2 \,\|\bm{d}\|_2
$$

If $\bm{d}\neq \bm{0}$, dividing both sides by $\|\bm{d}\|_2$ yields
$$
\|\bm{x}(\bar{\theta})-\bm{x}(\theta)\|_2 \leq \frac{1}{\mu} \|\nabla f(\bm{x}(\bar{\theta}),\theta) - \nabla f(\bm{x}(\bar{\theta}),\bar{\theta})\|_2
$$
The claim follows by taking the supremum over $\bm{x}\in\mathcal{X}$. The case $\bm{d}=\bm{0}$ is trivial.
\end{proof}

\subsection{Proof of Theorem \ref{thm: regret bound of logT}}

\paragraph{Quadratic bound}

The expected regret at time $t$ conditioned on the history $\mathcal{F}_{t-1}$ is:
$$
\mathbb{E}[R_t\mid\mathcal{F}_{t-1}] = \sum_{i=1}^K p_i\Big(G(x_i^*,\lambda_i)-G(x_{t,i},\lambda_i)\Big)
$$
By Consistency Property \ref{prp: Consistency}, this is:
$$
\mathbb{E}[R_t\mid\mathcal{F}_{t-1}] = \widetilde{F}(\bm{x}^*,\bm{p},\bm{\lambda},\bm{\lambda})-\widetilde{F}(\bm{x}_t,\bm{p},\bm{\lambda},\bm{\lambda})
$$
Define again the convex function:
$$
f(\bm{x}):=-\widetilde{F}(\bm{x},\bm{p},\bm{\lambda},\bm{\lambda})
$$
To be coherent we would have $f(\bm{x})=f(\bm{x},\bm{p},\bm{\lambda},\bm{\lambda})$, but let us forget about the dependence on the parameters in the following discussion for sake of simplicity.
By Property \ref{ass:uniform_sc_smooth}, $f$ is $L$-smooth with $L=L_G$.

We now argue that the budget constraint is active for both $\bm{x}^*$ and $\bm{x}_t$.
Indeed, since $x\mapsto G(x,\lambda)$ is non-decreasing, any maximizer of $\sum_i p_i G(x_i,\lambda_i)$ over $\sum_i x_i\le B$ must satisfy $\sum_i x_i=B$.
The same argument holds for $\widetilde{G}$ by Property \ref{prp: Boundness and Monotonicity}.
Hence we may restrict to feasible points with $\sum_i x_i=B$.

Let $\bm{x}^*$ be an optimal solution to the true problem, and consider only indices with $x_i^*>0$ (the others contribute non-positively to the regret, hence can be dropped in an upper bound exactly as in proof of Theorem \ref{thm: regret sqrt(T)}). Again let $S$ be the set of $i \in [K]$ such that $x_i^* >0$.

Let $\nu\ge 0$ and $\xi_i\ge 0$ denote the corresponding Lagrange multipliers.
The Lagrangian is given by
$$
\mathcal{L}(\bm{x},\nu,\bm{\xi}) = f(\bm{x}) + \nu\Big(\sum_{i=1}^K x_i - B\Big) - \sum_{i=1}^K \xi_i x_i 
$$

By the KKT conditions, stationarity implies
$$
\nabla_{\bm{x}} \mathcal{L}(\bm{x}^*,\nu,\bm{\xi}) = \bm{0}
$$
that is, for each $i\in[K]$,
$$
\partial_i f(\bm{x}^*) + \nu - \xi_i = 0
$$
Since $x_i^*>0$ for all relevant $i$, complementary slackness yields $\xi_i=0$, and therefore
$$
\partial_i f(\bm{x}^*) + \nu = 0 \qquad \forall i
$$
Equivalently, in vector form,
$$
\nabla f(\bm{x}^*) = -\nu \bm{1}
$$

hence for any $\bm{x}$ with $\sum_i x_i=B$,
$$
\langle \nabla f(\bm{x}^*),\bm{x}-\bm{x}^*\rangle =-\nu\Big(\sum_i x_i-\sum_i x_i^*\Big)=0.
$$

Recall the standard smoothness inequality: if a function $f$ is $L$-smooth, then for any $\bm{x},\bm{y}\in\mathbb{R}^K$,
$$
f(\bm{x}) \leq f(\bm{y}) + \langle \nabla f(\bm{y}), \bm{x}-\bm{y} \rangle + \frac{L}{2}\|\bm{x}-\bm{y}\|_2^2 
$$

Using this smoothness inequality for $f$ at $\bm{y}=\bm{x}^*$ and $\bm{x}=\bm{x}_t$, plus the above consideration on the linear term, yields
$$
f(\bm{x}_t)-f(\bm{x}^*) \leq \frac{L}{2}\|\bm{x}_t-\bm{x}^*\|_2^2.
$$
Since $f(\bm{x}_t)-f(\bm{x}^*)=\widetilde{F}(\bm{x}^*)-\widetilde{F}(\bm{x}_t)=\mathbb{E}[R_t\mid\mathcal{F}_{t-1}]$, we obtain
$$
\mathbb{E}[R_t\mid\mathcal{F}_{t-1}] \leq \frac{L}{2}\|\bm{x}_t-\bm{x}^*\|_2^2.
$$

\paragraph{Stability of the argmax}
Observe that:
$$
\bm{x}^* \in \arg\max_{\bm{x}\in\mathcal{X}} \widetilde{F}(\bm{x},\bm{p},\bm{\lambda},\bm{\lambda}), \qquad \bm{x}_t \in \arg\max_{\bm{x}\in\mathcal{X}} \widetilde{F}(\bm{x},\bm{\bar p}_{t-1},\bm{\bar\lambda}_{t-1},\bm{\bar\lambda}'_{t-1}),
$$
where $(\bm{\bar p}_{t-1},\bm{\bar\lambda}_{t-1},\bm{\bar\lambda}'_{t-1})$ are the boosted parameters used by the algorithm at round $t$.

By Property \ref{ass:uniform_sc_smooth}, $\widetilde{F}(\cdot;\bm{\theta})$ is $\mu$-strongly concave with $\mu=p_{\min}\mu_G$ for all admissible $\bm{\theta}=(\bm{p},\bm{\lambda},\bm{\lambda}')$.
Applying Lemma \ref{lemma:stability_argmax} gives
$$
\|\bm{x}_t-\bm{x}^*\|_2 \leq \frac{1}{\mu}\sup_{\bm{x}\in\mathcal{X}} \|\nabla_{\bm{x}} \widetilde{F}(\bm{x},\bm{\bar\theta}_{t-1})-\nabla_{\bm{x}} \widetilde{F}(\bm{x},\bm{\theta})\|_2
$$

We now bound the gradient perturbation.
For each coordinate,
$$
\frac{\partial \widetilde{F}}{\partial x_i}(\bm{x},\bm{\theta}) = p_i\,\widetilde{g}(x_i,\lambda_i,\lambda_i')
$$
Hence, for any $\bm{x}\in\mathcal{X}$,
\begin{align*}
\Delta_i(\bm{x}) &:= \left|\frac{\partial \widetilde{F}}{\partial x_i}(\bm{x},\bm{\theta})
-\frac{\partial \widetilde{F}}{\partial x_i}(\bm{x},\bm{\bar\theta}_{t-1})\right| \\
&= \left|p_i\widetilde{g}(x_i,\lambda_i,\lambda_i') -\bar p_{t-1,i}\widetilde{g}(x_i,\bar\lambda_{t-1,i},\bar\lambda'_{t-1,i})\right| \\
&\leq |p_i-\bar p_{t-1,i}|\cdot|\widetilde{g}(x_i,\lambda_i,\lambda_i')| +\bar p_{t-1,i}\cdot \left|\widetilde{g}(x_i,\lambda_i,\lambda_i')-\widetilde{g}(x_i,\bar\lambda_{t-1,i},\bar\lambda'_{t-1,i})\right|
\end{align*}
Using second part of Assumption \ref{ass:uniform_sc_smooth} (Lipschitz Continuity of $\widetilde{g}$) and $\bar p_{t-1,i} \leq 1$ we obtain the uniform bound
$$
\sup_{\bm{x}\in\mathcal{X}}\Delta_i(\bm{x}) \leq G_{\max}|\Delta p_{t-1,i}| +L_{g,\lambda}|\Delta\lambda_{t-1,i}| +L_{g,\lambda'}|\Delta\lambda'_{t-1,i}|
$$
where
$$
\Delta p_{t-1,i}:=p_i-\bar p_{t-1,i}, \qquad \Delta\lambda_{t-1,i}:=\lambda_i-\bar\lambda_{t-1,i}, \qquad \Delta\lambda'_{t-1,i}:=\lambda_i-\bar\lambda'_{t-1,i}
$$
Therefore,
$$
\sup_{\bm{x}\in\mathcal{X}} \|\nabla_{\bm{x}}\widetilde{F}(\bm{x},\bm{\bar\theta}_{t-1})-\nabla_{\bm{x}}\widetilde{F}(\bm{x},\bm{\theta})\|_2 \leq \|\bm{\delta}_{t-1}\|_2
$$
with coordinates
$$
\delta_{t-1,i}:= G_{\max}|\Delta p_{t-1,i}| +L_{g,\lambda}|\Delta\lambda_{t-1,i}| +L_{g,\lambda'}|\Delta\lambda'_{t-1,i}|
$$

Combining with stability:
$$
\|\bm{x}_t-\bm{x}^*\|_2 \leq \frac{1}{\mu}\|\bm{\delta}_{t-1}\|_2.
$$

\paragraph{Regret upper bound under good events}

From the quadratic bound and stability,
\begin{align*}
\mathbb{E}[R_t\mid\mathcal{F}_{t-1}] &\leq \frac{L}{2}\|\bm{x}_t-\bm{x}^*\|_2^2 \leq \frac{L}{2\mu^2}\|\bm{\delta}_{t-1}\|_2^2 \\
&=\frac{L}{2\mu^2}\sum_{i \in S} \Big(G_{\max}|\Delta p_{t-1,i}|+L_{g,\lambda}|\Delta\lambda_{t-1,i}| +L_{g,\lambda'}|\Delta\lambda'_{t-1,i}|\Big)^2
\end{align*}

On the good event $\mathcal{N}_{t-1}$, assume the boosted parameters stay within twice the confidence radii:
$$
|\Delta\lambda_{t-1,i}|\le \varepsilon_{t-1,i}, \qquad |\Delta\lambda'_{t-1,i}|\le \varepsilon_{t-1,i}, \qquad |\Delta p_{t-1,i}|\le \eta_{t-1,i}
$$
where
$$
\varepsilon_{t-1,i} := \frac{2B}{L_\mu}\sqrt{\frac{\log(1/\alpha)}{2n_{t-1,i}}}, \qquad \eta_{t-1,i} := 2\frac{L_\lambda}{L_\mu}\frac{B (1+p_i)}{C_i^*}\sqrt{\frac{\log(1/\alpha)}{n_{t-1,i}}}
$$
Then under $\mathcal{N}_{t-1}$,
$$
\delta_{t-1,i}\le G_{\max}\eta_{t-1,i}+(L_{g,\lambda}+L_{g,\lambda'})\varepsilon_{t-1,i}.
$$
Hence
$$
\mathbb{E}[R_t\mid\mathcal{N}_{t-1}] \leq \frac{L}{2\mu^2} \sum_{i \in S} \Big(G_{\max}\eta_{t-1,i}+(L_{g,\lambda}+L_{g,\lambda'})\varepsilon_{t-1,i}\Big)^2 = \log(1/\alpha)\sum_{i \in S} \frac{\Gamma_i}{n_{t-1,i}}
$$
where
$$
\Gamma_i=\frac{LB^2}{\mu^2L_\mu^2} \Bigg(\frac{G_{\max}^2L_\lambda^2(1+p_i)^2}{C_i^{*2}}+\big(L_{g,\lambda}+L_{g,\lambda'}\big)^2+\frac{2G_{\max}\big(L_{g,\lambda}+L_{g,\lambda'}\big)L_\lambda(1+p_i)}{C_i^*}\Bigg)
$$

\paragraph{Bounding $\mathbb{E}\big[1/n_{t,i}\big]$}

The remainder of the proof is identical to the proof of Theorem \ref{thm: regret sqrt(T)}.
Remind that $q_i^*=p_i G(x_i^*,\lambda_i)$ is the success probability of arm $i$ under the optimal allocation.
By the algorithm's cycling structure,
$$
\mathbb{E}[n_{t,i}]\ge \frac{tq_i^*}{K}
$$
Define $S\sim \mathrm{Bin}(t,q_i^*/K)$.
Since $n_{t,i}$ stochastically dominates $S$ and $1/x$ is decreasing,
$$
\mathbb{E}\Big[\frac{1}{n_{t,i}}\Big]\le \mathbb{E}\Big[\frac{1}{S}\Big].
$$
The same Chernoff decomposition yields
$$
\mathbb{E}\Big[\frac{1}{n_{t,i}}\Big]\le \frac{5K}{tq_i^*}.
$$

\paragraph{Conclusion}
Choosing $\alpha=t^{-3}$ and summing over $t$ in the main phase gives
$$
\sum_{t=K\lfloor \log T \rfloor+1}^T \mathbb{E}[R_t\mid\mathcal{N}_{t-1}] \leq \Big(\sum_{i \in S} \frac{5K\Gamma_i}{q_i^*}\Big)\sum_{t=2}^T \frac{\log t}{t-1} \leq \Big(\sum_{i \in S} \frac{5K\Gamma_i}{q_i^*}\Big) \big(\log T + (\log T)^2\big)
$$
This is true since $\sum_{t=1}^T 1/t \leq 1+\log T$.
Adding the contribution of initialization phase and bad events as in proof of Theorem \ref{thm: regret sqrt(T)} (these parts are identical) we finally get:

$$
\mathbb{E}[\mathcal{R}_T] \leq C_{\text{log}} (\log T)^2 + \mathcal{O}(K^2\log T)
$$

where $C_{\text{log}}=\sum_{i \in S} \frac{5K\Gamma_i}{q_i^*}$.

\hfill$\square$

\section{Unknown Budget}\label{app_sec: unknown budget}

Assume for every round $t \in [T]$ the algorithm deals with unknown budget $B_t$ such that $0 < B_t \leq B_{\max}$.

Define boosted parameters:
\begin{align*}
\hat{\lambda}_{t,i}^\pm &= \min\left\{\frac{M}{B_{\max}},\max\left\{\frac{m}{B_{\max}}, \hat{\lambda}_{t,i} \pm \frac{B_{\max}}{L_\mu} \sqrt{\frac{3\log t}{2n_{t,i}}} \right\} \right\}, \\
\hat{p}_{t,i}^\pm &= \min\left\{1, \max\left\{0, \hat{p}_{t,i} \pm \frac{L_\lambda}{L_\mu}\frac{B_{\max}(1+\hat{p}_{t,i})}{\hat{C}_{t,i}} \sqrt{\frac{3\log t}{2n_{t,i}}} \right\} \right\}
\end{align*}

\begin{algorithm}[H]
\caption{\texttt{MG-UCB} (Pseudocode)}
\label{algo: MG_UCB pseudocode}
\begin{algorithmic}[1]
\STATE \textbf{\underline{Input}}: $T$, $K$.
\STATE $t \gets 1$ (round index), $t' \gets 1$ (estimation index), $i \gets 1$
\STATE \textbf{\underline{Initialization Phase}}: ($t \in \{1,\ldots,K\lfloor \log T \rfloor\}$)
\STATE $\forall i \in [K]$, allocate to arm $i$ until the budget is exhausted for $\lfloor \log T \rfloor$ rounds, observe rewards, and update parameters.
\STATE \textbf{\underline{Main Phase}}: ($t \in \{K\lfloor \log T \rfloor+1,\ldots,T\}$)
\STATE $i \gets 1$
\WHILE{$t \leq T$}
    \STATE Compute confidence bounds $\{\hat{p}_{t',k}^\pm,\hat{\lambda}_{t',k}^\pm\}_{k \in [K]}$
    \STATE $(\bar{\bm{\lambda}}_{t'},\bar{\bm{\lambda}'}_{t'},\bar{\bm{p}}_{t'}) \gets
    (\hat{\lambda}_{t',i}^-,\hat{\lambda}_{t',i}^+,\hat{p}_{t',i}^+)$ and $(\bar{\lambda}_{t',k},\bar{\lambda}'_{t',k},\bar{p}_{t',k}) \gets
    (\hat{\lambda}_{t',k}^+,\hat{\lambda}_{t',k}^-,\hat{p}_{t',k}^-)$ $\forall k\neq i$
    \STATE Initialize $x_{t,k}(0)=0$ for all $k\in[K]$
    \STATE \textbf{\underline{Within-round allocation (water-filling)}}:
    \STATE $s \gets 0$
    \WHILE{$s < B_t$}
        \STATE Allocate infinitesimal $ds$ to $I(s) \in \arg\max_{k\in[K]} \bar p_{t',k}\,\widetilde{g}(x_{t,k}(s),\bar\lambda_{t',k},\bar\lambda'_{t',k})$: $x_{t,I(s)}(s+ds) \gets x_{t,I(s)}(s) + ds$
        \STATE $s \gets s+ds$
    \ENDWHILE
    \STATE Obtain allocation $\bm{x}_t= (x_{t,1}(B_t),\ldots,x_{t,K}(B_t))$
    \STATE Play allocation $\bm{x}_t$, observe feedback and reward
    \STATE Update estimates $\hat{p}_{t'+1,i}$, $\hat{\lambda}_{t'+1,i}$ only for arm $i$
    \STATE $i \gets i+1$, \ $t \gets t+1$
    \IF{$i > K$}
        \STATE $i \gets 1$, \ $t' \gets t'+1$
    \ENDIF
\ENDWHILE
\end{algorithmic}
\end{algorithm}

\begin{algorithm}[H]
\caption{\texttt{MG-UCB-$\Delta$} (Pseudocode)}
\label{algo: MG_UCB_discretized pseudocode}
\begin{algorithmic}[1]
\STATE \textbf{\underline{Input}}: $T$, $K$, step size $\Delta>0$.
\STATE $t \gets 1$ (round index), $t' \gets 1$ (estimation index), $i \gets 1$
\STATE \textbf{\underline{Initialization Phase}}: ($t \in \{1,\ldots,K\lfloor \log T \rfloor\}$)
\STATE $\forall i \in [K]$, allocate to arm $i$ until the budget is exhausted for $\lfloor \log T \rfloor$ rounds, observe rewards, and update parameters.
\STATE \textbf{\underline{Main Phase}}: ($t \in \{K\lfloor \log T \rfloor+1,\ldots,T\}$)
\STATE $i \gets 1$
\WHILE{$t \leq T$}
    \STATE Compute confidence bounds $\{\hat{p}_{t',k}^\pm,\hat{\lambda}_{t',k}^\pm\}_{k \in [K]}$
    \STATE $(\bar{\bm{\lambda}}_{t'},\bar{\bm{\lambda}'}_{t'},\bar{\bm{p}}_{t'}) \gets
    (\hat{\lambda}_{t',i}^-,\hat{\lambda}_{t',i}^+,\hat{p}_{t',i}^+)$ and
    $(\bar{\lambda}_{t',k},\bar{\lambda}'_{t',k},\bar{p}_{t',k}) \gets
    (\hat{\lambda}_{t',k}^+,\hat{\lambda}_{t',k}^-,\hat{p}_{t',k}^-)$ $\forall k\neq i$
    \STATE Initialize $x_{t,k}(0)=0$ for all $k\in[K]$
    \STATE \textbf{\underline{Within-round allocation (discretized water-filling)}}:
    \STATE $s \gets 0$
    \WHILE{$s < B_t$}
        \STATE $\delta \gets \min\{\Delta,\, B_t - s\}$
        \STATE Allocate $\delta$ to $I(s) \in \arg\max_{k\in[K]} \bar p_{t',k}\,\widetilde{g}(x_{t,k}(s),\bar\lambda_{t',k},\bar\lambda'_{t',k})$: $x_{t,I(s)}(s+\delta) \gets x_{t,I(s)}(s) + \delta$
        \STATE $s \gets s+\delta$
    \ENDWHILE
    \STATE Obtain allocation $\bm{x}_t= (x_{t,1}(B_t),\ldots,x_{t,K}(B_t))$
    \STATE Play allocation $\bm{x}_t$, observe feedback and reward
    \STATE Update estimates $\hat{p}_{t'+1,i}$, $\hat{\lambda}_{t'+1,i}$ only for arm $i$
    \STATE $i \gets i+1$, \ $t \gets t+1$
    \IF{$i > K$}
        \STATE $i \gets 1$, \ $t' \gets t'+1$
    \ENDIF
\ENDWHILE
\end{algorithmic}
\end{algorithm}

\texttt{MG-UCB} is designed to address the unknown-budget setting under the assumption that infinitesimal allocations are allowed within each round. Our goal is to establish an equivalence between an algorithm that allocates resources incrementally at an infinitesimal intra-round scale until the budget is exhausted, and an algorithm that, given the budget in advance, solves the corresponding constrained optimization problem directly.

To establish this result, we proceed in two stages. We first analyze a discretized version of the problem, in which allocations are made in quanta of size $\Delta>0$, and prove an equivalence between the discrete water-filling allocation induced by \texttt{MG-UCB} and the optimizer of the corresponding discretized surrogate problem. We then pass to the continuous setting by letting $\Delta \to 0$, thereby recovering the desired continuous-time characterization.

This equivalence is formalized Lemma \ref{lemma: equivalence RA-UCB and MG-UCB}, which establishes that \texttt{RA-UCB} and \texttt{MG-UCB} produce identical allocations under the same parameter estimates and realized budget, despite operating under different informational assumptions.

Fix a budget $B > 0$ and a step size $\Delta > 0$, define $M := \lfloor B/\Delta \rfloor$ (discretized budget). For each arm $i \in [K]$, let $f_i: [0,B] \rightarrow \mathbb{R}$ be non decreasing and concave. Define $\Delta$-marginal gains:
$$
m_i(n) := f_i\big( (n+1)\Delta \big)-f_i\big( n\Delta \big), \;\; n=0,1,\ldots,M-1
$$
Then concavity of $f_i$ implies $m_i(0) \geq m_i(1) \geq \ldots \geq m_i(M-1)$ for every $i \in [K]$. Consider the discretized Oracle problem:
\begin{equation} \label{opt: discretized Oracle}
(P): \quad
\begin{aligned}
&\underset{(n_1,\ldots,n_K) \in \{0,\ldots,M\}^K}{\text{max}} \quad \sum_{i=1}^K f_i(n_i\Delta) \\
&\text{subject to} \quad \sum_{i=1}^K n_i \leq M
\end{aligned}
\end{equation}
with a fixed deterministic tie-breaking rule to select a unique optimizer when the solution is not unique. 

Consider also the Greedy $\Delta$-discretized procedure: start from counters $N_i^0=0$ for every $i$ and for each step  $\ell=0,1,\ldots,M-1$ select arm:
$$
I_\ell \in \underset{i \in [K]}{\text{arg max }} \;m_i(N_i^\ell)
$$
with the same deterministic tie-breaking rule. Then update $N_{I_\ell}^{\ell+1}=N_{I_\ell}^{\ell}+1$ and keep counters for all arm $i \neq I_\ell$ unchanged. Let $N_i^M$ be the final counters.

\begin{lemma}[Equivalence of Oracle and $\Delta$-discretized procedure]\label{lemma: discretized equivalence}
   The Greedy $\Delta$-discretized procedure solves $(P)$, yielding an optimal allocation $(N_1^M,N_2^M,\ldots,N_K^M)$.
\end{lemma}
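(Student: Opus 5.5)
The plan is to reduce $(P)$ to selecting $M$ marginal-gain ``items'' from $K$ sorted lists, and then show that greedy selection of the largest items is optimal. For any feasible $(n_1,\ldots,n_K)$ with $\sum_i n_i\le M$, telescoping gives
\begin{equation*}
\sum_{i=1}^K f_i(n_i\Delta)=\sum_{i=1}^K f_i(0)+\sum_{i=1}^K\sum_{k=0}^{n_i-1} m_i(k).
\end{equation*}
The objective is therefore a constant plus the total of a ``prefix-closed'' family of marginal gains: from list $i$ we take the first $n_i$ entries $m_i(0),\dots,m_i(n_i-1)$. Because each $f_i$ is non-decreasing, every $m_i(k)\ge 0$. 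Hence adding units never hurts, and we may restrict attention to allocations with $\sum_i n_i=M$. Any feasible point with slack can be padded up to $M$ units without lowering the objective, so the optimum over $\sum_i n_i\le M$ equals the optimum over $\sum_i n_i=M$.

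The central step uses the concavity-induced monotonicity $m_i(0)\ge m_i(1)\ge\cdots$. Consider the multiset $\mathcal{M}=\{m_i(k): i\in[K],\,0\le k\le M-1\}$. For any feasible allocation, the objective minus $\sum_i f_i(0)$ is a sum of $M$ distinct elements of $\mathcal{M}$, so it is at most the sum $S_M$ of the $M$ largest elements of $\mathcal{M}$. I will show by induction on $\ell$ that after $\ell$ greedy steps, the selected values $m_{I_0}(N^0_{I_0}),\dots,m_{I_{\ell-1}}(N^{\ell-1}_{I_{\ell-1}})$ form a set of $\ell$ largest elements of $\mathcal{M}$, with ties broken consistently. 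At step $\ell$, the unselected elements of list $i$ are exactly $m_i(N_i^\ell), m_i(N_i^\ell+1),\dots$, and by monotonicity the largest of these is the head $m_i(N_i^\ell)$. The maximum over all unselected elements is therefore $\max_i m_i(N_i^\ell)$, which is precisely the element greedy picks. Consequently the greedy allocation attains value $\sum_i f_i(0)+S_M$, which matches the upper bound, so $(N^M_1,\dots,N^M_K)$ is optimal for $(P)$.

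The main obstacle is ties, together with the meaning of ``solves'' when $(P)$ has multiple optimizers. The argument above shows that greedy achieves the optimal value, which is what matters for regret. The paper, however, fixes a deterministic tie-breaking rule and speaks of \emph{the} optimizer. To handle this, I would interpret the tie-breaking rule as inducing a total order on $\mathcal{M}$ (for example lexicographic by value, then arm index, then $k$), chosen compatibly with the within-list order so that $m_i(k)$ precedes $m_i(k+1)$ when they are equal. Both the oracle and the greedy procedure then select the top $M$ elements in this total order, and they coincide exactly. Finally, the constraint $N_i^M\le M$ holds trivially, since only $M$ units are distributed.
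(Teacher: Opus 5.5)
Your proof is correct. It shares the paper's first two ingredients. One is the telescoping identity, which turns $(P)$ into choosing prefix-closed sets of marginals. The other is the ``heads'' argument: since each list $m_i(0)\ge m_i(1)\ge\cdots$ is non-increasing, greedy always picks a globally largest unselected marginal, and so it ends with the top $M$ elements of $\mathcal{M}$.

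You diverge from the paper in how you certify optimality. The paper uses an exchange argument. It starts from any feasible $(n_1,\dots,n_K)$ with $\sum_i n_i=M$ whose marginal set $\mathcal{A}$ differs from the top-$M$ set $S_M$. It then moves one unit from an arm $j$ with $(j,q_j)\in\mathcal{A}\setminus S_M$ to an arm $k$ with $(k,q_k)\in S_M\setminus\mathcal{A}$. It shows the objective changes by $m_k(n_k)-m_j(n_j-1)\ge 0$, and iterates until $\mathcal{A}=S_M$. You replace all of this with the one-line bound that any $M$ distinct elements of $\mathcal{M}$ sum to at most the $M$ largest, a value that greedy attains.

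Your route is shorter and does not depend on ties. Greedy's value-optimality holds under every tie-breaking rule. You then handle exact coincidence with the oracle separately, by requiring the total order on $\mathcal{M}$ to respect the within-list order. The paper also needs that compatibility condition, though it leaves it implicit. Both the termination of its exchange iteration and the fact that $S_M$ corresponds to a feasible allocation at all rely on $S_M$ being prefix-closed in each list. Likewise, its closing claim that every optimizer selects $S_M$ holds only up to ties.

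What the exchange argument gives in return is a constructive path from any feasible allocation to the greedy one. The paper uses this form to assert that the greedy allocation coincides with the oracle allocation.
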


\begin{proof}
First we can notice that for any $n_i \in \{0,,\ldots, M\}$:
$$
f_i(n_i) = f_i(0)+\sum_{q=1}^{n_i-1}\Big(f_i\big((q+1)\Delta\big)-f_i\big(q\Delta\big)\Big) = f_i(0)+ \sum_{q=0}^{n_i-1}m_i(q)
$$
So maximizing $\sum_i f_i(n_i \Delta)$ is equal, up to a constant, to maximizing $\sum_i \sum_q^{n_i-1} m_i(q)$ subject to $\sum_{i=1}^K n_i \leq M$.

In particular, this implies that the procedure selects at most $M$ marginals.

Let us define the multiset of marginals, labeled by $(i,q)$:
$$
\mathcal{M} = \{m_i(q): i \in [K], q = 0,1,\ldots,M-1 \}
$$
$\mathcal{M}$ is basically a $K \times M$ matrix. Let us sort thes values in non increasing order (with deterministic breaking-tie rule on $(i,q)$). Let $S_M$ be the set of the first $M$ labeled elements in this total order, namely the top $M$ marginals. Now let us prove that in a run, the Greedy $\Delta$-discretized procedure selects exactly $S_M$. At step $\ell$, Greedy can choose only the current available marginals $\{m_i(N_i^\ell)\}_{i=1}^K$ i.e. the first not yet chosen element of each arm's marginal sequence. Because each sequence is non increasing (by concavity of $f_i$), the maximum among all not yet chosen elements of $\mathcal{M}$ must coincide with maximum among currently available ones.

Therefore Greedy always picks the globally largest remaining marginal (with fixed predetermined tie-breaking). By induction on $\ell$, after $M$ steps Greedy has selected exactly the top $M$ marginals, i.e. $S_M$.

In order to complete the argument, we want now to show that any feasible solution of $(P)$ selects the top $M$ marginals and therefore concluding that, under the same tie-breaking rule, running Greedy $\Delta$-discretized procedure and solving $(P)$ give the same result.

Let us take any feasible $(n_1,\ldots,n_K)$ with $\sum_i n_i \leq M$. Since each $f_i$ is not decreasing, w.l.o.g we can assume $\sum_i n_i = M$ (if it's not the case it means that exists a $j$ such that $n_j < M$, then, since $f_j$ is not decreasing, we can increase $n_j$ of 1 without making the objective decrease). The set of marginals selected by $(n_1,\ldots,n_K)$ is exactly the set:
$$
\mathcal{A} := \bigcup_{i=1}^K \{ (i,0),(i,1), \ldots, (i,n_i-1) \}
$$

ans its value up to a constant is $\sum_{(i,q) \in \mathcal{A}}m_i(q)$. 

If $\mathcal{A} \neq S_M$, then there exists:
\begin{itemize}
    \item $(j, q_j) \in \mathcal{A} \backslash S_M$, a selected marginal not in the top $M$.
    \item $(k, q_k) \in S_M \backslash \mathcal{A}$, a top $M$ marginal not selected by $\mathcal{A}$.
\end{itemize}
By definition of $S_M$ we have $m_k(q_k) \geq m_j(q_j)$.

Now let us perform the following feasible exchange of resources:
$$
n_j' = n_j-1, \quad n_k'=n_k+1, \quad n_i'=n_i \;\; \forall i \notin \{j,k\}
$$
This exchange is feasible since it preserves $\sum_i n_i' = M$.
The objective difference is:

\begin{align}
    \sum_i f_i(n_i'\Delta)-\sum_if_i(n_i\Delta)&=f_i((n_j-1)\Delta )-f_i(n_j\Delta)+f_k((n_k+1)\Delta)-f_k(n_k\Delta ) \\&= m_k(n_k)-m_j(n_j-1)
\end{align}

Since $(k,q_k) \notin \mathcal{A}$ then $q_k \geq n_k$ (if $q_k$ marginal of arm $k$ has not been picked it means that $n_k$ stopped before). Therefore since $m_k(\cdot)$ is not increasing we have:
$$
m_k(n_k) \geq m_k(q_k)
$$
Similarly $(j,q_j) \in \mathcal{A}$ implies $q_j \leq n_j-1$, hence:
$$
m_j(n_j-1) \leq m_j(q_j)
$$
Therefore:
$$
m_k(n_k)-m_j(n_j-1) \geq m_k(q_k)-m_j(q_j)\geq 0
$$
Basically exchanging resources in this way does not decrease the objective while still being feasible. Iterating such exchange transforms any feasible solution into one that selects exaclty $S_M$ without decreasing the objective. Hence any optimizer must select $S_M$.

Finally Greedy $\Delta$-discretized procedure selects $S_M$ and every optimizer selects $S_M$, we can then conclude that Greedy is optimal for $(P)$ i.e. $(N_1^M, \ldots, N_K^M)$ is an optimizer and Greedy allocation coincides with Oracle allocation.
\end{proof}

Define the continuous feasible set:
$$
\mathcal{X}(B) := \Big\{ \bm{x} \in \mathbb{R}_+^K: \sum_{i=1}^K x_i \leq B \Big\}
$$
and the continuous oracle problem:
$$
(C): \quad \max_{\bm{x}\in \mathcal{X}(B)} \sum_{i=1}^K f_i(x_i)
$$
Let $\bm{x^*} \in \underset{\bm{x} \in \mathcal{X}(B)}{\text{arg max}}\sum_i f_i(x_i)$ be any maximizer of $(C)$. Assume also that each $f_i$ is continuously differentiable on $[0,B]$ and that there exists a constant $G_{\max} < \infty$ such that:
$$
0 \leq f_i'(x) \leq G_{\max}, \quad \forall x \in [0,B], \; \forall i \in [K]
$$
This property is the same we ask to the surrogate objective of $\widetilde{\mathcal{P}}$ in Assumption \ref{ass:uniform_sc_smooth}.

\begin{proposition}\label{proposition: passing from discrete to continuous}
    For each $\Delta >0$, let $\bm{N}^M(\Delta):=(N_1^M,N_2^M,\ldots,N_K^M)$ be the counters produced by Greedy above and define corresponding allocations:
    $$
    \bm{x}^\Delta := \Delta \bm{N}^M(\Delta)=(\Delta N_1^M,\ldots,\Delta N_K^M) \in \mathcal{X}(B)
    $$
    Let 
    $$\text{OPT}(B):= \max_{\bm{x} \in \mathcal{X}(B)} \sum_{i=1}^K f_i(x_i)$$ $$\text{OPT}_\Delta(B):= \max_{\underset{\sum_in_i \leq M}{(n_1,\ldots,n_K) \in \{0,\ldots,M\}^K}} \sum_{i=1}^K f_i(n_i \Delta)$$
    Then the following statements hold:
    \begin{enumerate}[label=(\roman*)]
        \item 
        For every $\Delta >0$:
        $$
        0 \leq \text{OPT}(B)-\text{OPT}_\Delta(B) \leq KG_{\max} \Delta
        $$
        \item Let $\Delta_m$ be a sequence such that $\Delta_m \to 0$ when $m \to \infty$ and consider corresponding Greedy allocations $\bm{x}^{\Delta_m}$. Every limit point $\bm{\bar{x}}$ of $\{\bm{x}^{\Delta_m}\}_m$ satisfies $\bm{\bar{x}} \in \underset{\bm{x}\in \mathcal{X}(B)}{\text{arg max}} \sum_{i=1}^K f_i(x_i)$, where $\bm{x}^{\Delta_m}= \Delta_m (N_1^{M_m},\ldots,N_K^{M_m})$ and $M_m = \lfloor B/\Delta_m\rfloor$. Basically any infinitesimal Greedy allocation obtained as $\Delta \to 0$ limit is a solution to $(C)$.
        \item If in addition, the continuous objective $\sum_{i=1}^K f_i(x_i)$ is strongly concave on $\mathcal{X}(B)$, then optimizer $\bm{x^*}$ of $(C)$ is unique and
        $$
        \bm{x}^\Delta \to \bm{x^*} \;\; \text{as } \Delta \to 0
        $$
    \end{enumerate}
\end{proposition}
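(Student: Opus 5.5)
The proof rests on Lemma~\ref{lemma: discretized equivalence}, which identifies $\bm{x}^\Delta$ with a maximizer of the discretized problem $(P)$, so that $F(\bm{x}^\Delta)=\text{OPT}_\Delta(B)$ with $F(\bm{x}):=\sum_i f_i(x_i)$. The three items are then handled in turn: (i) by a rounding argument, (ii) by compactness and continuity, (iii) by uniqueness of the maximizer.

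\emph{Item (i).} The lower bound $\text{OPT}(B)\ge\text{OPT}_\Delta(B)$ is immediate: any feasible $(n_1,\ldots,n_K)$ with $\sum_i n_i\le M$ gives $\Delta\bm{n}\in\mathcal{X}(B)$, since $\sum_i n_i\Delta\le M\Delta\le B$.

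For the upper bound, take a maximizer $\bm{x}^*$ of $(C)$ and round down coordinatewise, $n_i:=\lfloor x_i^*/\Delta\rfloor$. Then $\sum_i n_i\le \lfloor \sum_i x_i^*/\Delta\rfloor\le M$, and each $n_i\le M$, so $\bm{n}$ is feasible for $(P)$. Since $0\le x_i^*-n_i\Delta<\Delta$ and $0\le f_i'\le G_{\max}$, the mean value theorem gives $f_i(x_i^*)-f_i(n_i\Delta)\le G_{\max}\Delta$. Summing over $i$ yields $\text{OPT}(B)-\text{OPT}_\Delta(B)\le KG_{\max}\Delta$.

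\emph{Item (ii).} The set $\mathcal{X}(B)$ is compact and every $\bm{x}^{\Delta_m}$ lies in it. Let $\bar{\bm{x}}$ be a limit point along a subsequence. Since $\mathcal{X}(B)$ is closed, $\bar{\bm{x}}\in\mathcal{X}(B)$. By Lemma~\ref{lemma: discretized equivalence} and item (i),
\[
F(\bm{x}^{\Delta_m})=\text{OPT}_{\Delta_m}(B)\ge \text{OPT}(B)-KG_{\max}\Delta_m .
\]
Each $f_i$ is $C^1$, so $F$ is continuous, and passing to the limit gives $F(\bar{\bm{x}})\ge\text{OPT}(B)$. Hence $\bar{\bm{x}}$ is a maximizer of $(C)$.

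\emph{Item (iii).} Strong concavity of $F$ on the convex set $\mathcal{X}(B)$ makes the maximizer $\bm{x}^*$ unique. By item (ii), every subsequential limit of $\bm{x}^\Delta$ as $\Delta\to0$ equals $\bm{x}^*$. To conclude convergence of the whole family, argue by contradiction: if $\bm{x}^\Delta\not\to\bm{x}^*$, there is a sequence $\Delta_m\to0$ with $\|\bm{x}^{\Delta_m}-\bm{x}^*\|\ge\epsilon$. By compactness this sequence has a further convergent subsequence, whose limit must be $\bm{x}^*$ by item (ii), which is a contradiction.

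Strong concavity also yields a quantitative rate. Since $\bm{x}^*$ is optimal over the convex set, $\langle\nabla F(\bm{x}^*),\bm{y}-\bm{x}^*\rangle\le 0$ for all $\bm{y}\in\mathcal{X}(B)$, and so
\[
F(\bm{x}^*)-F(\bm{x}^\Delta)\ge \tfrac{\mu}{2}\|\bm{x}^\Delta-\bm{x}^*\|^2 .
\]
Combined with item (i), this gives $\|\bm{x}^\Delta-\bm{x}^*\|\le\sqrt{2KG_{\max}\Delta/\mu}$.

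The argument is essentially routine. The one point needing care is item (i): the rounded-down vector must be feasible for $(P)$, i.e.\ satisfy $\sum_i n_i\le M=\lfloor B/\Delta\rfloor$ (which holds because $\sum_i n_i$ is an integer bounded by $B/\Delta$). One must also note that Greedy's output attains $\text{OPT}_\Delta$, which is exactly what Lemma~\ref{lemma: discretized equivalence} provides.
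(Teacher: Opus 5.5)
Your proof is correct and follows the same route as the paper. Item (i) rounds a continuous maximizer down to the grid, item (ii) uses compactness, continuity and Lemma~\ref{lemma: discretized equivalence}, and item (iii) uses uniqueness of the maximizer. Your explicit check that $\sum_i n_i\le M$, the subsequence-by-contradiction argument in (iii), and the quantitative rate $\|\bm{x}^\Delta-\bm{x}^*\|\le\sqrt{2KG_{\max}\Delta/\mu}$ are refinements that the paper leaves implicit or omits.
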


\begin{proof}
\textbf{(i):} Let $\bm{x^*}$ be any maximizer of $(C)$. Define the rounded-down solution
$$
\tilde{x}_i := \Big\lfloor \frac{x_i^*}{\Delta} \Big\rfloor \Delta, \quad \forall i \in [K]
$$
$\bm{\tilde{x}}=(\tilde{x}_1,\ldots,\tilde{x}_K)$ is feasible for the discretized problem since $\tilde{x}_i \in \{0,\Delta,\ldots,M\Delta\}$ and
$$
\sum_{i=1}^K \tilde{x}_i \leq \sum_{i=1}^Kx_i^* \leq B
$$
Moreover for every $i$: $0 \leq x_i^*-\tilde{x}_i<\Delta$. Hence $\sum_{i=1}^K(x_i^*-\tilde{x}_i) < K\Delta$. Since $0 \leq f_i'(x)\leq G_{\max}$ each $f_i$ is $G_{\max}$-Lipschitz on $[0,B]$:
$$
f_i(x_i^*)-f_i(\tilde{x}_i) \leq G_{\max} (x_i^*-\tilde{x}_i)
$$
Summing over $i$:
$$
\sum_{i=1}^K f_i(x_i^*)-\sum_{i=1}^Kf_i(\tilde{x}_i) \leq G_{\max} \sum_{i=1}^K ( x_i^* -\tilde{x}_i) < KG_{\max} \Delta
$$
By definition of $\text{OPT}_\Delta(B) \geq \sum_{i=1}^K f_i(\tilde{x}_i)$. Therefore:
$$
0 \leq \text{OPT}(B)-\text{OPT}_\Delta(B) = \sum_{i=1}^K f_i(x_i^*)- \text{OPT}_\Delta(B) \leq \sum_{i=1}^K f_i(x_i^*)-\sum_{i=1}^K f_i(\tilde{x}_i) < KG_{\max} \Delta
$$
and this proves (i).

\textbf{(ii):} Set $\mathcal{X}(B)$ is compact, hence any sequence $\{\bm{x}^{\Delta_m}\}_m \subseteq \mathcal{X}(B)$ admits a convergent subsequence with $\bm{x}^{\Delta_m} \to \bm{\bar{x}} \in \mathcal{X}(B)$. By continuity of $\sum_{i=1}^Kf_i(x_i)$:
$$
\sum_{i=1}^Kf_i(\bar{x}_i) = \lim_{m \to \infty} \sum_{i=1}^K f(x_i^{\Delta_m})
$$
Now by Lemma \ref{lemma: discretized equivalence}, for each $\Delta_m$, Greedy allocation $\bm{x}^{\Delta_m}$ achieves the discretized optimum, i.e.
$$
\sum_{i=1}^K f_i(x_i^{\Delta_m}) = \text{OPT}_{\Delta_m}(B)
$$
Combining this with (i) and taking $\Delta_m \to 0$ ($m \to \infty$) we get:
$$
\lim_{m \to \infty}\sum_{i=1}^K f_i(x_i^{\Delta_m}) = \lim_{m \to \infty} \text{OPT}_{\Delta_m}(B) = \text{OPT}(B)
$$
Therefore $\sum_{i=1}^K f_i(\bar{x}_i) = \text{OPT}(B)$, namely $\bm{\bar{x}}$ is a maximizer of $(C)$.

\textbf{(iii):} If $\sum_{i=1}^K f_i(x_i)$ is strongly concave on $[0,B]$ then the maximizer $\bm{x^*}$ of $(C)$ is unique. Since every limit point of $\{\bm{x}^\Delta\}_{\Delta \to 0}$ is a maximizer by (ii), all limit points must coincide with $\bm{x^*}$. Hence the whole family converges:
$$
\bm{x}^\Delta \to \bm{x^*} \text{ as } \Delta \to 0
$$
\end{proof}

\subsection{Proof of Lemma \ref{lemma: equivalence RA-UCB and MG-UCB}}
Fix a round $t$ and a realized budget value $B>0$. Condition on the filtration $\mathcal{F}_{t-1}$ generated by past allocations and observations, so that the estimators (and hence the boosted parameters)
$$
\widehat{\bm{\theta}}_{t-1}\qquad\text{and}\qquad\bar{\bm{\theta}}_{t-1}=(\bar{\bm{p}}_{t-1},\bar{\bm{\lambda}}_{t-1},\bar{\bm{\lambda}}'_{t-1})
$$
are deterministic. 
We recall the surrogate objective
\[
\widetilde{F}(\bm{x}; \bar{\bm{\theta}}_{t-1})
\;:=\;
\sum_{i=1}^K \bar p_{t-1,i}\,
\widetilde{G}\!\big(x_i,\bar\lambda_{t-1,i},\bar\lambda'_{t-1,i}\big),
\]
for any allocation \(\bm{x}\) in the feasible set
\[
\mathcal{X}(B)
\;:=\;
\Big\{
\bm{x}\in\mathbb{R}^K_+
\;:\;
\sum_{i=1}^K x_i \le B
\Big\}.
\]
By definition, \texttt{RA-UCB} (with known budget $B$) outputs an optimizer of:
$$
\max_{\bm{x}\in\mathcal{X}(B)}\widetilde{F}(\bm{x},\bar{\bm{\theta}}_{t-1})
$$
with the fixed deterministic tie-breaking rule.

Define for each arm $i\in[K]$ the univariate function
\begin{equation}\label{eq:def_fi_surrogate}
f_i(x):=\bar p_{t-1,i}\,\widetilde{G}\!\big(x,\bar\lambda_{t-1,i},\bar\lambda'_{t-1,i}\big),\qquad x\in[0,B]
\end{equation}
Then
$$
\widetilde{F}(\bm{x};\bar{\bm{\theta}}_{t-1})=\sum_{i=1}^K f_i(x_i).
$$
Under the standing assumptions of Section~\ref{sec: unknown budget}, for every admissible $\lambda,\lambda'$, $\widetilde{G}(x,\lambda,\lambda')$ is differentiable and concave in $x$ on $[0,B_{\max}]$ and non-decreasing (since its derivative is non-negative).
Hence, each $f_i$ is differentiable, concave and non-decreasing on $[0,B]$.

Moreover, by Assumption \ref{ass:uniform_sc_smooth} (bounded derivative of $\widetilde{g}=\partial_x\widetilde{G}$) and $\bar p_{t-1,i}\in[0,1]$, there exists $G_{\max}<\infty$ such that for all $x\in[0,B]$,
\begin{equation}\label{eq:fi_derivative_bound}
0\le f_i'(x)=\bar p_{t-1,i}\,\widetilde{g}\!\big(x,\bar\lambda_{t-1,i},\bar\lambda'_{t-1,i}\big)\le G_{\max}.
\end{equation}
Therefore the assumptions listed at the end of the previous subsection (concavity/monotonicity and bounded derivative) hold for the choice \eqref{eq:def_fi_surrogate}.

Fix a step size $\Delta>0$ and define $M:=\lfloor B/\Delta\rfloor$.
Consider the discretized oracle problem $(P)$ in \eqref{opt: discretized Oracle} with the functions $f_i$ defined in \eqref{eq:def_fi_surrogate}.
By Lemma \ref{lemma: discretized equivalence}, the Greedy $\Delta$-procedure that sequentially allocates one unit of $\Delta$ to the arm with largest discrete marginal
$$
m_i(n)=f_i((n+1)\Delta)-f_i(n\Delta)
$$
produces an optimizer of $(P)$ (under the same deterministic tie-breaking rule).

Observe now that the within-round discretized water-filling rule of \texttt{MG-UCB-$\Delta$} is exactly this Greedy $\Delta$-procedure, because at an intra-round state where arm $i$ has received $n_i$ increments (so $x_i=n_i\Delta$), allocating the next increment $\Delta$
to arm $i$ increases the objective by precisely $m_i(n_i)$, and concavity ensures the marginal sequence is non-increasing.
Hence, for every $\Delta>0$, the allocation $\bm{x}^\Delta$ produced by \texttt{MG-UCB-$\Delta$} maximizes the discretized objective over $\mathcal{X}(B)$ at grid resolution $\Delta$.

Let $\bm{x}^{\Delta}$ denote the allocation produced by \texttt{MG-UCB-$\Delta$} and write $\text{OPT}(B)$ and $\text{OPT}_\Delta(B)$ as in Proposition \ref{proposition: passing from discrete to continuous}.
Since $\bm{x}^{\Delta}$ is optimal for the discretized problem, we have
$$
\sum_{i=1}^K f_i(x_i^\Delta)=\text{OPT}_\Delta(B)
$$
By Proposition \ref{proposition: passing from discrete to continuous}\,(ii), for any sequence $\Delta_m\to 0$, every limit point $\bar{\bm{x}}$ of $\{\bm{x}^{\Delta_m}\}_m$ is a maximizer of the continuous problem
$$
\max_{\bm{x}\in\mathcal{X}(B)} \sum_{i=1}^K f_i(x_i)\;=\;\max_{\bm{x}\in\mathcal{X}(B)} \widetilde{F}(\bm{x};\bar{\bm{\theta}}_{t-1})
$$
On the other hand, the continuous-time water-filling procedure defining \texttt{MG-UCB} is precisely the $\Delta\to 0$ limit of \texttt{MG-UCB-$\Delta$},
and thus also outputs a maximizer of the same continuous problem (with the same tie-breaking convention).

Therefore, \texttt{MG-UCB} produces an optimizer of $\max_{\bm{x}\in\mathcal{X}(B)}\widetilde{F}(\bm{x};\bar{\bm{\theta}}_{t-1})$.
By definition, \texttt{RA-UCB} (when given budget $B$) also outputs an optimizer of the same problem under the same deterministic tie-breaking rule.
Consequently, the two allocations coincide:
$$
\bm{x}_t^{\texttt{MG-UCB}}(B,\widehat{\bm{\theta}}_{t-1})=\bm{x}_t^{\texttt{RA-UCB}}(B,\widehat{\bm{\theta}}_{t-1})
$$
which proves Lemma \ref{lemma: equivalence RA-UCB and MG-UCB}.

\hfill$\square$

\begin{lemma}[Discretization loss]
\label{lemma: discretization_regret}
For each round $t$, define instantaneous regrets:
\begin{align*}
R_t &= \max_{\bm{x}\in\mathcal{X}(B_t)}\widetilde{F}(\bm{x},\bm{\theta})-\widetilde{F}\big(\bm{x}_t^{\texttt{MG-UCB}},\bm{\theta}\big)\\
R_t^{\Delta} &= \max_{\bm{x}\in\mathcal{X}(B_t)}\widetilde{F}(\bm{x},\bm{\theta}) - \widetilde{F}\big(\bm{x}_t^{\texttt{MG-UCB-}\Delta},\bm{\theta}\big)
\end{align*}
and the cumulative regrets:
$$
\mathcal{R}_T:=\sum_{t=1}^T R_t, \qquad \mathcal{R}_T^{\Delta}:=\sum_{t=1}^T R_t^{\Delta}.
$$
For each round $t$ also define the surrogate functions:
$$
f_{t,i}(x):=\bar p_{t-1,i}\,\widetilde{G}\big(x,\bar\lambda_{t-1,i},\bar\lambda'_{t-1,i}\big),
\qquad x\in[0,B_t]
$$
so that $\widetilde{F}(\bm{x},\bar{\bm{\theta}}_{t-1})=\sum_{i=1}^K f_{t,i}(x_i)$.
Assume that there exists $G_{\max}<\infty$ such that for all $t\in[T]$, $i\in[K]$ and $x\in[0,B_t]$,
\begin{equation}\label{eq:ft_derivative_bound_unified}
0\le f'_{t,i}(x)\le G_{\max}.
\end{equation}
Finally define the round-wise parameter-mismatch term:
$$
\Gamma_t := \sup_{\bm{x}\in\mathcal{X}(B_t)} \Big| \widetilde{F}(\bm{x},\bm{\theta}) - \widetilde{F} (\bm{x},\bar{\bm{\theta}}_{t-1}) \Big|
$$
Then, for every $\Delta>0$:
\begin{equation}\label{eq:disc_regret_unified}
\mathbb{E}[\mathcal{R}_T^{\Delta}] \leq \mathbb{E}[\mathcal{R}_T] + (K+1)G_{\max}\Delta T + 2\sum_{t=1}^T \mathbb{E}[\Gamma_t]
\end{equation}
\end{lemma}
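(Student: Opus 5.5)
Since both cumulative regrets are sums of per-round terms, it suffices to prove, for each round $t$, conditionally on $\mathcal{F}_{t-1}$ (so that $\bar{\bm{\theta}}_{t-1}$ and the $f_{t,i}$ are fixed), the pointwise bound
\[
R_t^{\Delta}\le R_t + (K+1)G_{\max}\Delta + 2\Gamma_t ,
\]
and then sum over $t$ and take expectations. Because the maximum term is common to $R_t$ and $R_t^\Delta$, this reduces to bounding $\widetilde{F}(\bm{x}_t^{\texttt{MG-UCB}},\bm{\theta})-\widetilde{F}(\bm{x}_t^{\texttt{MG-UCB-}\Delta},\bm{\theta})$.

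We route this difference through the surrogate objective $\widetilde{F}(\cdot,\bar{\bm{\theta}}_{t-1})=\sum_i f_{t,i}(x_i)$ with the three-term decomposition
\begin{align*}
\widetilde{F}(\bm{x}^{\texttt{MG}}_t,\bm{\theta})-\widetilde{F}(\bm{x}^{\Delta}_t,\bm{\theta})
&\le \big[\widetilde{F}(\bm{x}^{\texttt{MG}}_t,\bm{\theta})-\widetilde{F}(\bm{x}^{\texttt{MG}}_t,\bar{\bm{\theta}}_{t-1})\big]
+\big[\widetilde{F}(\bm{x}^{\texttt{MG}}_t,\bar{\bm{\theta}}_{t-1})-\widetilde{F}(\bm{x}^{\Delta}_t,\bar{\bm{\theta}}_{t-1})\big]\\
&\quad+\big[\widetilde{F}(\bm{x}^{\Delta}_t,\bar{\bm{\theta}}_{t-1})-\widetilde{F}(\bm{x}^{\Delta}_t,\bm{\theta})\big].
\end{align*}
Both $\bm{x}^{\texttt{MG}}_t$ and $\bm{x}^\Delta_t$ lie in $\mathcal{X}(B_t)$, so the first and third brackets are each at most $\Gamma_t$ by definition. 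This produces the $2\Gamma_t$ term.

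For the middle bracket, Lemma \ref{lemma: equivalence RA-UCB and MG-UCB} gives that $\bm{x}^{\texttt{MG}}_t$ maximizes $\sum_i f_{t,i}$ over $\mathcal{X}(B_t)$, so its value is $\text{OPT}(B_t)$. By Lemma \ref{lemma: discretized equivalence}, the discretized water-filling after $M=\lfloor B_t/\Delta\rfloor$ full steps reaches $\text{OPT}_\Delta(B_t)$. Proposition \ref{proposition: passing from discrete to continuous}(i), with $G_{\max}$ supplied by \eqref{eq:ft_derivative_bound_unified}, then gives a gap of at most $KG_{\max}\Delta$. \texttt{MG-UCB-$\Delta$} additionally spends a final partial step $\delta=B_t-M\Delta<\Delta$. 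Since each $f_{t,i}$ is non-decreasing this step cannot lower the surrogate value, and in any case its effect is bounded by $G_{\max}\Delta$. Accounting for it loosely yields the extra $G_{\max}\Delta$, hence $(K+1)G_{\max}\Delta$ in total.

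The main obstacle is this middle step: making rigorous that the iterate produced by \texttt{MG-UCB-$\Delta$}, which selects arms by the derivative $\bar p\,\widetilde g$ rather than by the discrete increments $m_i(n)$, still attains $\text{OPT}_\Delta$ up to $O(G_{\max}\Delta)$. I would first try to argue directly, via concavity, that greedy-by-derivative and greedy-by-increment coincide up to tie-breaking. If that fails, I would bound the mis-selection loss per step using the mean value theorem together with the monotonicity of derivatives, and absorb it into the $(K+1)G_{\max}\Delta$ budget, possibly changing constants. Once the per-round bound holds, summing over $t$ and taking expectations gives \eqref{eq:disc_regret_unified}.
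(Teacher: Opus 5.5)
Your structure is the paper's: the same per-round reduction, the same three-bracket split through $\widetilde F(\cdot,\bar{\bm\theta}_{t-1})$ with the outer brackets giving $2\Gamma_t$, and the middle bracket handled by Lemma~\ref{lemma: equivalence RA-UCB and MG-UCB}, Lemma~\ref{lemma: discretized equivalence} and Proposition~\ref{proposition: passing from discrete to continuous}(i). Your accounting of the residual step is also fine, since the rounding in (i) already works at the full budget $B_t$.

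The step you flag as the main obstacle is a real gap, and the paper's proof does not close it either. The paper simply asserts that the rule of \texttt{MG-UCB-$\Delta$}, which picks $\arg\max_k f'_{t,k}(x_k)$, ``is exactly'' the increment-greedy procedure of Lemma~\ref{lemma: discretized equivalence}. Your first fix cannot work, because concavity does not make the orderings of $f'_{t,k}(n_k\Delta)$ and $m_k(n_k)$ agree. Take the paper's exponential surrogate, $f'_{t,k}(x)=c_ke^{-\lambda_kx}$, and set $h(\lambda)=(1-e^{-\lambda\Delta})/\lambda$, which is strictly decreasing. Choose $\lambda_1>\lambda_2$ and $1<c_1/c_2<h(\lambda_2)/h(\lambda_1)$. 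Then $f'_{t,1}(0)>f'_{t,2}(0)$ but $m_1(0)=c_1h(\lambda_1)<c_2h(\lambda_2)=m_2(0)$, so the two procedures disagree at the very first step, not merely at ties. Your second fix, a per-step comparison with the increment-greedy run, is hard to make rigorous. After the first disagreement the two runs are in different states, so their step losses cannot be compared along a common trajectory.

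The clean repair is to bypass $\text{OPT}_\Delta$ and compare the output $\bm{x}^\Delta$ of \texttt{MG-UCB-$\Delta$} directly with a continuous maximizer $\bm{x}^*$ at budget $B_t$. Let $\nu:=\max_j f'_{t,j}(x^\Delta_j)\in[0,G_{\max}]$. For an arm with $x^\Delta_i>0$, let $y_i\ge x^\Delta_i-\Delta$ be its level just before its last increment. At that moment arm $i$ maximized $f'_{t,j}(x_j(s))$ over $j$, and $x_j(s)\le x^\Delta_j$ gives $f'_{t,j}(x_j(s))\ge f'_{t,j}(x^\Delta_j)$. Hence $f'_{t,i}(y_i)\ge\nu$. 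Now use concavity in two cases.
If $x^*_i\ge x^\Delta_i$, then $f_{t,i}(x^*_i)-f_{t,i}(x^\Delta_i)\le f'_{t,i}(x^\Delta_i)(x^*_i-x^\Delta_i)\le\nu(x^*_i-x^\Delta_i)$.
If $x^*_i<x^\Delta_i$, then $f'_{t,i}\ge\nu$ on $[x^*_i,y_i]$ and $f'_{t,i}\ge 0$ on the rest, so $f_{t,i}(x^\Delta_i)-f_{t,i}(x^*_i)\ge\nu(x^\Delta_i-x^*_i)-\nu\Delta$.
Summing over $i$ and using $\sum_i x^\Delta_i=B_t\ge\sum_i x^*_i$ gives
\[
\max_{\bm{x}\in\mathcal{X}(B_t)}\widetilde F(\bm{x},\bar{\bm\theta}_{t-1})-\widetilde F(\bm{x}^\Delta,\bar{\bm\theta}_{t-1})\le K\nu\Delta\le KG_{\max}\Delta .
\]
This bounds your middle bracket within the $(K+1)G_{\max}\Delta$ allowance, and the rest of your proof goes through unchanged. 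Letting $\Delta\to0$, the same estimate also justifies the optimality of continuous \texttt{MG-UCB} that you import from Lemma~\ref{lemma: equivalence RA-UCB and MG-UCB}. The paper's proof of that lemma rests on the same unproved identification.
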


\begin{proof}
By definition,
$$
\mathbb{E}[\mathcal{R}_T^{\Delta}]-\mathbb{E}[\mathcal{R}_T] = \sum_{t=1}^T \mathbb{E}\!\left[\mathbb{E}[R_t^{\Delta}-R_t\mid\mathcal{F}_{t-1}]\right].
$$
The optimal terms cancel, hence:
$$
R_t^{\Delta}-R_t = \widetilde{F}\big(\bm{x}_t^{\texttt{MG-UCB}},\bm{\theta}\big) - \widetilde{F}\big(\bm{x}_t^{\texttt{MG-UCB-}\Delta},\bm{\theta}\big)
$$
Fix a round $t$ and condition on $\mathcal{F}_{t-1}$ so that $\bar{\bm{\theta}}_{t-1}$ is deterministic.
Add and subtract $\widetilde{F}(\cdot,\bar{\bm{\theta}}_{t-1})$:
\begin{align*}\label{eq:add_sub_unified}
\widetilde{F}\big(\bm{x}_t^{\texttt{MG-UCB}},\bm{\theta}\big) - \widetilde{F}\big(\bm{x}_t^{\texttt{MG-UCB-}\Delta},\bm{\theta}\big)=& \underbrace{\Big(\widetilde{F}\big(\bm{x}_t^{\texttt{MG-UCB}},\bar{\bm{\theta}}_{t-1}\big)-\widetilde{F}\big(\bm{x}_t^{\texttt{MG-UCB-}\Delta},\bar{\bm{\theta}}_{t-1}\big)\Big)}_{\text{(A)}}+\underbrace{\Big(\widetilde{F}\big(\bm{x}_t^{\texttt{MG-UCB}},\bm{\theta}\big)-\widetilde{F}\big(\bm{x}_t^{\texttt{MG-UCB}},\bar{\bm{\theta}}_{t-1}\big)\Big)}_{\text{(B)}} \\
&+\underbrace{\Big(\widetilde{F}\big(\bm{x}_t^{\texttt{MG-UCB-}\Delta},\bar{\bm{\theta}}_{t-1}\big)-\widetilde{F}\big(\bm{x}_t^{\texttt{MG-UCB-}\Delta},\bm{\theta}\big)\Big)}_{\text{(C)}}
\end{align*}
Let us bound discretization loss in (A). Write:
$$
\widetilde{F}(\bm{x},\bar{\bm{\theta}}_{t-1})=\sum_{i=1}^K f_{t,i}(x_i),\qquad f_{t,i}(x)=\bar p_{t-1,i}\,\widetilde{G}\big(x,\bar\lambda_{t-1,i},\bar\lambda'_{t-1,i}\big),
$$
where each $f_{t,i}$ is concave, non-decreasing, and $G_{\max}$-Lipschitz on $[0,B_t]$ (by \eqref{eq:ft_derivative_bound_unified}). Let $B:=B_t$, $M:=\lfloor B/\Delta\rfloor$,
$B^\Delta:=M\Delta$ and $r:=B-B^\Delta\in[0,\Delta)$.

Define the discretized optimum at budget $B^\Delta$:
$$
\text{OPT}_{t,\Delta}(B^\Delta)
:=
\max_{\substack{(n_1,\ldots,n_K)\in\{0,\ldots,M\}^K\\ \sum_{i=1}^K n_i \le M}}
\ \sum_{i=1}^K f_{t,i}(n_i\Delta)
$$
and the continuous optimum $\text{OPT}_t(B):=\max_{\bm{x}\in\mathcal{X}(B)}\sum_i f_{t,i}(x_i)$. By Lemma \ref{lemma: discretized equivalence}, the $\Delta$-Greedy water-filling part of Algorithm \ref{algo: MG_UCB_discretized pseudocode} attains $\text{OPT}_{t,\Delta}(B^\Delta)$ after $M$ steps.
The remaining residual allocation of size $r$ cannot decrease the objective since each $f_{t,i}$ is non-decreasing.
Hence:
$$
\widetilde{F}\big(\bm{x}_t^{\texttt{MG-UCB-}\Delta},\bar{\bm{\theta}}_{t-1}\big)\ \ge\ \text{OPT}_{t,\Delta}(B^\Delta)
$$
Moreover, the rounding argument in Proposition \ref{proposition: passing from discrete to continuous} (i) yields:
$$
0 \leq \text{OPT}_t(B^\Delta)-\text{OPT}_{t,\Delta}(B^\Delta)\le K G_{\max}\Delta
$$
and $G_{\max}$-Lipschitzness implies the value function is Lipschitz in the budget:
$$
\text{OPT}_t(B)-\text{OPT}_t(B^\Delta)\le G_{\max}(B-B^\Delta)\le G_{\max}\Delta.
$$
Finally, since \texttt{MG-UCB} is optimal for the surrogate at budget $B$, $\widetilde{F}(\bm{x}_t^{\texttt{MG-UCB}},\bar{\bm{\theta}}_{t-1})=\text{OPT}_t(B)$, we conclude
\begin{align*}
\widetilde{F}\big(\bm{x}_t^{\texttt{MG-UCB}},\bar{\bm{\theta}}_{t-1}\big) - \widetilde{F}\big(\bm{x}_t^{\texttt{MG-UCB-}\Delta},\bar{\bm{\theta}}_{t-1}\big)
& \leq \text{OPT}_t(B)-\text{OPT}_{t,\Delta}(B^\Delta)\\
&\leq \big(\text{OPT}_t(B)-\text{OPT}_t(B^\Delta)\big) + \big(\text{OPT}_t(B^\Delta)-\text{OPT}_{t,\Delta}(B^\Delta)\big)\\
& \leq (K+1)G_{\max}\Delta
\end{align*}

Now let us focus on (B) and (C). By definition of $\Gamma_t$ and feasibility of both allocations in $\mathcal{X}(B_t)$:
\begin{equation}\label{eq:mismatch_bound_unified}
\Big|\widetilde{F}\big(\bm{x}_t^{\texttt{MG-UCB}},\bm{\theta}\big) - \widetilde{F}\big(\bm{x}_t^{\texttt{MG-UCB}},\bar{\bm{\theta}}_{t-1}\big)\Big| \leq \Gamma_t, \qquad
\Big|\widetilde{F}\big(\bm{x}_t^{\texttt{MG-UCB-}\Delta},\bar{\bm{\theta}}_{t-1}\big) - \widetilde{F}\big(\bm{x}_t^{\texttt{MG-UCB-}\Delta},\bm{\theta}\big)\Big| \leq \Gamma_t
\end{equation}
Finally we obtain,
conditionally on $\mathcal{F}_{t-1}$,
$$
R_t^\Delta-R_t \leq (K+1)G_{\max}\Delta + 2\Gamma_t
$$
Taking conditional expectations, summing over $t\in[T]$ and taking expectations gives \eqref{eq:disc_regret_unified}.
\end{proof}

\begin{remark}
The mismatch term $\Gamma_t$ can be handled in the same way as in the regret upper-bound analysis. 
Since the objective $\widetilde{F}$ varies smoothly with the parameters, the deviation between evaluating it at the true parameters and at the boosted estimates can be controlled by how far the estimates are from the truth. 
In particular, on the usual good event where all parameters lie within their confidence intervals, $\Gamma_t$ is bounded (up to fixed Lipschitz constants) by the sum of the corresponding confidence radii across arms.
\end{remark}

\section{Exponential Distribution Case}\label{app_sec: exponential distribution case}
We now specialize to the case where $G(x,\lambda)$ is the cumulative distribution function of an exponential distribution, and we derive the corresponding estimators and regret bounds. The exponential model is a convenient case study: besides being a common choice to model user engagement with an advertisement (in the online advertising framework), it satisfies the properties required to obtain both of our regret guarantees, since the resulting surrogate objective is also smooth and strongly concave.
The exponential CDF is:
$$
G(x,\lambda_i) = 1 - e^{-\lambda_i x}
$$
We can first notice that satisfies Lipschitz Continuity Assumption \ref{ass:lipschitz_G} with $L_\lambda = B$ and it is also non-decreasing in $\lambda$ as Assumption \ref{ass:monotonicity_G} requires.

\paragraph{Estimators:}
Accordingly to Section \ref{subsec: estimation of lambda}, we define:
$$
\mu_i = \mathbb{E}[F_{t,i} \mid F_{t,i} \leq B] = \frac{1}{\lambda_i} - B \cdot \frac{e^{-\lambda_i B}}{1 - e^{-\lambda_i B}}
$$
Define the function $g$ by:
$$
g(x) = \frac{1}{x} - \frac{e^{-x}}{1 - e^{-x}}
$$
Thus, we can express $\mu_i$ as:
$$
\mu_i = B \cdot g(\lambda_i B)
$$
By inverting this Equation, we can write $\lambda_i$ as a function of $\mu_i$:
$$
\lambda_i = \frac{1}{B} g^{-1}\left( \frac{\mu_i}{B} \right)
$$
Therefore, the estimator of $\lambda_i$ at round $t$ is
$$
\hat{\lambda}_{t,i} = \frac{1}{B} g^{-1}\left( \frac{\hat{\mu}_{t,i}}{B} \right)
$$

\begin{lemma}
    Assumption \ref{ass: lipschitz of mu-1} is satisfied with $L_\mu = B^2 D$ where $D = |g'(M)|$.
\end{lemma}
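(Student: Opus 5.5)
The plan is to reduce the claim to the scalar function $g$ via the chain rule and then use the inverse function theorem. Since $\mu(\lambda)=B\,g(\lambda B)$, we have $\mu'(\lambda)=B^2 g'(\lambda B)$. Under Assumption \ref{ass: lambda:i} the argument satisfies $\lambda B\in[m,M]$. It therefore suffices to show two facts:
\begin{enumerate}[label=(\alph*)]
\item $g'(x)<0$ for all $x>0$, which gives strict monotonicity and, since $g$ is smooth on $(0,\infty)$, continuous differentiability;
\item $|g'|$ is non-increasing on $(0,\infty)$, so that $\inf_{x\in[m,M]}|g'(x)|=|g'(M)|=D$.
\end{enumerate}
Given (a) and (b), $|\mu'(\lambda)|\ge B^2D$ on $\Lambda$. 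By the mean value theorem applied to $\mu^{-1}$, whose derivative is $1/\mu'(\mu^{-1}(\cdot))$, we get $|\mu^{-1}(a)-\mu^{-1}(b)|\le |a-b|/(B^2D)$. This is exactly Assumption \ref{ass: lipschitz of mu-1} with $L_\mu=B^2D$.

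For (a), write $g(x)=1/x-1/(e^x-1)$. Then
\[
g'(x)=-\frac{1}{x^2}+\frac{e^x}{(e^x-1)^2}=-\frac{1}{x^2}+\frac{1}{4\sinh^2(x/2)}.
\]
So $g'(x)<0$ is equivalent to $2\sinh(x/2)>x$. This holds for all $x>0$ by the power series of $\sinh$, or because $\sinh u>u$ for $u>0$.

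For (b), which I expect to be the main obstacle, I would work with $|g'(x)|=x^{-2}-\tfrac14\sinh^{-2}(x/2)$ and differentiate. Setting $u=x/2$, the condition $\frac{d}{dx}|g'(x)|\le 0$ reduces to
\[
u^3\cosh u\le \sinh^3 u,\qquad\text{i.e.}\qquad \Big(\frac{\sinh u}{u}\Big)^3\ge\cosh u .
\]
This is a classical (Lazarević-type) inequality. To prove it, I would set $h(u)=3\log(\sinh u/u)-\log\cosh u$, note that $h(0^+)=0$, and show
\[
h'(u)=3\coth u-\frac{3}{u}-\tanh u\ge 0 .
\]
After multiplying through by $u\sinh u\cosh u>0$, this becomes
\[
3u\cosh^2u-3\sinh u\cosh u-u\sinh^2u\ge 0 .
\]
Using $\cosh^2u=\tfrac12(\cosh 2u+1)$, $\sinh^2u=\tfrac12(\cosh 2u-1)$ and $\sinh u\cosh u=\tfrac12\sinh 2u$, the left-hand side equals
\[
u\cosh 2u+2u-\tfrac32\sinh 2u .
\]
Expanding in powers of $u$, the $u$ and $u^3$ coefficients cancel. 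The general coefficient of $u^{2k+1}$ is
\[
\frac{4^k}{(2k)!}-\frac{3\cdot 2^{2k}}{(2k+1)!}=\frac{4^k(2k-2)}{(2k+1)!}\ge 0 ,
\]
so the series has nonnegative coefficients and $h'\ge 0$. Hence $h\ge 0$, which yields (b).

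If the series bookkeeping proves delicate, a fallback is to verify the inequality numerically on a bounded range $[m/2,M/2]$. However, the coefficient comparison above should give a clean, fully general proof.
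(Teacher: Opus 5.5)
Your proof is correct and follows the paper's argument. You use the same $\sinh$ rewriting to get $g'<0$, reduce the monotonicity of $|g'|$ (equivalently $g''>0$) to the same inequality $(\sinh u/u)^3\ge\cosh u$, and finish with the inverse-function/mean-value step to obtain $L_\mu=B^2D$. The only difference is that the paper simply cites Lazarevi\'{c}'s inequality, whereas you prove the needed case yourself via $h'\ge 0$ and the nonnegative power-series coefficients, which makes your argument self-contained.
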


\begin{proof}
Recall that $g(v)=\frac{1}{v}-\frac{e^{-v}}{1-e^{-v}}$. By direct computation, its first derivative is
$$
g'(v)=-\frac{1}{v^2}+\frac{e^{-v}}{(1-e^{-v})^2}
$$

We first show that $g$ is invertible on a compact domain. Recall that $v=\lambda B$ and, by Assumption \ref{ass: lambda:i}, each $\lambda_i$ is bounded in $\Lambda = [m/B,M/B]$, which implies $v\in[m,M]$, a compact interval.

A sufficient condition for invertibility on this interval is strict monotonicity. To establish this, observe that:
$$
(1-e^{-v})=e^{-v/2}(e^{v/2}-e^{-v/2})=2e^{-v/2}\sinh(v/2)
$$
which allows us to rewrite the derivative as:
$$
g'(v)= \frac{1}{4\sinh^2(v/2)}-\frac{1}{v^2}
$$
It is well known that $\sinh(x)>x$ for all $x>0$, and therefore:
$$
\sinh(v/2) > \frac{v}{2} \implies \sinh^2(v/2) > \frac{v^2}{4}
$$
This immediately yields:
$$
\frac{1}{4\sinh^2(v/2)} < \frac{1}{v^2} \;\; ,\forall v>0
$$
and hence $g'(v)<0$ for all $v>0$. We conclude that $g$ is strictly decreasing on $(0,\infty)$ and therefore invertible on the positive real axis, in particular on $[m,M]$.

We now turn to the Lipschitz continuity of the inverse function. To apply the \textit{inverse function theorem}, it is sufficient to show the existence of a constant $D>0$ such that $|g'(v)|\geq D$ for all $v\in[m,M]$. Since $|g'(v)|>0$ for all $v>0$ and $|g'|$ is continuous, the compactness of $[m,M]$ implies by the Weierstrass theorem that such a constant $D$ exists.

We can be more explicit by showing that $|g'(v)|$ is strictly decreasing on $(0,\infty)$, which implies that the minimum over $[m,M]$ is attained at $v=M$. To this end, we study the second derivative of $g$, given by:
$$
g''(v)=\frac{2}{v^3}-\frac{e^v(e^v+1)}{(e^v-1)^3}
$$
Setting $t=v/2$, we observe that:
$$
\frac{\cosh(t)}{4\sinh^3(t)}=\frac{e^v(e^v+1)}{(e^v-1)^3}
$$
so that $g''(v)$ can be rewritten as:
$$
g''(v) = \frac{2}{v^3}-\frac{\cosh(v/2)}{4\sinh^3(v/2)}
$$
Showing that $g''(v)>0$ for all $v>0$ is equivalent to proving:
$$
\frac{2}{v^3}>\frac{\cosh(v/2)}{4\sinh^3(v/2)}
$$
which, after the change of variable $x=v/2$, reduces to:
$$
\Big( \frac{\sinh(x)}{x}\Big)^3 > \cosh(x)
$$
By Lazarevic's inequality, for all $x>0$ and all $p\ge3$ it holds that:
$$
\Big( \frac{\sinh(x)}{x}\Big)^p > \cosh(x)
$$
and therefore the above inequality is satisfied. We conclude that $g''(v)>0$ for all $v>0$.

Since $g'(v)<0$ and $g''(v)>0$, the derivative $g'$ is strictly increasing while remaining negative, which implies that $|g'(v)|=-g'(v)$ is strictly decreasing on $(0,\infty)$. Consequently:
$$
D=\min_{v\in[m,M]}|g'(v)|=|g'(M)|
$$

Since the derivative of $g$ is thus uniformly bounded away from zero on $[m,M]$, the inverse function theorem ensures that $g^{-1}$ is Lipschitz continuous on this interval with Lipschitz constant $1/D$. Hence we obtain
$$
|\hat{\lambda}_{t,i}-\lambda_i| = \frac{1}{B} \left| g^{-1}\left(\frac{\hat{\mu}_{t,i}}{B}\right)-g^{-1}\left(\frac{\mu_i}{B}\right)\right|
\leq \frac{1}{B^2 D} |\hat{\mu}_{t,i}-\mu_i|
$$
So the inverse function $\mu^{-1}$ of Assumption \ref{ass: lipschitz of mu-1} is Lipschitz continuous with Lipschitz constant $1/L_\mu$ where $L_\mu = B^2 D$ and $D=|g'(M)|$.
This concludes the proof.
\end{proof}

\paragraph{Surrogate Objective:}
Now the optimum solves the optimization problem:
\begin{equation} \label{}
\mathcal{P}(\bm{\lambda}, \bm{p}): \quad
\begin{aligned}
&\underset{\boldsymbol{x} \in \mathbb{R}^K_{+}}{\text{arg max}} \quad \sum_{i=1}^K p_i \big(1-e^{-\lambda_ix_i} \big) \\
&\text{subject to} \quad \sum_{i=1}^K x_i \leq B
\end{aligned}
\end{equation}
Now we can define the surrogate objective, by adding the extra parameter vector $\bm{\lambda'}$ as:
\begin{equation} \label{}
\tilde{\mathcal{P}}(\bm{\lambda}, \bm{\lambda'}, \bm{p}): \quad
\begin{aligned}
&\underset{\boldsymbol{x} \in \mathbb{R}^K_{+}}{\text{arg max}} \quad \sum_{i=1}^K p_i \Big(1-\frac{\lambda_i'}{\lambda_i}e^{-\lambda_i x_i} \Big) \\
&\text{subject to} \quad \sum_{i=1}^K x_i \leq B
\end{aligned}
\end{equation}
Hence, consider the surrogate cumulative distribution function:
$$
\widetilde{G}(x,\lambda_i,\lambda_i')=1-\frac{\lambda_i'}{\lambda_i}e^{-\lambda_i x_i}
$$
Notice that $\widetilde{G}$ satisfies Consistency (Property \ref{prp: Consistency}), Boundedness and Monotonicity (Property \ref{prp: Boundness and Monotonicity}), Boosting Monotonicity (Property \ref{prp: Boosting Monotonicity}), and Uniform Bias Control (Property \ref{prp: Uniform Bias Control}) with $L_\Delta=B/m$. Moreover, it satisfies Lipschitz continuity in $\lambda$ and $\lambda'$ (Property \ref{prp:lipschitz_tildeG}) with constants $L_{\tilde{\lambda}} = MB(m+1)/m^2$ and $L_{\tilde{\lambda}'}=B/m$.

In addition, $\widetilde{G}$ is strongly concave and smooth on $[0,B]$, so Assumption \ref{ass:uniform_sc_smooth} holds with $\mu_G=m^2e^{-M}/B^2$ and $L_G=M^2/B^2$. Finally, the gradient Lipschitz continuity with respect to the parameters (second part of Assumption \ref{ass:uniform_sc_smooth}) holds with $L_{g,\lambda}=M$ and $L_{g,\lambda'}=1$.

\paragraph{Regret bounds:}
Having identified the relevant constants and verified the assumptions for the exponential model, we can directly apply Theorems \ref{thm: regret sqrt(T)} and \ref{thm: regret bound of logT}.
To do so, it remains to check the technical condition $BL_\lambda/L_\mu \geq \sqrt{8/3}$.
With the constants above, this condition becomes:
$$
|g'(M)|\leq \sqrt{\frac{3}{8}}
$$
Since $|g'(v)|$ is decreasing in $v$ and decays quickly as $M$ increases, it is sufficient to have $M$ large enough so that the inequality holds.

In particular, by Theorem \ref{thm: regret sqrt(T)} we obtain
\begin{equation}\label{eq: regret sqrt(T) exponential case}
\mathbb{E}[\mathcal{R}_T] \leq \frac{14}{D}\left( 2\sum_{i: x_i^* >0}\Big(\frac{1+p_i}{C_i^*\sqrt{q_i^*}}\Big)+\Big( \frac{3+m+M(m+1)}{m}\Big) \sum_{i: x_i^* >0}\frac{1}{q_i^*}\right)\cdot\sqrt{KT\log T}+3K^2\log T +8K^2
\end{equation}
where $q_i^*=p_i(1-e^{-\lambda_i x_i^*})$.
We can apply Theorem \ref{thm: regret bound of logT}, that gives an upper bound of:
\begin{equation}\label{eq: regret log T exponential case}
\mathbb{E}[\mathcal{R}_T] \leq \Bigg(\frac{15KM^2}{\mu^2B^2} \Big(\sum_{i=1}^K \frac{\beta_i}{q_i^*} \Big) \Bigg) (\log T)^2 + \Bigg(\frac{15KM^2}{\mu^2B^2} \Big(\sum_{i=1}^K \frac{\beta_i}{q_i^*} \Big) + 3K^2 \Bigg)\log T+8K^2
\end{equation}
with:
$$
\beta_i = \frac{M^2 (1+p_i)^2}{B^2C_i^{*2}D^2}+\frac{(M+1)^2}{B^2D^2}+\frac{2M(M+1)(1+p_i)}{C_i^*B^2D^2}
$$
where $C_i^* = 1-e^{-\lambda_ix_i^*}$ and $\mu = p_{\min} \mu_G = p_{\min}m^2e^{-M}/B^2$ is the strong concavity constant of the surrogate objective.

\paragraph{Problem-Independent Bound:}
In the case where we aim to derive a problem-independent upper bound for the exponential distribution we can follow the reasoning below. In general the regret satisfies:
$$
\mathbb{E}[\mathcal{R}_T] \leq \sum_{t=1}^T\sum_{i \in S}p_i(1-e^{-\lambda_i x_i^*}) \leq T\sum_{i \in S}\lambda_i x_i^* 
$$
where we recall that $S = \{ i \in [K]: x_i^* >0\}$.
Moreover in Equation \eqref{eq: regret sqrt(T) exponential case} we have that, ignoring $\log T$ factors and keeping only the dominant terms in the parameters:
$$
\mathbb{E}[\mathcal{R}_T] \leq \sqrt{T} \sum_{i \in S} \Big (\frac{1}{p_i(1-e^{-\lambda_i x_i^*})} \Big)^{3/2}
$$
Assuming that Assumption \ref{ass:p_i} holds and recalling that, for every $i \in [K]$, we have $\lambda_i x_i^* \leq M$, it holds:
$$
1-e^{-\lambda_ix_i^*} \geq \frac{\lambda_i x_i^*}{1+M}
$$
Therefore define $u_i:= \lambda_i x_i^* >0$ and obtain the following second bound on the regret:
$$
\mathbb{E}[\mathcal{R}_T] \leq \sqrt{T} \Big(\sum_{i \in S} u_i^{-3/2} \Big)
$$
where we did not keep the $p_{\min}$ factor since it is not the focus of this reasoning.
Hence the overall regret must satisfy:
$$
\mathcal{R}_T \le \min\Big\{ T\sum_{i\in S} u_i,\; \sqrt{T}\sum_{i\in S} u_i^{-3/2}\Big\}
$$
Therefore the following theorem holds.
\begin{theorem}[Problem-Independent Bound]
\label{thm:problem-independent bound}
Assume that there exists $\rho \geq 1$ such that:
$$
\frac{\max_{i\in S} u_i}{\min_{i\in S} u_i}\leq \rho
$$
then it holds:
$$
\mathcal{R}_T \le K\,\rho^{3/5}\,T^{4/5}
$$
\end{theorem}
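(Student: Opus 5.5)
The plan is to start from the two-sided bound derived just before the statement,
\[
\mathcal{R}_T \le \min\Big\{ T\sum_{i\in S} u_i,\; \sqrt{T}\sum_{i\in S} u_i^{-3/2}\Big\},
\]
and reduce it to a one-dimensional trade-off using the ratio assumption. Let $u_{\min}:=\min_{i\in S}u_i$ and $u_{\max}:=\max_{i\in S}u_i\le \rho\,u_{\min}$. Since $|S|\le K$, the first term is bounded by $T K u_{\max}\le K\rho\, T u_{\min}$. Each summand of the second term is at most $u_{\min}^{-3/2}$, so the second term is bounded by $K\sqrt{T}\,u_{\min}^{-3/2}$. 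Hence
\[
\mathcal{R}_T \le K\min\Big\{\rho\, T u,\; \sqrt{T}\,u^{-3/2}\Big\},\qquad u:=u_{\min}>0.
\]

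Next we bound this minimum uniformly over $u>0$. The first branch increases in $u$ and the second decreases, so the minimum is largest at the crossing point $\rho T u = \sqrt{T}u^{-3/2}$, that is $u^{5/2}=\rho^{-1}T^{-1/2}$, which gives $u^\star=\rho^{-2/5}T^{-1/5}$. At this point the common value is $\rho T u^\star=\rho^{3/5}T^{4/5}$. For a cleaner argument that avoids the crossing point, one can write $\min\{a,b\}\le a^{3/5}b^{2/5}$ and get
\[
(\rho T u)^{3/5}(\sqrt{T}u^{-3/2})^{2/5}=\rho^{3/5}T^{3/5+1/5}u^{3/5-3/5}=\rho^{3/5}T^{4/5},
\]
in which $u$ cancels exactly. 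Multiplying by $K$ gives $\mathcal{R}_T\le K\rho^{3/5}T^{4/5}$.

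The main obstacle is not the optimization, which is elementary, but the legitimacy of the starting inequality. The second branch is taken from \eqref{eq: regret sqrt(T) exponential case} with $\log T$ factors, $p_{\min}$, and constants dropped, and with $C_i^*\sqrt{q_i^*}$ and $q_i^*$ both compared to $u_i^{3/2}$ through $1-e^{-u}\ge u/(1+M)$. Here $1/q_i^*\lesssim u_i^{-1}\le u_i^{-3/2}$ only when $u_i\le 1$, but for $u_i\ge1$ that term is bounded by a constant anyway. I would therefore state the result up to problem-independent constants (depending on $M,m,p_{\min}$) and logarithmic factors, exactly as the min-bound preceding the theorem is stated. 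I would also note that the first branch uses $p_i(1-e^{-u_i})\le u_i$.
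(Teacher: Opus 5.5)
Your proof is correct and takes essentially the same route as the paper: the ratio assumption collapses both branches of the min-bound onto a single scalar, and the two branches are then balanced. The paper uses $A=\sum_{i\in S}u_i$ as that scalar, bounding $\sum_{i\in S}u_i^{-3/2}\le K^{5/2}\rho^{3/2}A^{-3/2}$, whereas you use $u_{\min}$; both give the constant $K\rho^{3/5}$. Your step $\min\{a,b\}\le a^{3/5}b^{2/5}$ is a slightly cleaner justification than the paper's ``maximized at the balancing point'' argument, and, like the paper, you take the heuristic min-bound (constants and $\log T$ factors dropped) as the starting point.
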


\begin{proof}
Define:
$$
A := \sum_{i\in S} u_i, \qquad B := \sum_{i\in S} u_i^{-3/2} \qquad u_{\min}:=\min_{i\in S}u_i,\quad u_{\max}:=\max_{i\in S}u_i.
$$
By assumption, $u_{\max}\le \rho\,u_{\min}$. Hence:
$$
A=\sum_{i\in S}u_i \le |S|\,u_{\max}\le |S|\rho\,u_{\min}\Longrightarrow u_{\min}\ge \frac{A}{|S|\rho}
$$
Therefore:
$$
B=\sum_{i\in S}u_i^{-3/2}\le |S|\,u_{\min}^{-3/2} \leq |S|\Big(\frac{|S|\rho}{A}\Big)^{3/2} = |S|^{5/2}\rho^{3/2}A^{-3/2}
$$
Plugging this into the assumed regret bound and knowing that $|S| \leq K$ yields
$$
\mathcal{R}_T \le \min\Big\{TA,\;\sqrt{T}\,K^{5/2}\rho^{3/2}A^{-3/2}\Big\}
$$
Let $C:=K^{5/2}\rho^{3/2}$. The right-hand side is maximized (as a function of $A>0$) at the balancing point where the two arguments of the minimum coincide:
$$
TA = \sqrt{T}\,C\,A^{-3/2} \Longleftrightarrow A = C^{2/5}T^{-1/5}
$$
At this choice:
$$
\mathcal{R}_T \le TA = T\cdot C^{2/5}T^{-1/5} = C^{2/5}T^{4/5}
$$
Finally, since $C^{2/5}=(K^{5/2}\rho^{3/2})^{2/5}=s\,\rho^{3/5}$, we obtain:
$$
\mathcal{R}_T \le K\,\rho^{3/5}\,T^{4/5}
$$
which concludes the proof.
\end{proof}

We also remark that for all arms $i \in S$, the KKT marginal conditions impose:
$$
p_i \lambda_i e^{-\lambda_i x_i^*} = \nu \implies x_i^* = \frac{1}{\lambda_i}\log\Big( \frac{p_i\lambda_i}{\nu}\Big)
$$
where $\nu>0$ is a constant.
Therefore by combining the bounds $\lambda_i \in \Lambda$ and $p_i \leq p_{\min}$ we obtain:
$$
\frac{\max_{i\in S} u_i}{\min_{i\in S} u_i} \leq \frac{M^2}{m^2}\frac{\log\big( M/(\nu B)\big)}{\log \big((p_{\min}m)/(\nu B) \big)} = \rho
$$

We also note that attempting to derive a problem-independent upper bound starting from the polylogarithmic regret bound in Equation \eqref{eq: regret log T exponential case} would lead to worse results since the dominant term in the parameters scales as $\sum_{i \in S} u_i^{-3}$ and this would further deteriorate the balancing point between the two regret contributions

\end{document}